\documentclass{article}


\PassOptionsToPackage{numbers}{natbib}
\usepackage[final]{neurips_2024}








\usepackage{listings}
\usepackage[dvipsnames]{xcolor}
\usepackage{url}            
\usepackage{amsfonts}       
\usepackage{amsmath}
\usepackage{color, colortbl}
\usepackage{amssymb}
\usepackage{amsthm}
\usepackage{comment}
\usepackage{enumerate}
\usepackage{enumitem}
\usepackage{caption}
\usepackage{wrapfig}
\usepackage{subcaption}
\usepackage[linesnumbered,ruled,vlined]{algorithm2e}
\usepackage{forloop}
\usepackage{footnote}
\usepackage{multirow}
\usepackage{makecell}

\usepackage{booktabs}       
\usepackage{algorithm,algorithmic,refcount}
\usepackage[pagebackref,breaklinks,colorlinks,citecolor=cvprblue]{hyperref}
\usepackage{soul}
\usepackage{bm}
\usepackage{tikz}
\usepackage{nicematrix}
\newcommand{\midsepremove}{\aboverulesep = 0mm \belowrulesep = 0.38mm}
\newcommand{\midsepdefault}{\aboverulesep = 0.605mm \belowrulesep = 0.984mm}

\captionsetup{font={normalsize, stretch=1.0}}

\newtheorem{theorem}{Theorem}

\newtheorem{proposition}{Proposition}
\newtheorem{remark}{Remark}
\theoremstyle{definition}

\newtheorem{assumption}{Assumption}

\newcommand{\veryshortarrow}[1][5pt]{\mathrel{%
   \hbox{\rule[\dimexpr\fontdimen22\textfont2-.2pt\relax]{#1}{.4pt}}%
   \mkern-4mu\hbox{\usefont{U}{lasy}{m}{n}\symbol{41}}}}

\makeatletter

\setbox0\hbox{$\xdef\scriptratio{\strip@pt\dimexpr
    \numexpr(\sf@size*65536)/\f@size sp}$}

\newcommand{\scriptveryshortarrow}[1][3pt]{{%
    \hbox{\rule[\scriptratio\dimexpr\fontdimen22\textfont2-.2pt\relax]
               {\scriptratio\dimexpr#1\relax}{\scriptratio\dimexpr.4pt\relax}}%
   \mkern-4mu\hbox{\let\f@size\sf@size\usefont{U}{lasy}{m}{n}\symbol{41}}}}

\makeatother

\definecolor{Gray}{gray}{0.9}
\definecolor{mylavender}{rgb}{0.90, 0.90, 0.98}

\definecolor{cvprblue}{rgb}{0.21,0.49,0.74}


\definecolor{lightcyan}{RGB}{224,255,255}
\definecolor{White}{gray}{1.0}
\definecolor{DarkRed} {RGB}{219, 112, 112}
\definecolor{DarkBlue}{RGB}{112, 112, 219}
\definecolor{DarkGray}{RGB}{112, 112, 112}
\definecolor{PaleRed} {RGB}{255, 236, 236}
\definecolor{PaleBlue}{RGB}{236, 236, 255}
\definecolor{PaleGray}{RGB}{236, 236, 236}
\definecolor{cream}{RGB}{255,253,208}

\newcommand*\circled[1]{\tikz[baseline=(char.base)]{
            \node[shape=circle,draw,fill=cream,inner sep=0.2pt,line width=0.1mm,] (char) {#1};}}

\definecolor{deepblue}{rgb}{0,0,0.5}
\definecolor{deepred}{rgb}{0.6,0,0}
\definecolor{deepgreen}{rgb}{0,0.5,0}
\usepackage{parcolumns}
\lstset{basicstyle=\footnotesize\ttfamily, columns=fullflexible, language=Python, numbers=left, firstnumber=1, numberfirstline=true}

\newcommand\blfootnote[1]{%
  \begingroup
  \renewcommand\thefootnote{}\footnote{#1}%
  \addtocounter{footnote}{-1}%
  \endgroup
}

\usepackage[mathscr]{euscript}
\usepackage{accents}

\DeclareSymbolFont{rsfs}{U}{rsfs}{m}{n}
\DeclareSymbolFontAlphabet{\mathscrsfs}{rsfs}


\newcommand{\E}{\mathbb{E}}


\newcommand{\C}{\mathbb{C}}
\newcommand{\D}{\mathbb{D}}

\newcommand{\M}{\mathbb{M}}



\renewcommand{\l}{\vert}




\def\be{{\boldsymbol e}}

\def\bx{{\boldsymbol x}}

\def\btau{{\boldsymbol \tau}}

\def\btheta{{\boldsymbol \theta}}

\def\bTheta{{\boldsymbol \Theta}}




\def\cT{{\mathcal T}}

\def\cF{{\mathcal F}}

\def\cS{{\mathcal S}}

\def\cT{{\mathcal T}}


\usepackage{color}



\newcommand{\argmin}[1]{\underset{#1}{\mathrm{argmin}} \ }





\title{Towards Diverse Device Heterogeneous Federated Learning via Task Arithmetic Knowledge Integration}
\author{Mahdi Morafah$^{*1}$, Vyacheslav Kungurtsev$^{2}$, Hojin Chang$^{1}$, Chen Chen$^{3}$, Bill Lin$^{1}$
\\\\
$^{1}$University of California San Diego (UCSD), $^{2}$Czech Technical University in Prague, \\
$^{3}$University of Central Florida (UCF) \\
$^{*}$Correspondence: \tt mmorafah@ucsd.edu
}


\begin{document}

\maketitle
\vspace{-0.45in}

\blfootnote{Project Website:~\url{https://mmorafah.github.io/takflpage}.}
\begin{abstract}
\vspace{-0.15in}
    Federated Learning (FL) has emerged as a promising paradigm for collaborative machine learning, while preserving user data privacy. Despite its potential, standard FL algorithms lack support for diverse heterogeneous device prototypes, which vary significantly in model and dataset sizes---from small IoT devices to large workstations. 
    This limitation is only partially addressed by existing knowledge distillation (KD) techniques, which often fail to transfer knowledge effectively across a broad spectrum of device prototypes with varied capabilities. This failure primarily stems from two issues: the dilution of informative logits from more capable devices by those from less capable ones, and the use of a single integrated logits as the distillation target across all devices, which neglects their individual learning capacities and and the unique contributions of each device.
    To address these challenges, we introduce TAKFL, a novel KD-based framework that treats the knowledge transfer from each device prototype's ensemble as a separate task, independently distilling each to preserve its unique contributions and avoid dilution. TAKFL also incorporates a KD-based self-regularization technique to mitigate the issues related to the noisy and unsupervised ensemble distillation process. 
    To integrate the separately distilled knowledge, we introduce an adaptive \emph{task arithmetic} knowledge integration process, allowing each student model to customize the knowledge integration for optimal performance. Additionally, we present theoretical results demonstrating the effectiveness of task arithmetic in transferring knowledge across heterogeneous device prototypes with varying capacities.
    Comprehensive evaluations of our method across both computer vision (CV) and natural language processing (NLP) tasks demonstrate that TAKFL achieves state-of-the-art results in a variety of datasets and settings, significantly outperforming existing KD-based methods. Our code is released at~\url{https://github.com/MMorafah/TAKFL}.
\end{abstract}

\vspace{-0.25in}
\section{Introduction}
\vspace{-0.11in}
Federated Learning (FL) has rapidly gained traction as a promising approach to train machine learning models collaboratively across multiple devices, while preserving the privacy of user data. Standard federated learning methods, such as FedAvg~\cite{mcmahan2017communication}, however, are primarily designed for unrealistic~\emph{device-homogeneous} scenarios, where all devices are assumed to have identical compute resource and can train the same neural network architecture~\cite{li2020federated, mcmahan2017communication, wang2020tackling, karimireddy2020scaffold, li2021model, wang2020federated, liu2022deep}. Therefore, standard FL cannot support the participation of \emph{heterogeneous devices}, all of which could significantly contribute to model training due to their unique and invaluable local datasets. 
To address this gap, knowledge distillation (KD) techniques have emerged as a promising approach to establish federation among heterogeneous device prototypes and facilitate knowledge transfer between them. In this approach, locally updated client models from different device prototypes, collectively termed as ensembles, serve as teachers to distill their knowledge into each device prototype's server student model using an unlabeled public dataset. 

Despite their success, however, existing KD-based methods for device heterogeneous FL are primarily designed for scenarios where device prototypes are in the same-size with similar capabilities, i.e. same model and dataset sizes.
However, in practice, \emph{device capabilities vary widely}, ranging from small devices like IoTs with small models and small datasets to large devices like workstations with large models and large datasets. \ul{This diversity, often overlooked in the existing literature, results in device prototypes with varying strengths and information qualities.} Unfortunately, existing methods struggle to establish effective knowledge transfer in these challenging, real-world device heterogeneous settings, primarily due to two reasons: 
\circled{1} Existing methods often disregard the individual strengths and information quality of each device prototype's ensembles and integrate their logits into a single distillation target. \emph{This approach dilutes the richer, more informative logits from larger, more capable devices with less informative logits from smaller, less capable ones.}
\circled{2} Additionally, these methods employ this single integrated distillation target to transfer knowledge across all different size student models. \emph{This one-size-fits-all approach fails to provide customized knowledge integration based on the unique learning capacities of each student and the specific helpfulness of each device prototype’s ensembles.}

Moreover, the heterogeneous ensemble distillation process can inadvertently lead student models into erroneous learning directions, causing them to forget their self-knowledge acquired through averaged locally updated parameters. This issue arises primarily due to two reasons: 
\circled{1} The distillation process introduces noise, \emph{as the ensembles' logits are inferred on an unfamiliar public dataset}, distinct from their original training data. Additionally, the presence of data heterogeneity and the insufficient training of some ensembles, due to computational constraints, can further exacerbate this noise. \circled{2} The distillation process \emph{lacks supervision from the actual private datasets, which are the ultimate learning objectives.}
Consequently, these factors, combined with the limitations outlined earlier, result in \emph{suboptimal knowledge transfer} in device heterogeneous settings. This underscores the urgent need for a more effective knowledge transfer framework.

In this paper, we introduce TAKFL, a novel \emph{``\textbf{T}ask \textbf{A}rithmetic \textbf{K}nowledge Transfer \textbf{I}ntegration for \textbf{F}ederated \textbf{L}earning''} framework, designed to overcome the fundamental limitations in the existing methods and improve knowledge transfer in scenarios where device prototypes vary in size---both model and dataset---and consequently, in strength. TAKFL treats knowledge transfer from each device prototype's ensembles as separate tasks, distilling them independently to ensure that each prototype's unique contributions are accurately distilled without interference.  To tackle the challenges associated with noisy and unsupervised ensemble distillation, we incorporate a KD-based self-regularization technique into this individual knowledge transfer process. Subsequently, to selectively integrate the separately distilled knowledge from heterogeneous prototypes' ensembles, we introduce an adaptive \emph{task arithmetic} knowledge integration method by extending the notion of task vectors from centralized learning to federated learning. Our approach enables the student model to strategically customize the knowledge integration process based on the quality of knowledge from each prototype's ensembles and its intrinsic capacity, aiming to achieve optimal performance. 
We present theoretical results, grounded on the established theoretical learning properties of overparametrized neural networks, that conceptualize knowledge distillation as the allocation of device prototypes' capacities to accurately fit the chosen logits. These results demonstrate the advantages of employing task arithmetic for knowledge transfer in terms of overall accuracy, coverage, and efficiency, as well as the adaptive knowledge integration based on the capacity of the student prototype. Furthermore, we comprehensively evaluate our method across both computer vision (CV) and natural language processing (NLP) tasks, utilizing various datasets and architectures, and demonstrate that TAKFL consistently achieves state-of-the-art (SOTA) performance.

The contribution of our paper is as follows:
\begin{enumerate}[noitemsep,leftmargin=*]
\vspace{-0.12in}
    \item We formalize and review the important considerations of the problem statement of federated learning with heterogeneous device prototypes.
    \item We introduce TAKFL, a novel KD-based method designed to overcome the fundamental limitations of existing approaches, effectively facilitating knowledge transfer across diverse heterogeneous device prototypes with varying capabilities.
    \item We present the first theoretical model for device heterogeneous KD, and demonstrate the effectiveness and efficiency of TAKFL compared to the standard alternatives that do not adapt to the student's self-knowledge quality and available learning capacity.
    \item Our comprehensive experimental evaluations on both CV and NLP tasks, spanning various datasets and architectures, reveal that TAKFL consistently achieves SOTA performance, outperforming existing KD-based methods.
\end{enumerate}

\begin{figure}
    \centering
    \begin{subfigure}{0.30\textwidth}
        \centering
        \includegraphics[width=\linewidth, height=\linewidth]{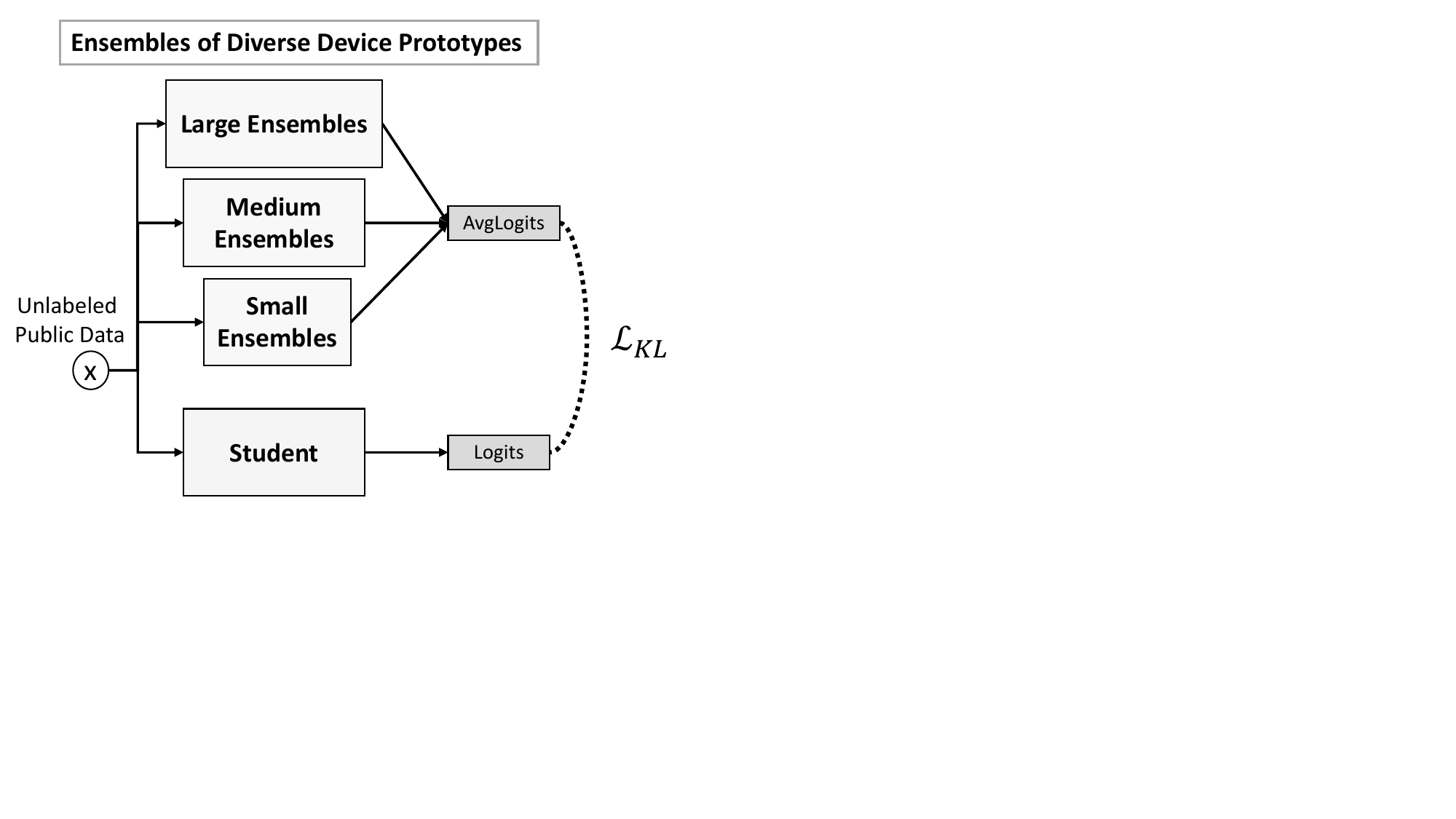}
        \caption{Vanilla Ensemble Distillation}
        \label{fig:sub1}
    \end{subfigure}
    \hspace{0.001\textwidth}
    \begin{subfigure}{0.66\textwidth}
        \centering
        \includegraphics[width=\linewidth, height=0.5\textwidth]{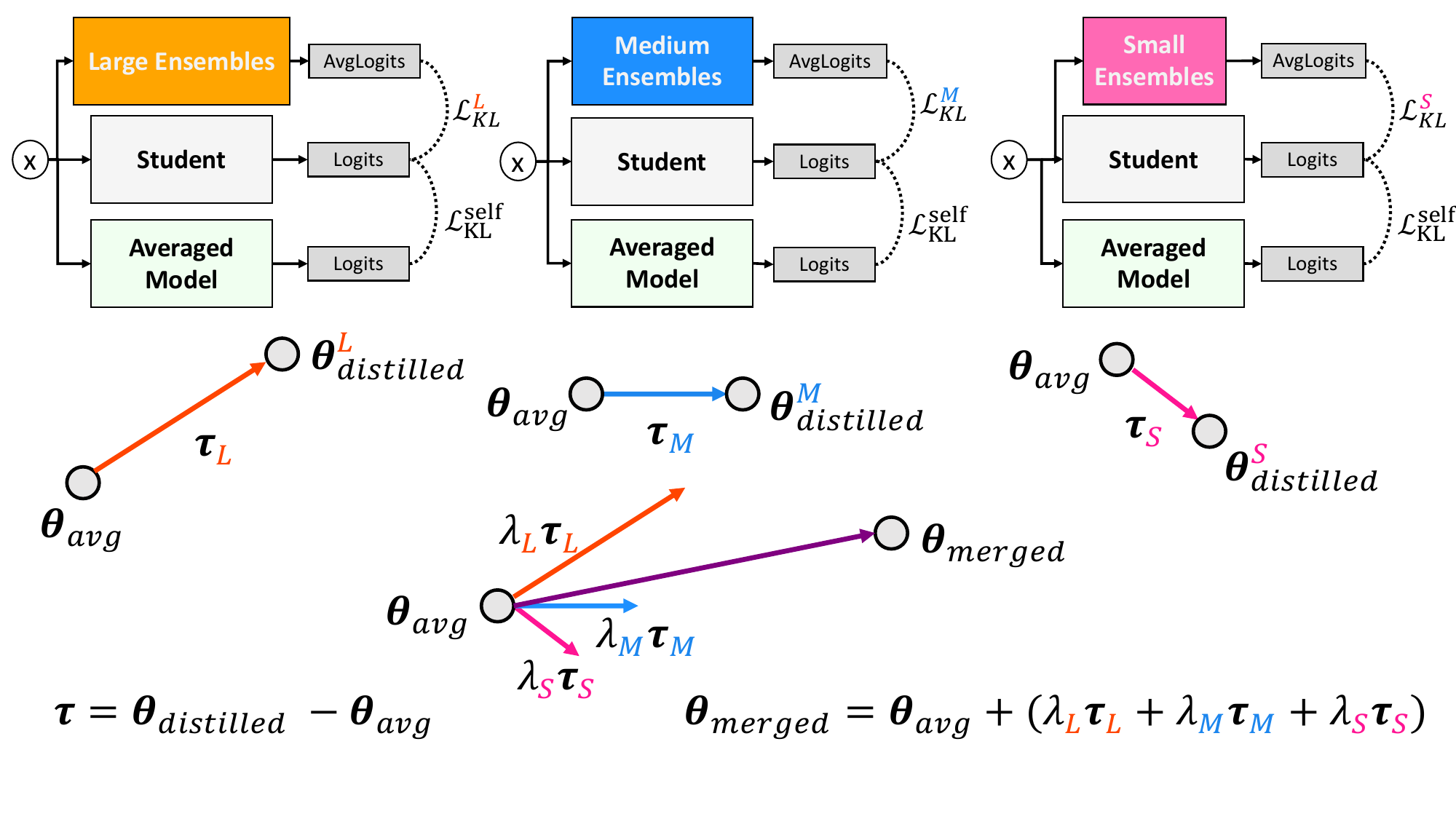}
        \caption{Overview of TAKFL}
        \label{fig:overview-takfl}
    \end{subfigure}
    \vspace{-0.05in}
    \caption{\textbf{Overview of our approach and its distinction from prior works.} (a) This figure illustrates the vanilla ensemble distillation process, where logits from ensembles of various sizes are averaged and used as the distillation target across all prototypes. This approach leads to the dilution of information and suboptimal knowledge transfer (refer to Sections~\ref{sec:main-theory} and~\ref{sec:main-exp} for details). (b) This figure depicts our approach, TAKFL, which treats knowledge transfer from each prototype's ensemble as a separate task and distills them independently. Additionally, a KD-based self-regularization technique is introduced to mitigate issues related to the noisy and unsupervised ensemble distillation. Finally, the heterogeneously distilled knowledge is strategically integrated using an adaptive task arithmetic operation, allowing for customized knowledge integration based on each student prototype's needs.}
    \label{fig:main}
    \vspace{-0.26in}
\end{figure}

\section{Related Works}\label{sec:related_works} 
\vspace{-0.12in}
\normalsize
\textbf{Device Heterogeneous FL.} Prior works on device heterogeneous FL have considered two distinct approaches with different objectives and settings. The first array of studies focuses on accommodating devices with varying compute resources, aiming to train a single global model. Techniques such as static and rolling-based partial model training allow devices to train a sub-model of the global model tailored to their compute resources~\cite{diao2020heterofl, horvath2021fjord, caldas2018expanding, alam2022fedrolex}. However, this approach does not fully reflect real-world scenarios. In practice, device prototypes such as IoTs and smartphones have unique neural network architectures designed for their specific configurations and underlying tasks, which may not support training varying neural architectures. This highlights a significant limitation in accommodating the full spectrum of device heterogeneity in this approach. The second array of studies addresses a more practical scenario where device prototypes with heterogeneous model architectures participate in FL to enhance their global model performance through mutual knowledge sharing~\cite{lin2020ensemble, sattler2021fedaux, cho2022heterogeneous}. In this context, KD techniques are used to transfer knowledge among prototypes, where locally updated client models, termed as ensembles, serve as teachers to distill their knowledge into each server's student model using an unlabeled public dataset. For example, FedDF~\cite{lin2020ensemble} uses vanilla logit averaging, while Fed-ET~\cite{cho2022heterogeneous} applies an uncertainty-weighted logit averaging, enhanced by a diversity regularization technique.
\emph{However, existing works typically focus on settings where prototypes have similar capabilities---both model and dataset sizes---and thus neglecting the challenges in more diverse settings with varying capabilities. This oversight leaves their effectiveness in such settings largely unexplored. In this paper, we aim to study the underexplored diverse heterogeneous device settings.} See Appendix~\ref{sec:app-related-works} for a more detailed discussion on the related works.

\textbf{Model Editing via Task Arithmetic.} 
Traditional methods for model editing often involve expensive joint fine-tuning across multiple tasks, which can limit scalability and democratization~\cite{zhuang2020comprehensive}. Recently, a promising technique called task arithmetic has emerged as a cost-effective and scalable method for updating pre-trained models with new information or refining undesired behavior~\cite{wortsman2022model, ortiz2024task, liu2023tangent}. The concept of ``task vectors'' introduced by~\citet{wortsman2022model} plays a pivotal role in these techniques. For any given task $t$, a task vector is derived by subtracting the model's pre-trained weights $\btheta_{pre}$ from its fine-tuned weights $\btheta_{ft}^t$ on task $t$ , i.e. $\btau_t = \btheta_{ft} - \btheta_{pre}$. These task vectors act as unique representations for specific tasks. Furthermore, researchers have demonstrated that by summing multiple task vectors $\{\btau_t\}_{t=1}^{T}$, and integrating them into a pre-trained model via $\btheta = \btheta_{pre} + \lambda \sum_{t=1}^T \btau_t$, one can effectively create a model capable of handling multiple tasks~\cite{wortsman2022model, yadav2024ties}. 
\emph{To the best of our knowledge, this work is the first to extend the notion of task vectors to the federated learning setting, introducing a task arithmetic for knowledge distillation across diverse heterogeneous device prototypes.}

\vspace{-0.05in}
\section{Problem Statement: FL with Heterogeneous Device Prototypes}\label{sec:problem}
\vspace{-0.12in}
Consider a cross-device FL setup with a set of $M$ distinct device prototypes $\M$, i.e., $M = \l\M\l$. Each device prototype $m_j \in \M$ has a distinct neural network architecture $f^j(\cdot; \btheta^j)$ parameterized by $\btheta^j\in\mathbb{R}^{n_j}$ and a set of clients $\C^j$, with $N^j = \l\C^j\l$ clients in total. Each client $c_k \in \C^j$ has a local private dataset $\D_k^j=\{(\bx_i,y_i)\}_{i=1}^{n_{j,k}}$, where $n_{j,k}=\l\D_k^j\l$, and locally trains the parameters $\btheta^j$ of the neural network architecture $f^j$ on its local dataset.
Furthermore, denote $\mathbb{D}^j=\cup_{k\in \mathbb{C}^j} \mathbb{D}^j_k$ to be the union of the private datasets for device prototype $j$. We assume $\mathbb{D}^j\sim \mathcal{D}^j$, that is a subsample from the population distribution $\mathcal{D}^j$ and similarly $\D^j_k \sim \mathcal{D}^j_k$. The union of the private datasets, i.e. $\mathbb{D} = \bigcup_{j\in \mathcal{M}} \mathbb{D}^j$, is sampled from the entire population $\mathcal{D}$, which is defined as an unknown mixture of the distributions each device prototype sampled its data from, i.e. generically non-i.i.d. We formalize this as a mixture of local clients data population, i.e., $\mathcal{D} = \sum_j \omega_{j,:} \mathcal{D}^j = \sum_j \sum_k \omega_{j,k} \mathcal{D}_k^j$, where $0 \leq \omega_{j,k} \leq 1$ and $\sum_{jk} \omega_{j,k} = 1$, and $\omega_{j,k}$ is unknown.

The ultimate objective is to minimize the test error and thus enable accurate inference for each device prototype $j$, aiming to obtain the optimal parameters for the population dataset:
\setlength{\abovedisplayskip}{2pt}
\setlength{\belowdisplayskip}{2pt}
\setlength{\abovedisplayshortskip}{2pt}
\setlength{\belowdisplayshortskip}{2pt}
{\normalsize
\begin{align} \label{eq:problem}
    \argmin{\btheta^j}\E_{(\bx,y)\sim \mathcal{D}}[\ell(f^j(\bx; \btheta^j), y)] = 
    \argmin{\btheta^j}\sum\limits_{j=1}^M\sum\limits_{k=1}^{N^j} \omega_{i,k}\E_{(\bx,y)\sim \mathcal{D}^j_k}[\ell(f^j(\bx; \btheta^j), y)]
\end{align}
}where $\ell(\cdot, \cdot)$ is the sample-wise loss function (e.g. cross entropy for image classification) and we decompose by total population loss with the linearity of expectation in the mixture. See Fig~\ref{fig:overview-prototype} for a visual illustration of heterogeneous device prototype FL.




\vspace{-0.1in}
\section{Background: Federated Ensemble Distillation}
\vspace{-0.10in}
To address the limitations of standard FL in device heterogeneous settings, ~\citet{lin2020ensemble} proposed ensemble knowledge distillation to transfer knowledge between heterogeneous device prototypes in FL. This procedure consists of two stages: (1) local per-prototype FL, and (2) server-side vanilla ensemble distillation. The details of each stage discussed in the following paragraphs.

\normalsize

\noindent\textbf{Local Per-Prototype FL.} In this context, at each round $r$ a subset of clients $\C^{j}_r$ from each device prototype $j \in \M$ is randomly selected by the server and download their corresponding model initialization $\btheta_r^{j}$. Each client $c_k^{j} \in \C^{j}_r$, starting from this model initialization, locally train the model $f^j$ on its local private data $\D^j_k$ by taking multiple steps of stochastic gradient descent. Then, they send back their updated parameters $\{{\widehat \btheta^j_k}\}_{k \in \C^j_r}$ to the server. The server aggregates the received clients parameters, and computes $\btheta_{avg}^j = \sum_{k \in \C^j_r} \frac{\l \D_k \l}{\sum_{k \in \C^j_r} \l \D_k \l} {\widehat \btheta^j_k}$.  In classic federated learning formalism, the parameters $\btheta_{avg}^j$ satisfy,
{ \vspace{-0.1in}
\begin{align}\label{eq:localfl}
    \btheta_{avg}^j\in \argmin{\btheta^j} \,\sum\limits_{k=1}^{N^j} \E_{(\bx,y)\sim \mathbb{D}^j_k}\left[\ell(f^j(\bx; \btheta^j), y)\right]
\end{align} \vspace{-0.05in}
}

\noindent\textbf{Vanilla Ensemble Distillation.} In this stage, each server model $f^j$ gets initialized with $\btheta^j$, and undergoes updates using ensemble knowledge distillation. 
Here, heterogeneous client models from heterogeneous device prototypes, collectively termed as ensembles, 
serve as teachers, i.e. $\cT := \{f^i(\cdot, \widehat \btheta^i_k)\vert \, i \in \M, k \in \C^i\}$,
transferring their knowledge to each server student model, i.e. $\cS_i := f^i(\cdot, \btheta^i)$. For simplicity, we drop the index for each server student model, denoting it as $\cS$. The ensemble distillation loss using a mini-batch of data from an unlabeled public dataset, i.e $\bx \in \D^{public}$, can be defined by the following equation:
{
\begin{align} \label{eq:avglogits}
\mathcal{L}_{\text{ED}} = \text{KL}\bigg[\sigma\bigg(\frac{1}{\vert\cT\vert} \sum_{\cF \in \cT} \cF(\bx) \bigg) \, , \, \sigma\left(\cS(\bx)\right)\bigg], \quad \quad \text{(AvgLogits)}
\end{align}
}where $\sigma(\cdot)$ is the softmax function. As illustrated in Eq.~\ref{eq:avglogits}, vanilla ensemble distillation treats all heterogeneous device prototypes' ensembles equally by uniformly averaging their logits. This way of knowledge integration overlooks the individual strengths and informational value of each prototype's ensembles. As a result, the richer, more informative logits from stronger ensembles are diluted by less informative logits from weaker ensembles, leading to information loss. Furthermore, this averaged logits is used as the distillation target across different-sized student models, irrespective of their intrinsic capacity and the helpfulness of each prototype's ensembles. Consequently, this leads to suboptimal knowledge transfer in device heterogeneous FL. See Section~\ref{sec:main-theory} for theoretical analysis and Section~\ref{sec:main-exp} for experimental observations.

\vspace{-0.1in}
\section{Task Arithmetic Knowledge Transfer and Integration}\label{sec:method}
\vspace{-0.1in}

In this section, we introduce TAKFL, designed to overcome the fundamental limitations of previous approaches and enhance knowledge transfer across diverse heterogeneous device prototypes, which vary in size---in terms of both model and dataset size. TAKFL consists of two main components: (1) individually transferring knowledge from each device prototype's ensembles, and (2) adaptively integrating knowledge via task arithmetic. Detailed descriptions of each component are provided in Section~\ref{sec:main-method-kt} and~\ref{sec:main-method-ta}, respectively. An illustrative overview of TAKFL is presented in Figure~\ref{fig:overview-takfl}, and the full algorithm is detailed in Appendix~\ref{sec:app-algorithm}, Algorithm~\ref{alg:takfl}.

\vspace{-0.1in}
\subsection{Knowledge Transfer from Individual Device Prototype} \label{sec:main-method-kt}
\vspace{-0.05in}
We begin by discussing our proposed knowledge transfer framework from each individual device prototype's ensembles. This process consists of two main components: ensemble knowledge transfer and self-regularization, each detailed in the subsequent paragraphs.

\noindent\textbf{Ensemble Knowledge Transfer.} Vanilla ensemble distillation integrates the knowledge of varying strength ensembles by uniformly averaging their logits. This approach can potentially transform or even degrade the overall quality of the knowledge being transferred, leading to suboptimal knowledge transfer. To effectively distill the unique knowledge and contributions of each prototype's ensembles, and to avoid dilution, information loss, and interference from other prototypes' ensembles, we propose transferring the knowledge from each prototype's ensembles separately and independently.

Specifically, let's consider $\cT_i := \{f^i(\cdot, \widehat \btheta^i_k)\vert \, k \in \C^i\}$ denotes the ensembles of device prototype $i$ as teacher and $\cS_j$ denotes the server student model of the device prototype $j$. Without loss of generality, we refer to each device prototype's server student model as just student denoted as $\cS$. Therefore, the knowledge distillation loss between the teacher ensembles $\cT_i$ and server student $\cS$ ($\cT_i \rightarrow \cS$) is defined below:
{\vspace{-0.1in}
\begin{align} \label{eq:sED}
\hspace{0mm}\mathcal{L}^{\cT_i \veryshortarrow \cS}_{KD} = \text{KL}\bigg[\sigma\bigg(\frac{1}{\vert\cT_i\vert} \sum_{\cF \in \cT_i} \cF(\bx) \bigg) \, , \, \sigma\left(\cS(\bx)\right)\bigg].
\end{align}} 

\noindent\textbf{Scaffolding Student from Noisy Ensemble Distillation.} The ensemble distillation process may adversely impact the student, causing it to forget its own knowledge acquired through averaged locally updated parameters and be drifted into erroneous directions. This is primarily due to two key factors: (1) The ensemble distillation process introduces noise, mainly because the ensembles' logits are inferred on an unfamiliar public dataset they have not been trained on. These ensembles are originally trained on local private datasets, which usually differ from the unlabeled public dataset used for distillation. Moreover, other factors such as the presence of data heterogeneity within FL and insufficient training of some ensembles due to limited computational resources can exacerbate this noise, particularly in the initial rounds of federation. (2) The ensemble distillation process lacks supervision from the actual private datasets, which is the ultimate learning objective. 


To scaffold the student models from the noisy and unsupervised distillation process, which may cause them to drift into erroneous directions and forget their invaluable self-knowledge, we introduce a KD-based self-regularization technique. Our self-regularization technique mitigates these issues by enforcing similarity between the logits of the student and its initial logits (when the student is initialized with averaged parameters) using KL divergence loss defined below:
{\vspace{-0.05in}
\begin{align} \label{eq:sg}
\hspace{-1.7mm}\mathcal{L}^{\text{self}}_{\cS} = \text{KL}\bigg[\sigma\left(\cS(\bx; \btheta_{avg}) \right) \, , \, \sigma\left(\cS(\bx)\right)\bigg].
\end{align}}\vspace{-0.05in}

\noindent\textbf{Overall Knowledge Transfer Objective.}
The overall knowledge transfer objective from teacher ensembles $\cT_i$ of device prototype $i$ to the student $\cS$ is the combination of the ensemble knowledge distillation loss $\mathcal{L}^{\cT_i \veryshortarrow \cS}_{KD}$ (Eq.~\ref{eq:sED}) and the self-regularization loss $\mathcal{L}^{\text{self}}_{\cS}$ (Eq.~\ref{eq:sg}) defined in the following:
{\vspace{-0.05in}
\begin{align}\label{eq:overall}
    \mathcal{L}^{\cT_i}_{\cS} = \mathcal{L}^{\cT_i \veryshortarrow \cS}_{KD} + \gamma \cdot \mathcal{L}^{\text{self}}_{\cS}.
\end{align}}Here, $\gamma$ is a hyperparameter controlling the effect of self-regularization term. We associate the knowledge transfer from each device prototype $i$ to a task $T_i$ with the loss $\mathcal{L}^{\cT_i}_{\cS}$.
\subsection{Task Arithmetic Knowledge Integration} \label{sec:main-method-ta}
\vspace{-0.08in}
\begin{wrapfigure}{r}{0.35\textwidth}
  \begin{center}
  \vspace{-28pt}
    \includegraphics[width=0.35\textwidth]{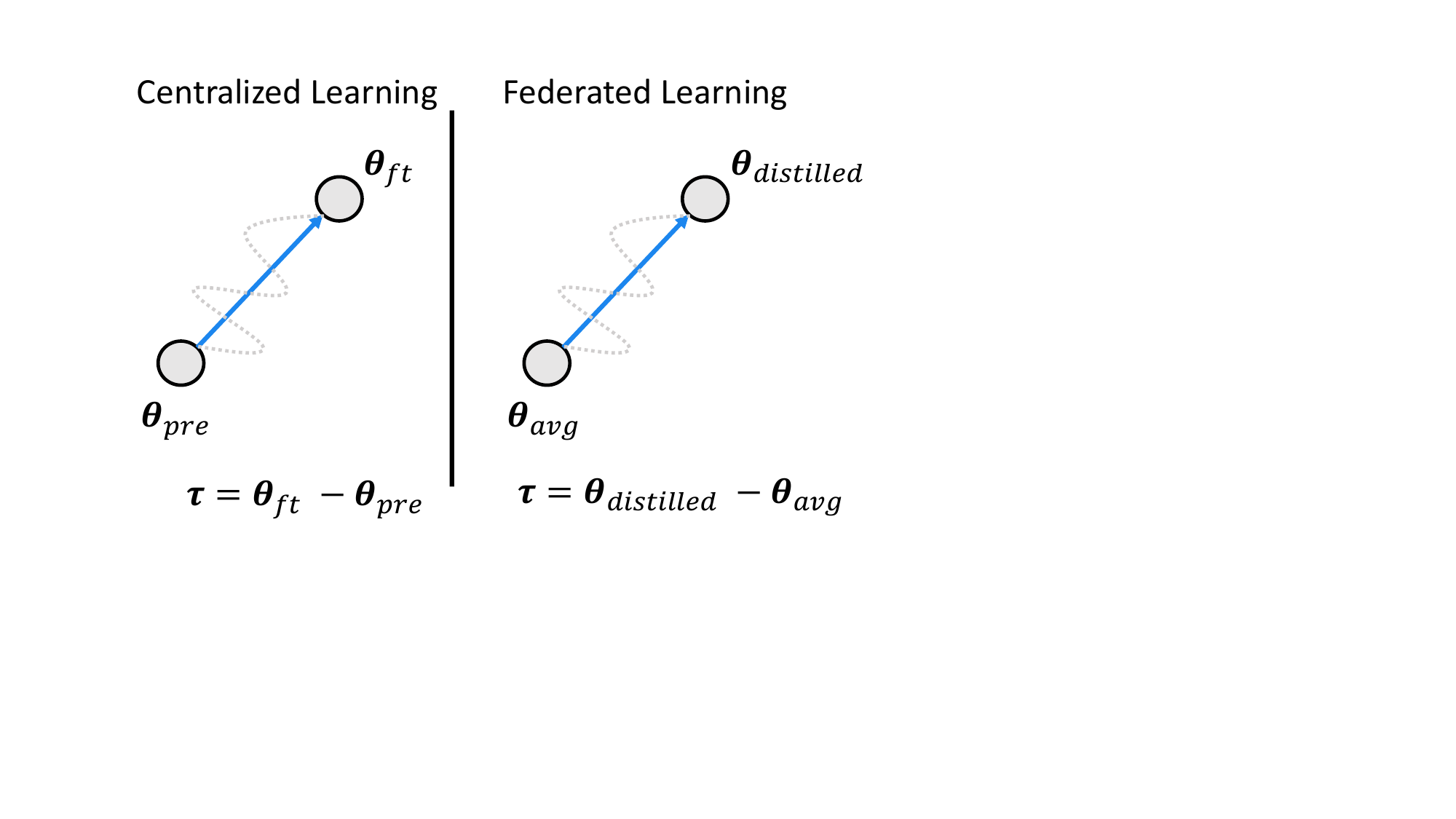}
  \end{center}
  \vspace{-5pt}
  \caption{\textbf{Analogy between task vector in centralized learning and federated learning.}}
  \label{fig:task-vector}
  \vspace{-0.15in}
\end{wrapfigure}
Herein, we delve into the details of our proposed method for customized integration of the separately distilled knowledge from heterogeneous ensembles.
Drawing inspiration from recent advances in model editing via task arithmetic~\cite{wortsman2022model}, where a pre-trained model's knowledge can be edited via task-specific vectors using arithmetic operation, we propose a novel customizable knowledge integration method via task arithmetic. To do so we extend the notion of task vector from centralized learning to federated learning. We conceptualize the averaged locally updated parameters, i.e. $\btheta_{avg}$, as a ``pre-trained'', similar to those in centralized learning, and the parameters of the distilled model via knowledge transfer objective~(Eq.~\ref{eq:sED}), denoted as $\btheta_{distilled}$, as a ``fine-tuned'' version of the model (see Fig.~\ref{fig:task-vector}). Consequently, the task vector $\btau_i$ associated with the knowledge transfer task $\mathcal{L}^{\cT_i}_{\cS}$ can be defined by subtracting the distilled parameters from the averaged locally updated parameters as follows: \vspace{-0.05in}
\begin{align}
    \btau_{i} = \btheta^{\cT_i \veryshortarrow \cS}_{distilled} - \btheta_{avg}.
\end{align} 
Essentially, task vectors serve as unique representations for the transferred knowledge from each prototype's ensembles to the student and encapsulate the distinct contributions of each prototype's ensembles to the student model. To selectively merge the knowledge of each prototype' ensembles into the student, we employ an adaptive task arithmetic operation as follows:
\begin{align}\label{eq:taskupdate}
    \btheta_{merged} = \btheta_{avg} + \sum_{i \in \M} \lambda_i \btau_i,
\end{align}
where $\lambda_i$ denotes the merging coefficient associated with task vector $\btau_i$, and they sum to one, i.e. $\sum_{i \in \M} \lambda_i = 1$. The merging coefficients determine the extent of knowledge integration from each prototype's ensembles. Essentially, they enable the student to have customized knowledge integration to achieve maximum performance. The student can determine these merging coefficients based on its own learning capacity and the relative knowledge and helpfulness of other device prototypes' ensembles. This approach provides an effective, low-cost, and scalable knowledge integration strategy in settings with diverse device heterogeneity. In our experiments, we considered this as a hyperparameter and tuned it manually or determined it using held-out validation sets which achieves similar results. More details can be found in Appendix~\ref{sec:app-hp-lambda}.


\section{Theoretical Results} \label{sec:main-theory}
We present a theoretical understanding on the efficacy of knowledge distillation in device heterogeneous FL. We argue that vanilla ensemble distillation (VED) diffuses the information from logits, which presents a notable disadvantage for solving~\eqref{eq:problem}. This effect is particularly pronounced when the teacher ensembles are from a device prototype of small capacity, and the student model is from a device prototype of large capacity. By contrast, our proposed method of task arithmetic knowledge integration, mitigates the drawbacks of VED and is able to simultaneously incorporate information from differently sized heterogeneous ensembles, efficiently filling up the capacity of each student with the most informative knowledge, achieving optimal knowledge transfer.


\noindent\textbf{Assumptions and Preliminaries.}
Standard practice, including the setting in consideration as well as the numerical experiments here, involves \emph{overparametrized} neural networks, that is, the total number of weights far exceeds the training sample size. This implies that the set of weights that minimize the loss is non-unique, and moreover, it has been argued that they form a submanifold~\cite{cooper2021global}. This submanifold structure of solution sets will provide the critical source of understanding the subsequent results. In particular, we shall consider knowledge distillation as filling up the capacity of device prototypes' models with basis vectors corresponding to submanifolds that minimize as many device prototypes' data distributions as possible.

Since we are interested in server-side distillation across heterogeneous device prototypes, we assume optimal conditions at the local per-prototype FL level, meaning that the perfect solution for local per-prototype FL is achieved. The formal details of the assumptions and statements are presented in Appendix~\ref{sec:app-theory}. In these Propositions we assume that the prototype $i$ and $j$ datasets are disjoint, i.e. $\mathcal{D}^i \cap \mathcal{D}^j = \emptyset$, and $\bigcup_{i=1}^{M} \mathcal{D}^i = \mathcal{D}$. We show that the other cases are trivial and uninteresting in the appendix.



\begin{proposition} \textbf{(information loss in VED, informal).}
    Consider the VED procedure in the form of solving~\eqref{eq:avglogits}. Consider two device prototypes with a device capacity and solution dimension of $Q^1,Q^2$ and $W^1,W^2$, respectively, and with associated eigenbases $\mathcal{Q}^i,\mathcal{W}^i$. Denote $W^{i,j},i,j=1,2$ as the capacity allocated by student $i$ in order to distill knowledge from teacher $j$'s logits. 
    \begin{enumerate}[noitemsep,leftmargin=*]
    \vspace{-0.05in}
        \item \textbf{Case 1:}  When the capacities are the same, that is $Q^1 = Q^2$ and $W^1= W^2=W^{1,2}=W^{2,1}$, then with VED, there will be some capacity, in the sense of eigenspace, of student prototypes that will be allocated with parameters that do not minimize the student's its own data distribution. 
        
        \item \textbf{Case 2:} Assume that $Q^1 > Q^2$ and $W^1=W^{1,2}> W^2$. Then the phenomenon as for Case 1 holds. Moreover, there will be some capacity of student 1's model that will be allocated with parameters that do not minimize either of the teacher or student prototype's data distribution. 
\end{enumerate}
\end{proposition}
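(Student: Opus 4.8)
The plan is to set up the geometric model of knowledge distillation sketched in the "Assumptions and Preliminaries" paragraph, make precise what "filling up capacity with eigenbases of solution submanifolds" means, and then carry out a dimension-counting argument for each of the two cases. Concretely, I would model each device prototype $i$ as carrying a weight space of dimension $Q^i$ (its "capacity"), and I would posit that for any target logit function there is a solution submanifold whose orthogonal complement (the "active" directions that pin down the fit) has dimension $W^i$ — the "solution dimension". Distillation from teacher $j$ into student $i$ is then the act of allocating $W^{i,j}$ of student $i$'s $Q^i$ capacity directions to align with the eigenbasis $\mathcal{W}^{i}$ of the submanifold that fits teacher $j$'s logits, while the remaining $Q^i - W^{i,j}$ directions are free to be chosen so as to also lie on (or near) the submanifold minimizing student $i$'s own data distribution $\mathcal{D}^i$. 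The key structural input I would use is that in VED the distillation target is the \emph{averaged} logit $\tfrac1{|\cT|}\sum_{\cF\in\cT}\cF(\bx)$, so the submanifold the student is forced onto is the one fitting this averaged logit, which in general is \emph{not} the submanifold that minimizes either $\mathcal{D}^1$ or $\mathcal{D}^2$ — this is the "diffusion of information" claim, which I would state as a lemma: the solution set of a generic convex combination of two logit targets intersects the solution sets of the individual targets only on a lower-dimensional set, hence almost every point of it fits neither distribution exactly.

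For \textbf{Case 1} ($Q^1=Q^2$, $W^1=W^2=W^{1,2}=W^{2,1}=:W$), I would argue as follows. After VED, student $1$ must devote $W$ eigen-directions to the averaged-logit submanifold. By the lemma, this submanifold is transverse to the $\mathcal{D}^1$-minimizing submanifold, so a positive-dimensional subspace of those $W$ directions — call its dimension $W - \dim(\mathcal{W}^{\text{avg}}\cap \mathcal{W}^{1}) > 0$ generically — consists of directions that do \emph{not} minimize student $1$'s own data distribution. Since $W \le Q^1$, these directions genuinely occupy part of student $1$'s capacity, proving the claim; by symmetry the same holds for student $2$. For \textbf{Case 2} ($Q^1 > Q^2$, $W^1 = W^{1,2} > W^2$), the Case-1 phenomenon transfers verbatim (the averaged target still fails to minimize $\mathcal{D}^1$). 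The additional assertion is the new one: I would count the capacity of student $1$ that, post-VED, fits \emph{neither} teacher $2$'s distribution \emph{nor} student $1$'s own. Here the point is a mismatch of solution dimensions: teacher $2$'s logits carry only $W^2 < W^{1,2}$ worth of genuine information, so when student $1$ is forced onto the $W^{1,2}$-dimensional averaged-logit submanifold it "over-allocates" — at least $W^{1,2} - W^2$ of those directions are spent representing structure that is absent from teacher $2$'s distribution, and by the transversality lemma these same directions generically miss the $\mathcal{D}^1$-submanifold as well. Intersecting the two "bad" subspaces and bounding the dimension of the intersection from below by inclusion–exclusion inside the $Q^1$-dimensional capacity space yields a positive-dimensional subspace allocated to parameters minimizing neither distribution.

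The main obstacle I anticipate is making the transversality/"generic position" step rigorous rather than heuristic: I need the solution submanifolds for the averaged-logit target, for $\mathcal{D}^1$, and for teacher $2$'s logits to be in general position so that the intersection dimensions behave additively (inclusion–exclusion for subspace dimensions, $\dim(U\cap V) \ge \dim U + \dim V - Q^i$). This is exactly where the cited overparametrization/submanifold structure of~\cite{cooper2021global} has to be invoked carefully — I would either assume genericity of the data distributions as a hypothesis (stating it explicitly in the formal version in Appendix~\ref{sec:app-theory}) or derive it from a mild non-degeneracy condition on the logit maps. A secondary, more bookkeeping-level difficulty is pinning down the accounting convention for how much capacity $W^{i,j}$ a student "must" allocate to a given teacher under VED versus under task arithmetic; I would fix this by the convention that fitting a $W$-dimensional solution constraint consumes exactly $W$ capacity directions, so that the contrast with the task-arithmetic scheme (which can choose the $\lambda_i$ to allocate capacity only to the informative directions) falls out cleanly in the subsequent proposition. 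Everything else is linear-algebraic dimension counting once the geometric picture is fixed.
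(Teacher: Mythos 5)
Your overall modelling matches the paper's: distillation is treated as allocating basis directions of the student's $Q^1$-dimensional capacity to eigenbases of solution submanifolds, and your key mechanism for Case 2 --- that the teaching map from the small prototype is injective but not surjective, so only $W^2$ of the $W^{1,2}$ directions reserved for teacher $2$ carry genuine information and the remaining $W^{1,2}-W^2$ are wasted --- is exactly the mechanism the paper isolates. The difference in route is that the paper does not argue deterministically via transversality: it places a uniform random model on which eigenvectors fill which capacity slots and computes the \emph{expected} dimension of the misallocated subspaces as an explicit combinatorial ratio of binomial coefficients (e.g.\ $\sum_i i\,\mathop{Ch}(Q^1-W^1,i)/[\mathop{Ch}(Q^1,W^1)+\mathop{Ch}(Q^1,W^{1,2})]$), which is strictly positive and comes with a scaling rate. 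Your Case 1 argument survives this comparison: for two generic $W$-dimensional subspaces in $\mathbb{R}^{Q^1}$ the intersection has dimension $\max(0,2W-Q^1)$, so $\min(W,Q^1-W)>0$ allocated directions miss $\mathcal{W}^1$ whenever $0<W<Q^1$, and that is a legitimate (arguably cleaner) deterministic route to the same qualitative conclusion.

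The genuine gap is in your Case 2 ``moreover'' step. You propose to lower-bound $\dim\bigl(U\cap(\mathcal{Q}^1\setminus\mathcal{W}^1)\bigr)$ by inclusion--exclusion, where $U$ is the wasted block with $\dim U=W^{1,2}-W^2$ and the complement of $\mathcal{W}^1$ has dimension $Q^1-W^1$. That bound is $(W^{1,2}-W^2)+(Q^1-W^1)-Q^1=W^{1,2}-W^2-W^1=-W^2<0$ (using $W^{1,2}=W^1$), so it is vacuous; worse, for subspaces in the general position you invoke elsewhere, the intersection dimension is exactly $\max(0,\,W^{1,2}-W^2-W^1)=0$. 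So the very genericity assumption that powers your Case 1 argument kills your Case 2 conclusion: deterministically and generically you get a zero-dimensional ``doubly bad'' set, not a positive-dimensional one. The paper sidesteps this precisely because it works in expectation over random allocations of eigenvectors to capacity slots (where the overlap count is positive on average even though particular configurations can realize zero), rather than with a worst-case or generic-position subspace intersection. To repair your argument you would need either to adopt that probabilistic allocation model for the ``moreover'' claim, or to weaken the claim to ``parameters that do not encode teacher $2$'s distribution and are not constrained to minimize $\mathcal{D}^1$,'' which the wasted $W^{1,2}-W^2$ block satisfies by construction without any intersection argument.
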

An interesting key mechanism of the proof is that when VED is applied in distilling logits from a small device prototype to a large one, the modeling capacity of $W^{1,2}$ is structurally reduced to that of $W^2<W^{1,2}$, i.e., it is an operation wasteful of the potential model capacity.

\begin{remark} \label{remark1}
    This proposition proves that in general, VED is prone to diffuse knowledge already present in students, and leads to inefficient and inaccurate use of model capacity. Furthermore, under the case that device prototypes have different capacities, VED ends up leading to more erronous models entirely as the small information within the small teacher is transferred onto a larger capacity target. 
\end{remark}

\begin{proposition} \textbf{(improve knowledge transfer with task arithmetic, informal).}
    Consider the TAKFL procedure as in the form of computing~\eqref{eq:taskupdate}. Consider two device prototypes with a device capacity and solution dimension of $Q^1,Q^2$ and $W^1,W^2$, respectively, and with associated eigenbases $\mathcal{Q}^i,\mathcal{W}^i$. 
    
    \begin{enumerate}[noitemsep,leftmargin=*]
    \vspace{-0.05in}
    \item \textbf{Case 1:} In the case that that $Q^1 \ge Q^2$ and $W^1 \ge W^2$, it holds that the TAKFL with prototype 1 as student preserves the eigenbasis associated to the parameters  used to accurately fit the data $\mathcal{D}^1$.
        \item \textbf{Case 2:}     Assume that $Q^1 = Q^2$ and $W^1= W^2$. TAKFL yields a solution for the student that is at the intersection of the subspaces corresponding to minimizing the two data distributions. 
        \item \textbf{Case 3:} Assume that $Q^1 > Q^2$ and $W^1> W^2$. In the case of prototype $1$ being the student, TAKFL yields a solution that:   
        \begin{enumerate}[noitemsep,leftmargin=*]
            \item retains the approximation accuracy on device 1's data distribution, 
            \item ensures approximation accuracy to the level of device $2$'s relative capacity
            \item fills the remaining local capacity device $1$ has allocated for device $2$'s logits with no informative new knowledge, unless enforced otherwise.
        \end{enumerate}
        \end{enumerate}
\end{proposition}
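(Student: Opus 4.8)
The plan is to work within the geometric framework set up by the assumptions: each device prototype $i$ has a model capacity $Q^i$ (dimension of the weight space it can use productively), and the set of weights that minimizes the population loss on data distribution $\mathcal{D}^i$ forms a submanifold of dimension $W^i$, with tangent eigenbasis $\mathcal{W}^i \subseteq \mathcal{Q}^i$. The operational content of ``task arithmetic knowledge integration'' is that the task vector $\btau_i = \btheta^{\cT_i \veryshortarrow \cS}_{distilled} - \btheta_{avg}$ lives in the span of the directions that must be moved along to fit teacher $i$'s logits, and that $\btheta_{merged} = \btheta_{avg} + \sum_i \lambda_i \btau_i$ superposes these corrections linearly. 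First I would record the base case: $\btheta_{avg}$ already sits on (or near, under the "optimal local FL" assumption) the solution manifold for $\mathcal{D}^1$ when prototype $1$ is the student, so $\btau_1 \approx 0$ along directions transverse to $\mathcal{W}^1$, and the self-regularization term $\mathcal{L}^{\text{self}}_{\cS}$ (Eq.~\ref{eq:sg}) pins the distilled model to not leave $\mathcal{W}^1$ except as forced by the ensemble term. This is the lever for all three cases.

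For Case 1 ($Q^1 \ge Q^2$, $W^1 \ge W^2$), I would argue that since $\btheta_{avg}$ for student $1$ already minimizes $\mathcal{D}^1$, the directions in $\mathcal{W}^1$ are unconstrained by the distillation correction, so the only components of $\btau_1$ (self) are zero and the task arithmetic update $\btheta_{avg}+\lambda_1\btau_1+\dots$ leaves the $\mathcal{W}^1$-eigenbasis intact by construction — formally, the projection of $\btheta_{merged}$ onto the $\mathcal{W}^1$-orthogonal complement of the relevant correction subspace equals that of $\btheta_{avg}$, which lies on the manifold. Contrast this with the VED mechanism from Proposition~1, where averaging logits forces a single target that cannot be orthogonally decomposed, collapsing $W^{1,2}$ down to $W^2$. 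For Case 2 ($Q^1=Q^2$, $W^1=W^2$), the point is that the two task vectors $\btau_1,\btau_2$ each push toward a respective solution submanifold; since the capacities match, both manifolds have the same dimension and, under the genericity/transversality assumption in the appendix, their intersection is nonempty with the expected dimension. I would show $\btheta_{merged}$ (with $\lambda_1+\lambda_2=1$) is a convex combination landing in a neighborhood where, after the distillation optimization converges, the KL losses for both teachers are simultaneously minimized — i.e.\ the merged point lies in $\mathcal{M}^1 \cap \mathcal{M}^2$. This uses that in the overparametrized regime the solution sets are affine-like locally (flat minima), so convex combinations of near-solutions remain near-solutions of the shared constraints.

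Case 3 ($Q^1 > Q^2$, $W^1 > W^2$, prototype $1$ as student) is the substantive one and splits as stated. For (a), reuse the Case 1 argument: the $\mathcal{W}^1$ eigenbasis used to fit $\mathcal{D}^1$ is untouched because $\btau_1$'s self-regularized correction vanishes there and $\btau_2$ only populates directions the student has allocated to teacher $2$. For (b), I would count dimensions: teacher $2$'s logits carry at most $W^2$ (equivalently $Q^2$) worth of independent constraints, and the task vector $\btau_2$, computed by distilling $\cT_2 \to \cS$ with a $Q^1$-capacity student, lands the student on a subset that matches teacher $2$'s predictions exactly on the span of dimension $\min(Q^1, \text{rank of teacher-2 signal}) = Q^2$ — so approximation accuracy to teacher $2$ is achieved "to the level of device $2$'s relative capacity." For (c), the remaining $W^{1,2} - W^2 > 0$ directions that student $1$ devoted to absorbing teacher $2$ are unconstrained by both the $\mathcal{D}^1$ fit (already done) and the teacher-$2$ logits (which span only $W^2$); hence the distillation optimizer, plus self-regularization, leaves them at their $\btheta_{avg}$ values — "no informative new knowledge, unless enforced otherwise." I would make this precise by decomposing the weight space as $\mathcal{W}^1 \oplus (\text{teacher-2 informative directions}) \oplus (\text{slack})$ and showing the task-arithmetic update acts as identity on the slack block.

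The main obstacle I anticipate is making the informal phrase ``capacity allocated by student $i$ to distill teacher $j$'s logits'' into a rigorous object and showing that the task vector $\btau_i$ is (approximately) supported on exactly that subspace — this requires the appendix's formalization of knowledge distillation as a least-capacity fit plus, crucially, that the self-regularization term (Eq.~\ref{eq:sg}) forces the distilled solution to be the \emph{minimum-norm} correction in the orthogonal complement of $\mathcal{W}^1$, so that the decompositions above are orthogonal rather than merely spanning. A secondary technical point is that task arithmetic is a linear superposition of nonlinear-network corrections, so Cases 2 and 3 implicitly need a local-linearity (NTK-like / flat-minimum) hypothesis to guarantee $\btheta_{avg}+\sum_i\lambda_i\btau_i$ still sits in the intended intersection; I would either invoke such a hypothesis from the appendix or restrict the claim to the regime where the $\btau_i$ are small. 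Everything else — the dimension counting and the "untouched eigenbasis" claims — should follow routinely once the support/orthogonality of the task vectors is pinned down.
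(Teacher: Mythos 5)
Your proposal reaches the same conclusions as the paper and paints essentially the same geometric picture (task vectors supported on disjoint capacity blocks; the teaching operator from a smaller prototype being injective but not surjective, leaving $W^{1,2}-W^2$ directions unfilled in Case 3c; dimension counting for Case 3b). But the route differs at the single load-bearing step, and that difference matters. The paper does not derive the fact that the $\btau_i$ act independently on separate subspaces: it \emph{assumes} it, by importing the weight disentanglement property of task arithmetic from Ortiz-Jim\'enez et al., i.e. $f^1(\bx;\btheta^1_{avg}+\alpha_1\btau_1+\alpha_2\btau_2)=g^{1,1}(\bx;\alpha_1\btau_1)+g^{1,2}(\bx;\alpha_2\btau_2)+g^{1,0}(\bx)$ with each $g$ vanishing off its support. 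Once that identity is granted, Cases 1 and 2 and the preservation statement $\dim(\mathcal{W}^v\cap[\mathcal{Q}^1\setminus\mathcal{W}^1])=0$ are read off in two lines, and Case 3 follows from $\dim(\mathcal{O}(2,1)(\mathcal{W}^2))=W^2<W^{1,2}$ exactly as you describe. Your proposal instead tries to manufacture the disentanglement from the self-regularization term (as a minimum-norm constraint) together with an NTK-style local-linearity hypothesis, and you candidly flag this as the "main obstacle" you have not resolved. That is the gap: as written, the proposal's key step is an open problem rather than a completed argument, whereas the paper sidesteps it by assumption. Note also that in the paper's formal treatment the self-regularization loss plays no role in the proof at all --- the only related hypothesis is that self-distillation is "executed appropriately" ($W^{1,1}=W^1$), stated in Proposition 1's setup --- so attributing the orthogonality of the decomposition to Eq.~\eqref{eq:sg} is a claim the paper neither makes nor supports. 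If you either cite weight disentanglement as a standing property (as the paper does) or actually carry out your proposed derivation, the rest of your argument goes through and matches the paper's.
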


\begin{remark} \label{remark2}
    This proposition proves that in general, TAKFL promotes the most efficient allocation of the devices' capacity in order to accurately fit a diverse set of data distributions. With TAKFL, the previously acquired knowledge is entirely preserved. Even under the case that device prototypes have different capacities, TAKFL smartly transfers the most informative knowledge to each prototype's student model based on its own intrinsic capacity. Still, the final statement indicates that in the case that there are many different teachers, while a small device prototype serving as teacher will not be necessarily compromise information, it would still be preferable to allocate that capacity to a more informative, larger, teacher model.
\end{remark}

\textbf{Comments.} We comment on the complexity and convergence aspects of TAKFL briefly:
\begin{enumerate}[noitemsep,leftmargin=*]
    \item \textbf{Computation Time}: TAKFL's computation time is $O(1)$ (constant) due to parallelization, as all distillation processes occur simultaneously.
    \item \textbf{Computation Load}: The overall computational load scales as $O(M)$ (linear) since the distillation tasks are performed independently for each prototype in parallel and merged into a singe task arithmetic operation (Eq.~\ref{eq:taskupdate}).
    \item \textbf{Resource Usage (Memory)}: The resource usage scales as $O(M^2)$ (quadratic) because of the need to store and process multiple task vectors concurrently.
\item \textbf{Optimization Convergence Rate}: The convergence rate of the Knowledge Distillation procedure is as standard for training a nonconvex loss function, e.g.~\cite{reddi2018convergence}.
\end{enumerate}

\section{Experiments}\label{sec:main-exp}
\subsection{Main Experimental Setup}
\noindent\textbf{Dataset and Architecture.} We evaluate our method on computer vision (CV) and natural language processing (NLP) tasks. For CV, we train image classification using CIFAR10/100~\cite{Krizhevsky09learningmultiple}, CINIC-10~\cite{darlow2018cinic10}, and TinyImagenet~\cite{Le2015TinyIV}. For NLP, we fine-tune pre-trained models for text classification on MNLI~\cite{williams2018broadcoverage}, SST-2~\cite{socher-etal-2013-recursive}, MARC~\cite{marc_reviews}, and AG News~\cite{Zhang2015CharacterlevelCN}. Our architectures include ResNet~\cite{he2015deep}, VGG~\cite{simonyan2015deep}, and ViT~\cite{dosovitskiy2021image} for CV, and small BERT variants~\cite{turc2019} (-Tiny, -Mini, -Small) for NLP. We simulate a federated non-i.i.d setting using a Dirichlet distribution $Dir(\alpha)$, where a lower $\alpha$ indicates higher heterogeneity~\cite{li2021model, morafah2023practical}. Further details can be found in Appendix~\ref{sec:app-cv-details} and~\ref{sec:app-nlp-details}.

\noindent\textbf{Implementation Details.} We use the Adam optimizer for both CV and NLP tasks. For CV, local training involves 20 epochs with a learning rate of 0.001, weight decay of 5e-5, and a batch size of 64. NLP training is conducted over 1 epoch with a learning rate of 3e-5, no weight decay, and a batch size of 32. For distillation, Adam is used with a learning rate of 1e-5 and weight decay of 5e-4 for CV, and 3e-5 with no weight decay for NLP. Batch sizes for distillation are 128 for CV and 32 for NLP. The softmax temperature is set at 3 for both tasks, with a temperature of 20 for self-regularization. Further details are provided in Appendix~\ref{sec:app-cv-details} and~\ref{sec:app-nlp-details}. 

\noindent\textbf{Baselines and Evaluation Metric.} We compare our method against standard FL, i.e. FedAvg~\cite{mcmahan2017communication} and SOTA KD-based methods designed for heterogeneous device prototypes FL, including FedDF~\cite{lin2020ensemble} and FedET~\cite{cho2022heterogeneous}. The evaluation metric is the final top-1 classification accuracy of each device prototype's global model on the test dataset, as per the methodology described in~\cite{morafah2023practical}.  We report the average results and the standard deviation over three independent runs, each with a different random seed. 

\emph{A more detailed version of the experiments, alongside additional experiments and ablation studies, is presented in Appendix~\ref{sec:app-exp} and~\ref{sec:app-ablation}.}

\subsection{Main Experimental Results}\label{sec:main-exp-results}
In this section, we evaluate the performance of our method, TAKFL, in a federated learning environment that mirrors real-world scenarios with diverse, heterogeneous device prototypes, as illustrated in Fig.~\ref{fig:overview-prototype}. Our experimental setup includes three different device prototype sizes: Small (S) with a small model and small dataset, Medium (M) with a medium-sized model and medium-sized dataset, and large (L) with a large model and large dataset.




\noindent\textbf{Performance on CV Task.} Table~\ref{tab:main-cv-performance} presents the performance of TAKFL in the homo-family architecture setting on the CIFAR-10 and CIFAR-100 \cite{Krizhevsky09learningmultiple} datasets (for hetero-family architecture results, see Appendix~\ref{sec:app-main-cv}, Table~\ref{tab:app-cv-performance}). TAKFL consistently enhances performance across all device prototypes in various scenarios, achieving SOTA results. Notably, in the Dir(0.3) setting on CIFAR-10, TAKFL improves average performance across all prototypes by 8\%, and by 4\% on CIFAR-100. From Table~\ref{tab:main-cv-performance}, inconsistent performance improvements are observed with prior KD-based methods, especially for the L prototype. While S and M prototypes achieve gains, the L prototype suffers up to a 10\% degradation compared to vanilla FedAvg, highlighting the dilution issue where valuable information from larger, more capable device prototypes is diluted by less informative outputs from smaller devices. Moreover, the significant performance improvements TAKFL achieves for each device prototype, particularly for S and M prototypes, illustrate the ineffectiveness of the one-size-fits-all approach used in the existing KD methods. These observations confirm the shortcomings of vanilla ensemble distillation and corroborate our theoretical findings in Remark~\ref{remark1} and~\ref{remark2}. The effectiveness of our self-regularization technique is further supported by these experimental results. For more detailed and insightful analysis see Appendix~\ref{sec:app-analysis}.

{\footnotesize{
\begin{table*}[t]
  \caption{\footnotesize{\textbf{Performance Results for CV task on CIFAR-10 and CIFAR-100.}} \footnotesize{Training data is distributed among S, M, and L device prototypes in a 1:3:6 ratio, subdivided among clients using Dirichlet distribution. Public datasets are CIFAR-100~\cite{Krizhevsky09learningmultiple} for CIFAR-10~\cite{Krizhevsky09learningmultiple} and ImageNet-100~\cite{5206848} for CIFAR-100. Client configurations include 100, 20, and 4 clients for S, M, and L, with sampling rates of 0.1, 0.2, and 0.5. Architectures are ResNet-8, ResNet-14, and ResNet-18~\cite{he2015deep} for S, M and L, respectively. All models are trained from scratch for 60 rounds. See Appendix \ref{sec:app-main-cv} for additional experiments using hetero-family architecture and more details.}}
  \vspace{-2mm}
  \label{tab:main-cv-performance}
  \centering
  \aboverulesep = 0.605mm
  \belowrulesep = 0.984mm
  \midsepremove
  \resizebox{0.99\textwidth}{!}{
  \begin{NiceTabular}{ll llll|llll}[colortbl-like]
    \toprule
     & & \multicolumn{4}{c}{Low Data Heterogeneity (Dir(0.3))} & \multicolumn{4}{c}{High Data Heterogeneity (Dir(0.1))}\\
    \cmidrule(lr){3-6}
    \cmidrule(lr){7-10}
    \multirow{-2}{*}{Dataset} & \multirow{-2}{*}{Baseline} & \multicolumn{1}{c}{S} & \multicolumn{1}{c}{M} & \multicolumn{1}{c}{L} & \multicolumn{1}{c}{Average} & \multicolumn{1}{c}{S} & \multicolumn{1}{c}{M} & \multicolumn{1}{c}{L} & \multicolumn{1}{c}{Average}\\
    \midrule
    \multirow{5}{*}{CIFAR-10} & FedAvg & $36.21_{\pm2.24}$& $46.41_{\pm2.33}$ & $59.46_{\pm6.17}$ & $47.36$ & $22.01_{\pm0.78}$ & $25.26_{\pm3.89}$ & $51.51_{\pm3.52}$ & $32.93$\\ 
    & FedDF & $49.31_{\pm0.15}$ & $50.63_{\pm0.73}$ & $49.82_{\pm0.98}$ & $49.92$ & $34.71_{\pm1.48}$ & $35.27_{\pm4.74}$ & $51.08_{\pm4.04}$ & $40.35$\\
    & FedET & $49.21_{\pm0.72}$ & $55.01_{\pm1.81}$ & $53.60_{\pm6.47}$ & $52.60$ & $29.58_{\pm3.00}$ & $30.96_{\pm4.70}$ & $45.53_{\pm6.46}$ & $35.36$\\
    \rowcolor{lightcyan} \cellcolor{white} & TAKFL & $55.90_{\pm1.70}$ & $57.93_{\pm3.49}$ & $60.58_{\pm 2.35}$ & $58.14$ & $37.40_{\pm1.68}$ & $38.96_{\pm0.17}$ & $51.49_{\pm6.15}$ & $42.62$\\
    \rowcolor{lightcyan} \cellcolor{white} & TAKFL+Reg & $\bf{56.37_{\pm0.46}}$ & $\bf{58.60_{\pm0.43}}$ & $\bf{65.69_{\pm1.28}}$ & $\bf{60.22}$ & $\bf{40.51_{\pm1.05}}$ & $\bf{40.12_{\pm1.24}}$ & $\bf{53.24_{\pm2.51}}$ & $\bf{44.62}$ \\
    \midrule
    \multirow{5}{*}{\makecell{CIFAR-100}} & FedAvg & $13.22_{\pm0.14}$ & $21.39_{\pm1.11}$ & $29.47_{\pm0.86}$ & $21.36$ & $11.86_{\pm0.08}$ & $14.63_{\pm0.65}$ & $26.25_{\pm1.64}$ & $17.58$ \\
    & FedDF & $19.54_{\pm0.20}$ & $24.32_{\pm0.45}$ & $29.29_{\pm1.45}$ & $24.38$ & $16.09_{\pm0.32}$ & $19.80_{\pm0.17}$ & $26.59_{\pm0.25}$ & $20.83$\\
    & FedET & $19.67_{\pm0.35}$ & $25.27_{\pm0.66}$ & $31.10_{\pm1.53}$ & $25.35$ & $11.18_{\pm1.68}$ & $18.22_{\pm0.35}$ & $26.40_{\pm0.65}$ & $18.60$\\
    \rowcolor{lightcyan} \cellcolor{white} & TAKFL & $24.48_{\pm0.42}$ & $27.60_{\pm0.25}$ & $29.84_{\pm0.94}$ & $27.31$ & $\bf{22.90_{\pm0.18}}$ & $23.63_{\pm0.72}$ & $26.98_{\pm0.13}$ & $24.50$\\
    \rowcolor{lightcyan} \cellcolor{white} & TAKFL+Reg& $\bf{27.18_{\pm0.27}}$ & $\bf{29.14_{\pm0.20}}$ & $\bf{31.15_{\pm0.97}}$ & $\bf{29.15}$ & $22.88_{\pm0.37}$ & $\bf{23.92_{\pm0.57}}$ & $\bf{28.01_{\pm0.34}}$ & $\bf{24.94}$\\
    \bottomrule
  \end{NiceTabular}
  }
  \midsepdefault
\end{table*}
}}

\begin{table*}[th]
  \caption{\small{\textbf{Performance Results for NLP Task on MNLI and SST-2}}. \footnotesize{Training data distribution is similar to the CV task using only Dir(0.5) here. Public datasets are SNLI~\cite{bowman2015large} for MNLI~\cite{williams2018broadcoverage} and Sentiment140~\cite{go2009twitter} for SST-2~\cite{socher-etal-2013-recursive}. Client configurations are 8, 4, and 2 clients for S, M, and L, with sample rates of 0.3, 0.5, and 1.0, respectively. Architectures include Bert-Tiny, Bert-Mini, and Bert-Small~\cite{turc2019} for S, M, and L, initialized from pre-trained parameters and fine-tuned for 20 communication rounds. See Appendix \ref{sec:app-nlp-details} for more details.}}
  \vspace{-2mm}
  \small
  \label{tab:main-nlp-performance}
  \centering
  \resizebox{0.99\textwidth}{!}{
  \midsepremove
  \begin{NiceTabular}{l llll|llll}
    \toprule
    \multirow{2}{*}{Baseline} & \multicolumn{4}{c}{MNLI} & \multicolumn{4}{c}{SST-2} \\
    \cmidrule(lr){2-5}
    \cmidrule(lr){6-9}
    & \multicolumn{1}{c}{S} & \multicolumn{1}{c}{M} & \multicolumn{1}{c}{L} & Average & \multicolumn{1}{c}{S} & \multicolumn{1}{c}{M} & \multicolumn{1}{c}{L} & Average\\
    \midrule
    FedAvg               &$36.15_{\pm 0.46}$& $54.47_{\pm 2.48}$& $57.51_{\pm 2.79}$& $49.37$ & $54.98_{\pm 1.81}$& $74.71_{\pm 8.22}$& $86.69_{\pm 0.06}$& $72.13$\\                  
    FedDF                &$54.21_{\pm 0.15}$& $60.44_{\pm 1.91}$& $66.71_{\pm 1.09}$& $60.45$ & $74.41_{\pm 2.62}$& $80.71_{\pm 1.63}$& $84.35_{\pm 1.66}$& $79.82$\\ 
    FedET                &$48.03_{\pm 6.32}$& $50.33_{\pm 7.87}$& $53.80_{\pm 6.18}$& $50.72$ & $66.63_{\pm 9.14}$& $65.89_{\pm16.35}$& $70.05_{\pm15.83}$& $67.52$\\ 
    \rowcolor{lightcyan}  TAKFL &$57.43_{\pm 0.21}$& $63.58_{\pm 0.31}$& $68.74_{\pm 0.12}$& $63.25$ & $74.73_{\pm 0.55}$& $82.17_{\pm 0.31}$& $86.93_{\pm 0.42}$& $81.28$ \\ 
    \rowcolor{lightcyan}  TAKFL+Reg   &$\bf{57.61_{\pm 0.89}}$& $\bf{63.91_{\pm 1.05}}$& $\bf{68.96_{\pm 1.10}}$& $\bf{63.49}$ & $\bf{74.88_{\pm 0.43}}$& $\bf{82.40_{\pm 0.83}}$& $\bf{87.33_{\pm 0.63}}$& $\bf{81.54}$ \\ 
    \bottomrule
  \end{NiceTabular}
  \midsepdefault
  }
\end{table*}

\noindent\textbf{Performance on NLP Task.} Table~\ref{tab:main-nlp-performance} presents the results on MNLI \cite{williams2018broadcoverage} and SST-2 \cite{socher-etal-2013-recursive} datasets (see Appendix~\ref{sec:app-main-nlp} for further experiments). Similar to the CV task, TAKFL has consistently improved performance across all device prototypes of varying sizes, achieving SOTA results: a 3\% average increase on MNLI and 2\% on SST-2. The suboptimality of existing KD methods, is evident from the results presented here as well.
Notably, FedET suffers from a significant performance degradation compared to vanilla FedAvg. This issue stems from FedET's reliance on the confidence scores of neural networks for uncertainty estimates. However, neural networks, especially pretrained language models (PLMs), tend to be poorly calibrated and overconfident, undermining reliable uncertainty estimates~\cite{wang2022uncertainty, guo2017calibration, chen2022close, xiao2022uncertainty}.




 \begin{figure}
 \centering
 \resizebox{\textwidth}{!}{
 \begin{minipage}{0.3\textwidth}
\centering
 \includegraphics[height=4cm, keepaspectratio]{"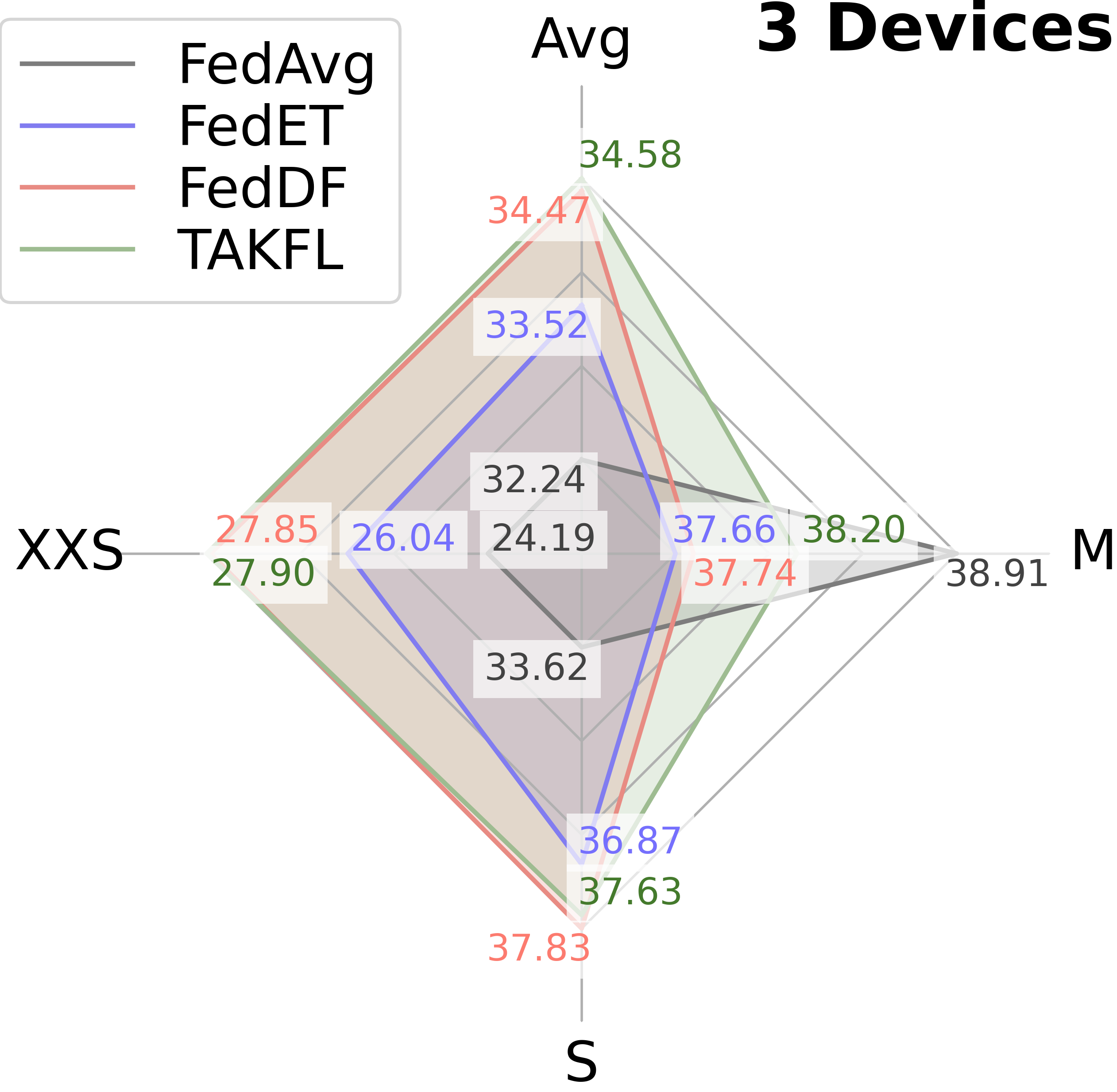"} 
\end{minipage}
\begin{minipage}{0.3\textwidth}
\centering
 \includegraphics[height=4cm, keepaspectratio]{"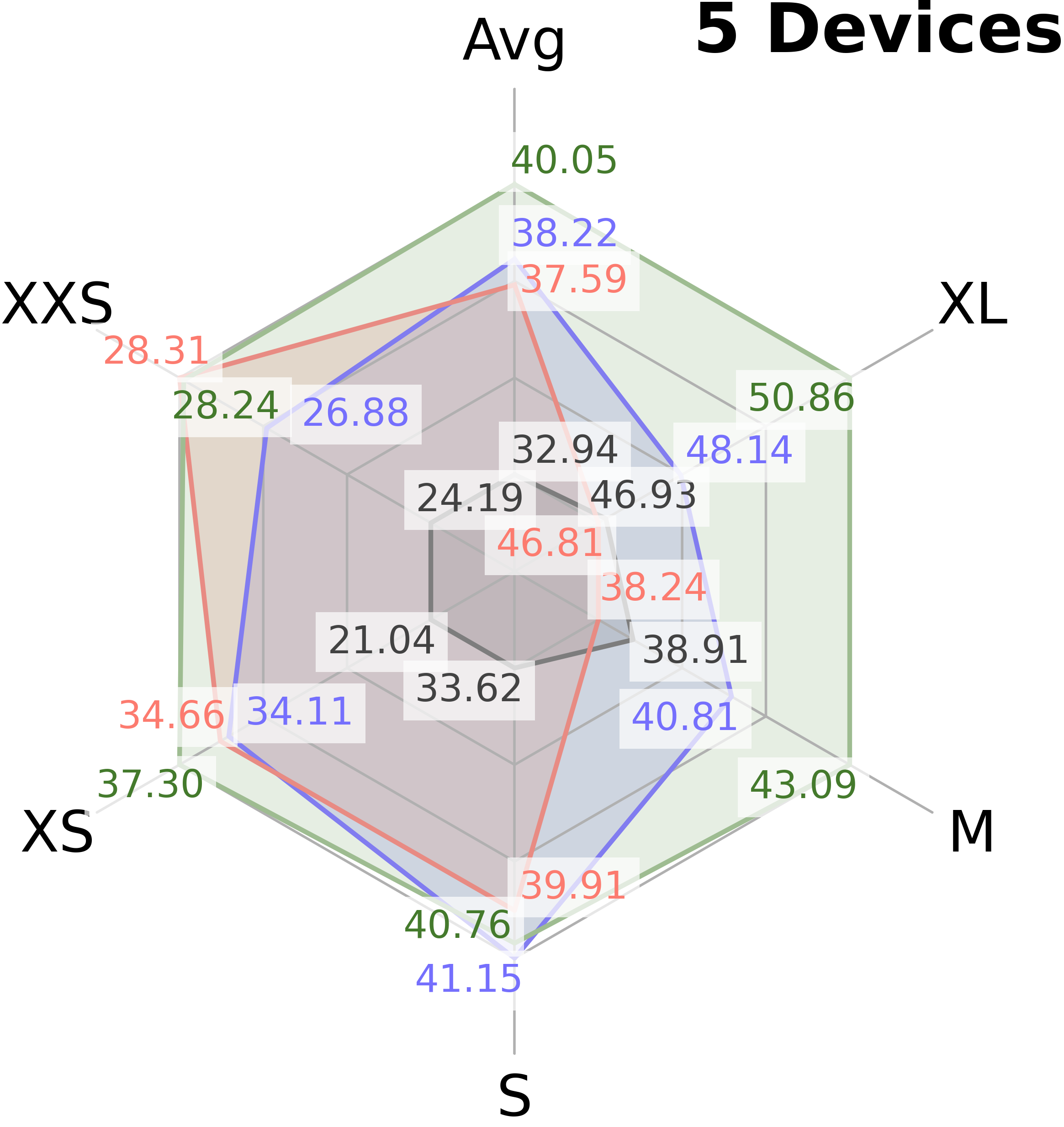"} 
\end{minipage}
\begin{minipage}{0.3\textwidth}
\centering
 \includegraphics[height=4cm, keepaspectratio]{"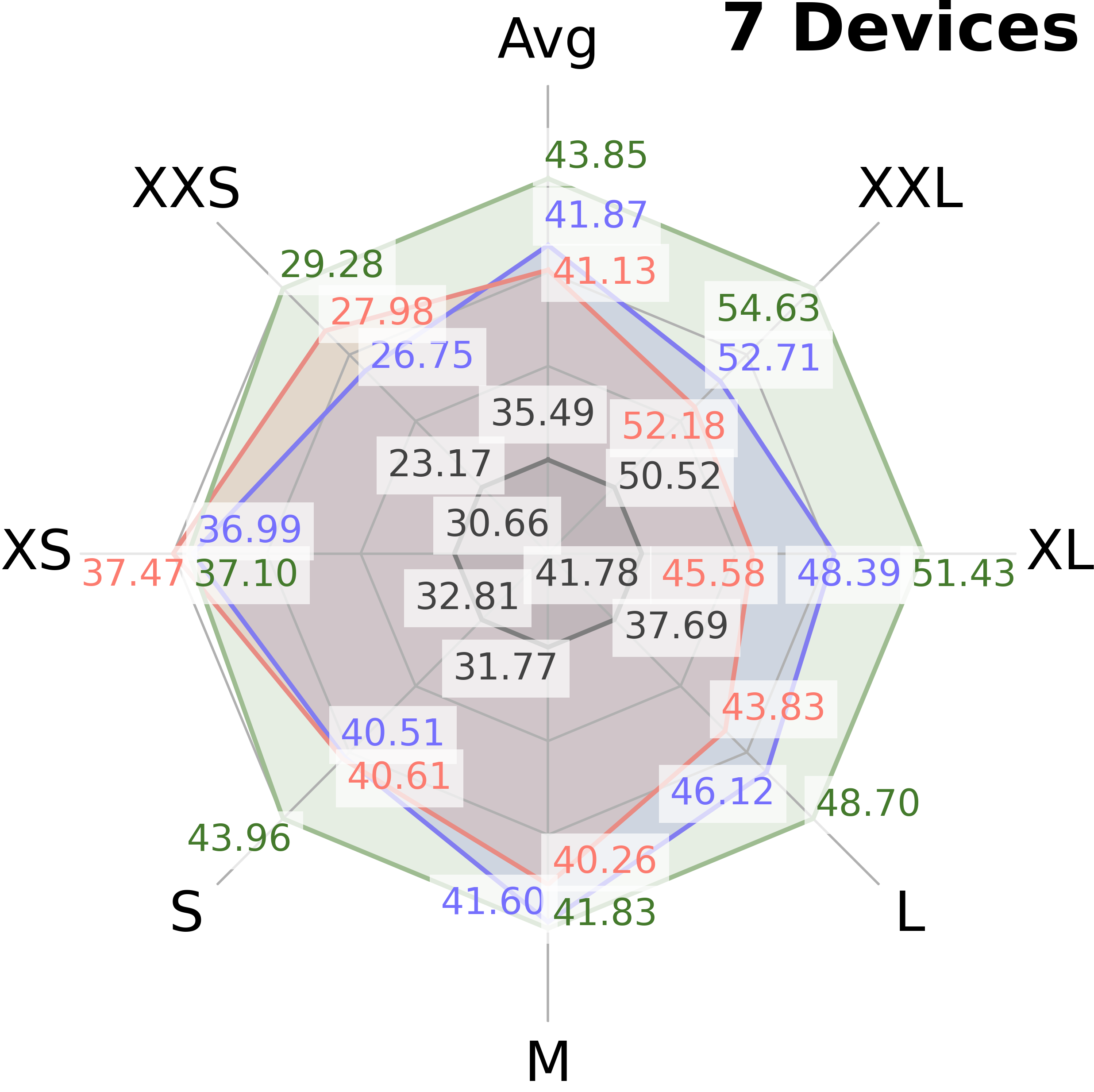"} 
\end{minipage}
}
\caption{\small{\textbf{Scalability Evaluation of TAKFL.}} \footnotesize{Image classification on CINIC-10~\cite{darlow2018cinic10} dataset is used to evaluate TAKFL's scalability across device prototypes ranging from XXS to XXL.  Training data is distributed among prototypes in a 1:2:3:4:5:6:7 ratio, further subdivided using Dir(0.5). Client configurations range from 35 for XXS to 5 for XXL. Architectures span from ResNet10-XXS for XXS to ResNet50 for XXL prototype, all initialized from scratch and trained over 30 communication rounds. The public dataset is CIFAR-100~\cite{Krizhevsky09learningmultiple}. See Appendix~\ref{sec:app-scalability} for more details.}}
\label{fig:main-scalability}
 \end{figure}

\subsection{Scalability Evaluation} \label{sec:main-scalability}
We evaluate the scalability of TAKFL across a spectrum of device prototypes, from extremely small (XXS) to extremely large (XXL), to see how well our method adapts from a uniform array of small-size prototypes to a diverse mix of sizes. Each prototype is equipped with appropriately scaled model and dataset sizes, simulating real-world variations in device capabilities. 

Figure~\ref{fig:main-scalability} illustrates TAKFL's ability to effectively scale from 3 to 7 device prototypes. In scenarios where all devices are similarly small, i.e. 3-device setup, TAKFL's performance is slightly better than FedDF. This is because when devices are homogeneously small and similar in capability, they do not offer unique contributions that could benefit from more complex distillation strategies. However, as the scenario expands to include larger devices like XL and XXL in the 5- and 7-device configurations, TAKFL significantly outperforms existing KD-based methods. This improvement is driven by the larger devices' ability to offer more significant and higher-quality knowledge, which TAKFL effectively distills across all prototypes, contrasting sharply with existing methods that fail to utilize this potential. These experimental observations, corroborated by our theoretical insights in Remark~\ref{remark2}, demonstrate TAKFL's superior scalability and effectiveness.

\section{Conclusion and Discussion} \label{sec:conclusion}
In this work, we addressed a fundamental issue in standard federated learning: the lack of support for heterogeneous device prototypes. Existing KD-based methods often fall short in real-world scenarios, where device capabilities vary widely. To address this, we introduced TAKFL, a novel KD-based method that treats knowledge transfer from each prototype's ensembles as separate tasks and distills them independently. TAKFL susequently integrates the knowledge using an adaptive task arithmetic technique for optimized performance. We also introduced a KD-based self-regulation technique to mitigate issues arising from noisy and unsupervised ensemble distillation. The effectiveness of our method is substantiated by both theoretical results and extensive experimentation across CV and NLP tasks, using various datasets and models.

Limitations remain, notably in real-world applicability. While TAKFL's effectiveness in an approximated real-world setup has been demonstrated, actual deployment on physical devices and in environments with extremely large models remains untested due to resource constraints. Experiencing TAKFL in genuine real-world settings could unveil additional challenges or limitations, providing further insights into its scalability and efficiency.
\section{Acknowledgment}

This work was partially supported by a research grant from Cisco Systems, Inc., Project Number 49790. V.K. acknowledges support from the Czech National Science Foundation under Project 24-11664S.  We also gratefully acknowledge the use of the computational infrastructure provided by the OP VVV funded project CZ.02.1.01/0.0/0.0/16\_019/0000765, “Research Center for Informatics,” which enabled us to conduct the experiments presented in this work.

We would like to express our sincere gratitude to Ang Li, Matias Mendieta, and Guangyu Sun for their invaluable feedback and insightful discussions, which significantly contributed to the development and refinement of this work. Their thoughtful suggestions and careful review of earlier drafts were instrumental in enhancing the quality of this paper.

\newpage
{
    \small
    \bibliographystyle{ieeenat_fullname}
    \bibliography{refs}
}

\appendix
\clearpage
\setcounter{section}{0}
\setcounter{page}{1}
\renewcommand{\thesection}{\Alph{section}}
\normalsize

\section*{Appendix}

The supplementary materials are organized as follows:
\begin{itemize}
\item Appendix~\ref{sec:app-related-works}: Provides more details on related works.
\item Appendix~\ref{sec:app-algorithm}: Presents the full algorithm description of TAKFL.
\item Appendix~\ref{sec:app-theory}: Presents formal theoretical statements, assumptions, and proofs supporting our method.
\item Appendix~\ref{sec:app-exp}: Presents detailed experimental results including some additional experiments. 
\item Appendix~\ref{sec:app-ablation}: Presents the ablation studies experiments.
\item Appendix~\ref{sec:app-imp}: Presents hyper-parameters and implementation details. 
\end{itemize}

\section{More Detailed Related Works} \label{sec:app-related-works}

\begin{figure}[h!]
    \centering
    \begin{subfigure}{0.4\textwidth}
        \includegraphics[width=\textwidth, keepaspectratio]{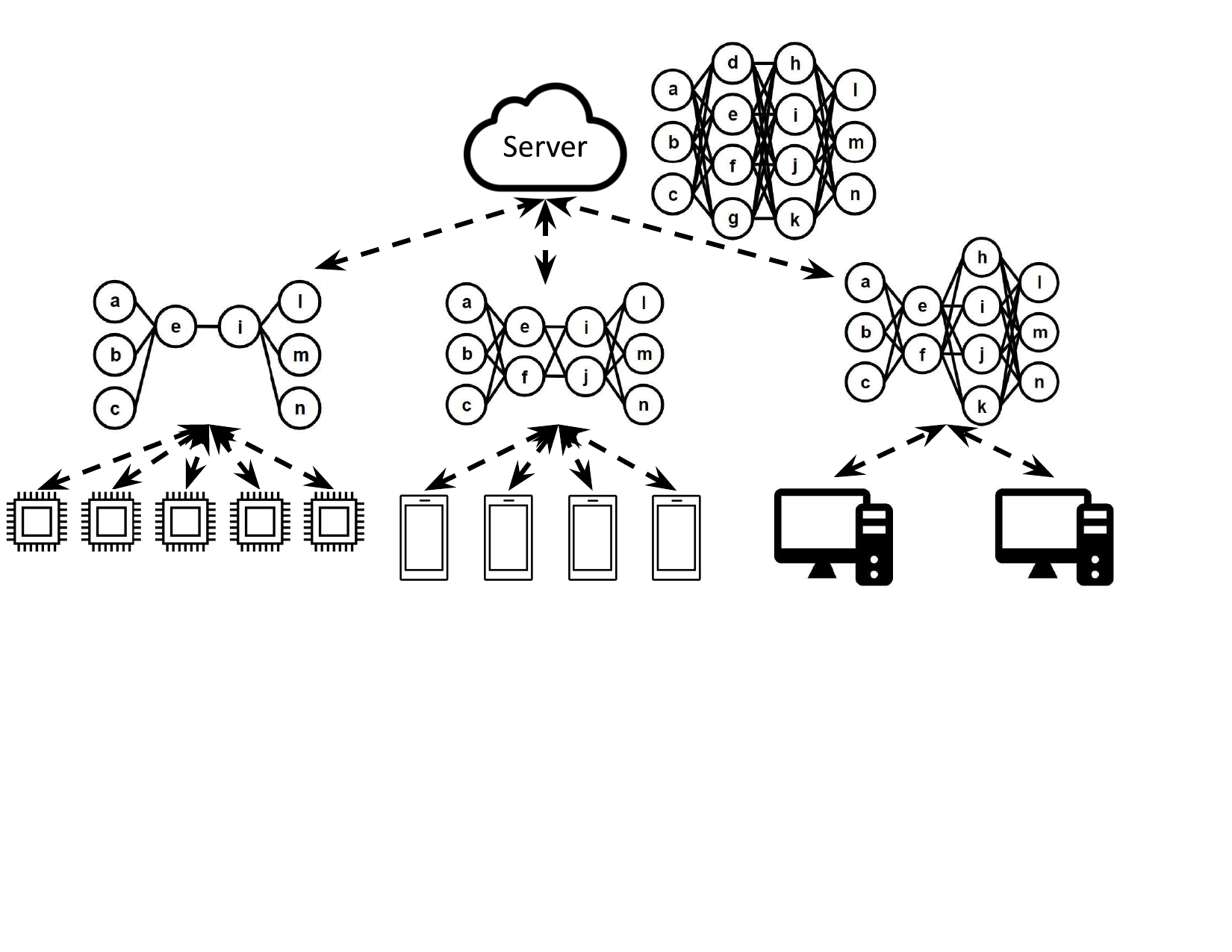}
        \caption{Partial Model Training}
        \label{fig:overview-pt}
    \end{subfigure}
    \hspace{0.05\textwidth}
    \begin{subfigure}{0.435\textwidth}
        \includegraphics[width=\textwidth, keepaspectratio]{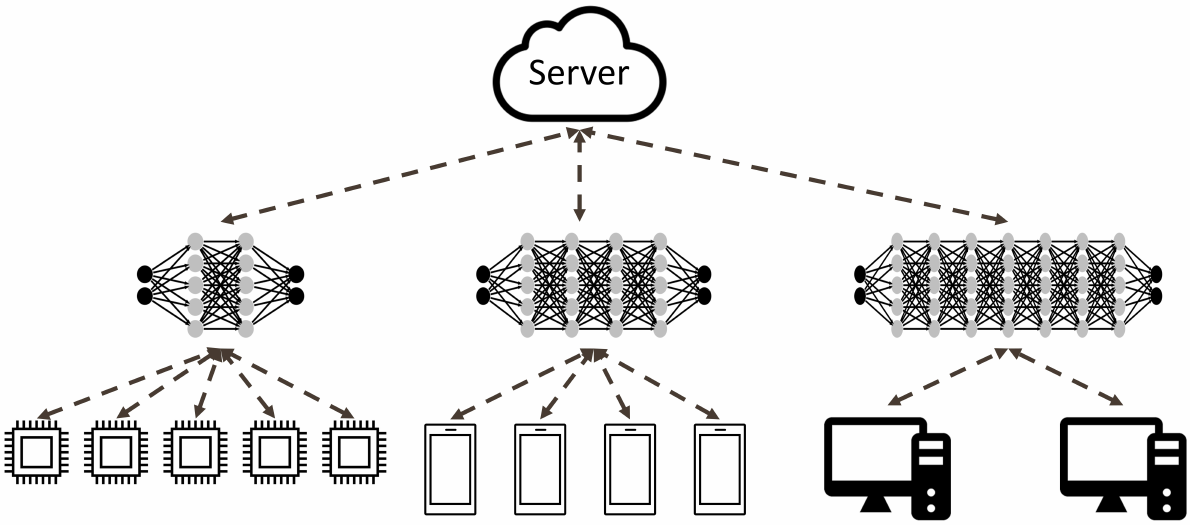}
        \caption{Heterogeneous Device Prototypes}
        \label{fig:overview-prototype}
    \end{subfigure}
    \caption{\textbf{Overview of Two Different Device Heterogeneous FL Settings.} (a) In the partial model training setting, the objective is to train a single global model where heterogeneous devices train a specific sub-model based on their computational resources. This approach necessitates device support for varying neural network architectures, which is impractical as devices typically have specialized architectures designed to match their hardware, software configurations, and underlying machine learning tasks. (b) In the heterogeneous device prototypes setting, device prototypes participate in FL to enhance the performance of their global model by transferring knowledge across prototypes. This setting is more feasible as it accommodates diverse device prototypes with their own specific configurations, including neural network architecture and dataset. However, establishing effective knowledge transfer between differently sized prototypes (like IoTs and workstations) and diverse configurations is challenging. In this paper, we address this issue.}
    \label{fig:problem}
\end{figure}

Prior works on device heterogeneous FL have considered two distinct approaches with different objectives and settings. The first group of studies focuses on accommodating devices with varying compute resources, aiming to train a single global model~\cite{diao2020heterofl, caldas2018expanding, zhang2023timelyfl, wu2024breaking, xu2024fedbrb}. Various partial model training techniques have been proposed for this setting, where devices are tasked with training a sub-model of a global model according to their compute resources. These include dropout-based~\cite{caldas2018expanding}, static~\cite{diao2020heterofl, horvath2021fjord}, and rolling-based sub-model extraction techniques~\cite{alam2022fedrolex}. Federated Dropout builds upon the concept of dropout~\cite{srivastava2014dropout} to extract smaller sub-models. Static sub-model extraction techniques like in HeteroFL~\cite{diao2020heterofl} and FjORD~\cite{horvath2021fjord} consistently extract designated portions of the global model, whereas FedRolex~\cite{alam2022fedrolex} introduces a more flexible rolling method for sub-model extraction. However, these approaches assume that devices can support various sub-model architectures for training, which does not fully reflect the real-world scenario. In practice, there exist a diverse spectrum of device prototypes such as IoT devices and smartphones each have unique and unhashable neural network architectures tailored to their specific hardware and software configurations and underlying machine learning tasks. Consequently, these device prototypes may not support training various neural network architectures, highlighting a significant limitation in accommodating the full spectrum of device heterogeneity in this setting.

The second array of studies tackles a more practical scenario where device prototypes with heterogeneous model architectures participate in FL to enhance their global model performance through mutual knowledge sharing. In this context, knowledge distillation techniques are employed to transfer knowledge among device prototypes~\cite{lin2020ensemble, cho2022heterogeneous, sattler2021fedaux}. Here, locally updated client models from various device prototypes, collectively referred to as ensembles, serve as teachers to distill their knowledge into each server's student model using an unlabeled public dataset. For instance, FedDF~\cite{lin2020ensemble} utilizes vanilla averaging of all ensemble logits as the distillation target for all server student models. In contrast, FedET~\cite{cho2022heterogeneous} employs an uncertainty-weighted average of ensembles’ logits as the distillation target for all server student models, complemented by a diversity regularization technique. However, methods like FedET rely on the neural networks' confidence scores for uncertainty estimates, overlooking the fact that neural networks are often poorly calibrated and prone to overconfidence, which compromises their ability to provide reliable uncertainty estimates~\cite{wang2022uncertainty, guo2017calibration, chen2022close, xiao2022uncertainty}.
\emph{These existing works typically focus on settings where device prototypes have similar capabilities, i.e. \underline{similar model and dataset sizes}, thus neglecting the challenges presented in more diverse settings where device prototypes vary significantly in terms of model and dataset size.  This oversight limits the effectiveness of these methods in truly diverse and heterogeneous environments. In this paper, we introduce TAKFL, which is designed to address the limitations of existing methods in these underexplored diverse device heterogeneous settings.}

Figure 1 illustrates the distinctions between these two different settings studied in the literature. For more information, we refer the reader to recent surveys~\cite{mora2022knowledge, li2024federated, pfeiffer2023federated, chen2024advances}.
\clearpage
\section{Full Algorithm Description of TAKFL} \label{sec:app-algorithm}
The full algorithm description of TAKFL is presented in Algorithm~\ref{alg:takfl}.

\begin{algorithm}[h!]
\caption{TAKFL Algorithm}
\label{alg:takfl}
\midsepremove
\begin{algorithmic}[1]
\REQUIRE number of communication rounds ($R$), public unlabeled dataset $\D^{\text{public}}$, server training iterations $I$, heterogeneous device prototypes ($i \in \M$) with their associated clients ($\C^i$) and local datasets (${\{\D^i_k\}}_{k \in \C^i}$), model architecture ($f^i$), local training iterations ($I_{local}$), local learning rate ($\eta_{local}$), server distillation iterations ($I_{distill}$), and server distillation learning rate ($\eta_{distill}$).

\colorbox{PaleRed}{
\hspace{-0.2cm}\begin{minipage}{0.94\textwidth}
\STATE \textbf{Server Executes:}
\STATE Randomly initialize all device prototype's server model ${\{\btheta^i_0\}}_{i \in \M}$ \;
\FOR {each round $r = 0, 1, \ldots, R-1$}
\STATE $\C_r^i \leftarrow$ (randomly select clients from each device prototype) $\forall i \in \M$ \; 
\FOR {each client $k \in \C_r^i, \, \forall i \in \M $ \rm{\textbf{in parallel}}}
\STATE $\widehat{\btheta}^{i}_{k}\leftarrow {\mathtt{ClientUpdate}}(k; \btheta^i_{r} )$ \; 
\ENDFOR
\STATE $\btheta_{avg}^i = \sum_{k \in \C_r^i} \frac{\l \D_k^i \l}{\sum_{k \in \C_r^i} \l \D_k^i \l} {\widehat \btheta_k^i}$

\FOR {each device prototype's server student $i = 1, 2, \ldots, M$ \rm{\textbf{in parallel}}}
\FOR {each device prototype's teacher ensembles $j = 1, 2, \ldots, M$ \rm{\textbf{in parallel}}}
\STATE $\btheta \leftarrow {\btheta}^{i}_{avg}$
\FOR {each server distillation iteration $t = 0, 1, 2, \ldots, I_{distill}$}
\STATE $\bx \leftarrow$ sample a mini-batch of data from public dataset $\D^{\text{public}}$
\STATE $\btheta^{t+1} \leftarrow $ $\btheta^{t} - \eta_{distill} \cdot \nabla \mathcal{L}^{\cT_i}_{\cS}$ defined in Eq.~\ref{eq:overall}.
\ENDFOR 
\STATE $\tau_j \leftarrow \btheta^{I_{distill}} - {\btheta}^{i}_{avg}$
\ENDFOR
\STATE $\btheta^{i}_{r+1} \leftarrow {\btheta}^{i}_{avg} + \sum_{j=1}^M \lambda_j \tau_j$
\ENDFOR
\STATE $\btheta^{i}_{r+1} \leftarrow \btheta^{i}$
\ENDFOR
\end{minipage}
}

\vspace{1mm}

\colorbox{PaleBlue}{
\hspace{-0.2cm}\begin{minipage}{0.94\textwidth}
\FUNCTION {$\mathtt{ClientUpdate}(k, \btheta^i_{r})$}
\STATE $\btheta \leftarrow \btheta^i_{r}$ \;
\FOR {each local update iteration $t = 0, 1, \ldots, I_{local}-1$}
\STATE $\{\bx, y\} \leftarrow$ sample a mini-batch of data from local dataset $\D^i_k$
\STATE $\btheta^{t+1} \leftarrow \btheta^{t} - \eta_{local} \cdot \nabla \ell(f^i(\bx; \btheta^t), y)$\;
\ENDFOR
\STATE $\widehat{\btheta}^{i}_{k} \leftarrow \btheta^{I_{local}}$\;
\ENDFUNCTION
\end{minipage}
}
\end{algorithmic}
\midsepdefault
\end{algorithm}
\clearpage
\section{Theoretical Results} \label{sec:app-theory}
\subsection{Proofs of the Main Propositions}
\setcounter{proposition}{0}

First we present the formal assumptions associated with our theoretical derivations. 

\begin{assumption}\label{as:local}
    Local federated averaging is performed with perfect test accuracy, i.e.,
    \begin{equation} 
     \argmin{\btheta^j} \,\sum\limits_{j=1}^M\sum\limits_{k=1}^{N^j} \E_{(\bx,y)\sim \mathbb{D}^j_k}\left[\ell(f^j(\bx; \btheta^j), y)\right] = \argmin{\btheta^j}  
\mathbb{E}_{(\bx,y)\sim \mathcal{D}^j}\left[ \ell (f^j(\bx;\btheta),y\right]
    \end{equation}
    That is, the training error on the datasets $\{\mathbb{D}^j_k\}$ for the computed $\theta^j_{avg}$ is the same as the test error on the population distribution $\mathcal{D}^j$.
Moreover assume that we can write $\mathcal{T}_i=\left\{\sum\limits_{k=1}^{N^i} f^i(\cdot,\hat{\btheta}^i_k)\vert k\in\mathbb{C}^i\right\}=\{f^i(\cdot,\theta^i_{avg})\}$. Finally, we assume that the same population distribution $\sum_j \omega_j \mathcal{D}^j$ is the same that the clients perform their testing on as the server performs distillation on.\end{assumption}

These assumptions are made for mathematical practicality while at the same time not starkly unreasonable. The local FL the device prototypes perform is generically prone to imprecision, especially as the clients' data varies, but this discrepancy is bounded~\cite{haddadpour2019convergence}. Similarly the difference in the average of logits and the logit of averages has a bounded difference norm~\cite{wortsman2022model}. Thus, violations of the Assumption add additional perturbations to quantities derived in the Theoretical analysis without having structural/qualitative effects, and thus would only present clutter in the presentation.

\noindent\textbf{Notations.} Now we present the notation defining the specific quantities we refer to in the derivations below. The set of important quantities is given in Table~\ref{tab:notations}. Note that the formal definitions of the first two quantities are,
\[
\bTheta^j:=\argmin{\btheta}  
\mathbb{E}_{(\bx,y)\sim \mathcal{D}^j}\left[ \ell (f^j(\bx;\btheta),y\right],\,
\bTheta^{j,k}:=\argmin{\btheta}  
\mathbb{E}_{(\bx,y)\sim \mathcal{D}^i}\left[ \ell (f^j(\bx;\btheta),y\right]
\]

{\normalsize
\begin{table}[h!]
\centering
\normalsize
\caption{Notation and Definitions}
\label{tab:notations}
\resizebox{1.0\textwidth}{!}{
\begin{tabular}{c|l}
\hline
\textbf{Notation} & \multicolumn{1}{c}{\textbf{Definition}} \\ \hline
$\bTheta^j$ & Parameters in $j$'s device model that minimize the loss on its population distribution \\ \hline
$\bTheta^{j,k}$ & Parameters in $j$'s device model that minimize the loss on $i$'th population distribution \\ \hline
$Q^j=\dim(\btheta^j)$ & The total capacity of device prototype $j$ \\ \hline
$\mathcal{Q}^j=\{\be^j_k\}_{k=1,...,Q^j}$ & Eigenbasis for the model of device prototype $j$ \\ \hline
$W^{j}=\dim(\bTheta^j)$ & Dimension of the solution submanifold $\bTheta^j$ \\ \hline
$W^{j,k}=\dim(\bTheta^{j,k})$ & Dimension of the solution submanifold $\bTheta^{j,k}$ \\ \hline
$\mathcal{W}^{j}=\{\be^j_k\}_{k=1,...,W^j}$ & Eigenbasis the solution submanifold $\bTheta^j$ \\ \hline
$\mathcal{W}^{j,k}=\{\be^{j,k}_l\}_{l=1,...,W^{j,k}}$ & Eigenbasis the solution submanifold $\bTheta^{j,k}$ \\ \hline
\end{tabular}
}
\end{table}
}


We shall make use of the ``Choose'' combinatorial operator, defined to be $\mathop{Ch}(n,p)=\frac{n!}{p!(n-p)!}$. The standard $O(\cdot)$ notation indicates $a_k=O(b_k)$ to mean there exists $K$ and $C$ such that for $k\ge K$, $a_k\le C b_k$.

A recent finding that inspired the methodology in this work is the discovery of the weight disentanglement phenomenon underlying task arithmetic~\cite{ortiz2024task}. Indeed the \emph{task arithmetic property} provides the ideal circumstance for federated knowledge transfer as we shall see below. Formally, adapting their definition to our notation:

(\textbf{Task Arithmetic Property}) holds for a set of vectors $\{\btau_j\}$ 
if for all $j$ it holds that,
\begin{equation}\label{eq:taskarit}
    f^j\left(\bx; \btheta_{avg}^j+\sum\limits_{i\neq j} \lambda_i \btau_i\right) = 
    \left\{\begin{array}{lr} f^j(\bx;\btheta^j_{avg}+\lambda_i \btau_i) & \bx \in \mathcal{D}^i \\
    f^j(\bx;\btheta^j_{avg}) & \bx \in \mathcal{D}^j\setminus \cup_{i\neq j}\mathcal{D}^i \end{array}\right.
\end{equation}

Let us define an important property of task arithmetic that we shall use in the sequel. 

\textbf{(Weight disentanglement).}\cite{ortiz2024task} A parametric function $f : \mathcal{X} \times \Theta \to \mathcal{Y}$ is weight disentangled with respect to a set of task vectors $T = \{\btau_j\}_{j \in \mathbf{T}}$ and the corresponding supports $\mathcal{D}_T: = \{\mathcal{D}_j\}_{j \in \mathbf{T}}$ if
\begin{align*}
f(\bx; \btheta_0 + \sum_{i\in \mathbf{T}}^T \alpha_i \btau_i) = \sum_{i\in\mathbf{T}} g_j(\bx; \alpha_i \btau_i) + g_0(\bx),
\end{align*}
where $g_i(x; \alpha_i \btau_i) = 0$ for $\bx \not\in \mathcal{D}_i$ and $i \in\mathbf{T}$, and $g_0(\bx) = 0$ for $\bx \in \bigcup_{i \in \mathbf{T}} \mathcal{D}_i$. 

We now present the formal statements as well as the proofs of the main propositions. 

\begin{proposition}\textbf{(Information Loss in VED).} \label{prop1:app}
    Let the prototype $i$ and $j$ datasets be disjoint, i.e. $\mathcal{D}^i \cap \mathcal{D}^j = \emptyset$, and $\bigcup_{i=1}^{M} \mathcal{D^i} = \mathcal{D}$. Note that this implies the weight disentanglement property. Consider the VED procedure in the form of solving~\eqref{eq:avglogits}. Consider two device prototypes with a device capacity  and solution dimension of $Q^1,Q^2$ and $W^1,W^2$, respectively, and with associated eigenbases $\mathcal{Q}^i,\mathcal{W}^i$. Let the solution set of VED with prototype $i$  as student be $\hat{\Theta}^i_{VED}$ 
    with $\dim(\hat{\Theta}^i_{VED})=W^{v_i}$ with eigenbasis $\mathcal{W}^{v_i}$. In addition, denote $W^{s,t},\,s,t\in\{1,2\}$ the dimension of the solution set for the student model trained on the data from the teacher device's ensembles. We assume that self-distillation is executed appropriately, e.g., $W^{1,1}=W^1$ and $W^{2,2}=W^2$.
    \begin{enumerate}[noitemsep,leftmargin=*]
        \item \textbf{Case 1:}     Assume that $Q^1 = Q^2$ and $W^1= W^2=W^{1,2}=W^{2,1}$. Then it holds that, in expectation,
        \footnotesize{
    \[
    \dim\left(\hat{\Theta}^1_{VED} \cap \left[\mathcal{Q}^1 \setminus \mathcal{W}^1 \right]\right) = O\left( \frac{(Q^1-W^1)(W^1)!(Q^1-W^{1,2})!}{Q^1!(W^1)!(Q^1-W^1)!+Q^1!W^{1,2}!(Q^1-W^{1,2})!} \right) 
    \]
    }
This corresponds to the expected capacity of prototype 1 that is taken up for fitting logits that are not in the span of $\mathcal{W}^1$, that is, that do not fit the data corresponding to prototype $1$.
        \item \textbf{Case 2:}     Assume that $Q^1 > Q^2$ and $W^1=W^{1,2}> W^2$. Then the same quantity as for Case 1 holds. Moreover,
        \footnotesize{
        \[        
    \dim\left(\hat{\Theta}_{VED} \cap \left[\mathcal{Q}^1 \setminus (\mathcal{W}^1\cup \mathcal{W}^{1,2} \right]\right) = 
O\left(\frac{(Q^1-W^1)(W^1!)(W^{1,2}-W^2)!}{Q^1!W^1!(Q^1-W^1)!+Q^1!W^2!(W^{1,2}-W^2)!}\right)
        \]
}
This corresponds to capacity of client $1$ that has been allocated but fits, in the model of prototype 1, neither the data of prototype 1, nor of the data of prototype 2.
\end{enumerate}
\end{proposition}
\begin{proof}
Formally,
\[
\hat{\Theta}_{VED} := \argmin{\theta \in \mathcal{Q}^1}\mathcal{L}_{\text{ED}} = \argmin{\theta}\text{KL}\bigg[\sum\limits_{i=1,2}\sigma\bigg( f^i(\bx,\btheta^i_{avg}) \bigg) \, , \, \sigma\left(\cS(\bx)\right)\bigg]
\]
Since by assumption $\btheta^i_{avg}$ solves the training problem on the data associated with device prototype $i$, the logit is accurate, and thus there is a map $\mathcal{O}(i,j):\mathcal{T}_i\to \mathcal{T}^j_i \subseteq \mathcal{W}^{i,j}$. The self distillation, that is, $\mathcal{S}_j$ defines a bijective map from $\mathcal{W}^j$ to $\mathcal{W}^j$ and thus does not affect the capacity allocation. 

\textbf{Case 1:} In this case, generically (that is, measure zero on some non-atomic sampling on a distribution of operators) $\mathcal{O}(i,j)$ is bijective. Now let us compute the expectation of the number of eigenvectors of, e.g.~$\mathcal{W}^{1,2}$ that are in the complement of the span of $\mathcal{W}^1$. Assuming, for simplicity, independence, this would correspond to counting the possible choices within the capacity of $\mathcal{Q}^1\setminus \mathcal{W}^1$ over the range of possible choices of filling the capacity of $\mathcal{Q}^1$ with vectors in $\mathcal{W}^1$ together with choices of filling it with vectors in $\mathcal{W}^{1,2}$:
\[
\sum\limits_{i=1}^{Q^1-W^1} i \frac{\mathop{Ch}(Q^1-W^1,i)}{\mathop{Ch}(Q^1,W^1)+\mathop{Ch}(Q^1,W^{1,2})}
\]
For, e.g., $Q^1=4$ and $W^1=W^{1,2}=2$ this is $\frac{1}{3}$. 

To derive a scaling rate we can write:
\[
\begin{array}{l}
\sum_i i \frac{\frac{(Q^1-W^1)!}{i!(Q^1-W^1-i)!}}{\frac{Q^1!}{(W^1)!(Q^1-W^1)!}+\frac{Q^1!}{W^{1,2}!(Q^1-W^{1,2})!}} =O\left( \frac{(Q^1-W^1)(W^1)!(Q^1-W^{1,2})!}{Q^1!(W^1)!(Q^1-W^1)!+Q^1!W^{1,2}!(Q^1-W^{1,2})!} \right) 
\end{array}
\]

\textbf{Case 2:} In this case, it must be that, at best almost surely, $\mathcal{O}(2,1)$ is injective, but not surjective. This means that distilling from $2$ to $1$ does not fill the capacity of $\mathcal{W}^{1,2}$, and is thus a fundamentally wasteful operation, that is $\vert \mathcal{T}^j_i\vert =W^{2}<W^{1,2}$. Now let us compute the expectation of the number of eigenvectors of, e.g.~$\mathcal{W}^{1,2}$ that are in the complement of the span of $\mathcal{W}^1$. Since $\mathcal{W}^{1,2}$ are being structurally allocated for fitting, the combinatorial expression is the same:
\[
\sum\limits_{i=1}^{Q^1-W^1} i \frac{\mathop{Ch}(Q^1-W^1,i)}{\mathop{Ch}(Q^1,W^1)+\mathop{Ch}(Q^1,W^{1,2})}
\]
Thus for, e.g., $Q^1=4$ and $W^1=W^{1,2}=2$ this is, again, $\frac{1}{3}$. 
The scaling in this case is
\[
O\left(\frac{(Q^1-W^1)(W^1!)(Q^1-W^{1,2})!}{Q^1!W^1!(Q^1-W^1)!+Q^1!W^2!(Q^1-W^2)!}\right)
\]
However, we observe that there are vectors in the range of $\mathcal{W}^{1,2}\setminus \mathcal{O}(2,1)(\mathcal{W}^2)$ that have been allocated by the VED but lie in neither $\mathcal{W}^1$ nor in $\mathcal{W}^{1,2}$, that is, are garbage. We can compute those as the expected number of eigenvectors arising from allocating $\mathcal{W}^{1,2}\setminus \mathcal{O}(2,1)(\mathcal{W}^2)$ that intersect with $\mathcal{Q}^1\setminus \mathcal{W}^1$ (that is, the spare capacity not used for fitting data $\mathcal{D}^1$). This is, using similar principles:
\[
\sum\limits_{i=1}^{W^{1,2}-W^1} i \frac{\mathop{Ch}(Q^1-W^1,i)}{\mathop{Ch}(Q^1,W^1)+\mathop{Ch}(Q^1,W^{1,2}-W^2)}
\]
This is for, e.g., $Q^1=4$, $W^{1,2}=2$ and $W^2=1$,  this would be $\frac{3}{10}$

The scaling here is
\[
O\left(\frac{(Q^1-W^1)(W^1!)(W^{1,2}-W^2)!}{Q^1!W^1!(Q^1-W^1)!+Q^1!W^2!(W^{1,2}-W^2)!}\right)
\]

\end{proof}

\begin{proposition} \textbf{(Improve knowledge transfer with task arithmetic).} \label{prop2:app}
    Consider the TAKFL procedure as in the form of computing~\eqref{eq:taskupdate}. Consider two device prototypes with a device capacity and solution dimension of $Q^1,Q^2$ and $W^1,W^2$, respectively, and with associated eigenbases $\mathcal{Q}^i,\mathcal{W}^i$. Let the solution set of TAKFL with prototype $i$  as student be $\hat{\Theta}^i_{TA}$ 
    with $\dim(\hat{\Theta}^i_{TA})=W^v$ with eigenbasis $\mathcal{W}^v$. In addition, denote $W^{s,t},\,s,t\in\{1,2\}$ dimension of the solution set for the student model trained on the data from the teacher device's ensembles. . The following statements hold:     
    \begin{enumerate}[noitemsep,leftmargin=*]
    \item[] In the case that that $Q^1 \ge Q^2$ and $W^1 \ge W^2$, it holds that the TAKFL preserves that the eigenbasis used to model the data $\mathcal{D}^1$'s accuracy for device prototype 1,  that is for student $1$
    \footnotesize{
    \[
    \dim\left(\mathcal{W}^v \cap \left[\mathcal{Q}^1 \setminus \mathcal{W}^1 \right]\right) = 0
    \]
    }
        \item[] \textbf{Case 1:}     Assume that $Q^1 = Q^2$ and $W^1= W^2$. Then it holds that, 
        \footnotesize{
    \[
    \dim\left(\mathcal{W}^v \cap \left[\mathcal{Q}^1 \setminus (\mathcal{W}^1\cup \mathcal{W}^{1,2} \right]\right) = 0
    \]
    }
    Moreover, it holds that,
    \footnotesize{
    \[
    \hat{\Theta}_{TA}\in\mathop{Span}(\mathcal{W}^1\cap \mathcal{W}^{1,2})
    \]
    }
Thus, with equal capacity, no information is lost in Task Arithmetic aided knowledge ensemble distillation and capacity is efficiently used to model the data from both prototype 1 and prototype 2. 
        \item[] \textbf{Case 2:}     Assume that $Q^1 > Q^2$ and $W^1> W^2$. Then it again holds that,
        \footnotesize{
    \[
    \dim\left(\mathcal{W}^v \cap \left[\mathcal{Q}^1 \setminus (\mathcal{W}^1\cup \mathcal{W}^{1,2} \right]\right) = 0
    \]
    }
However, while $\hat{\Theta}_{TA}\in\mathop{Span}(\mathcal{W}^1)$, it holds that $\dim\left(\mathcal{W}^v\cap \mathcal{W}^{1,2}\right)=W^2<W^{1,2}$. 
        \end{enumerate}
\end{proposition}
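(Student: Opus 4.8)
The plan is to rewrite the TAKFL update~\eqref{eq:taskupdate} as an explicit displacement between the solution submanifolds of Table~\ref{tab:notations}, and then read off all three claims from the Task Arithmetic Property~\eqref{eq:taskarit} together with the weight-disentanglement decomposition, while reusing the rank/surjectivity bookkeeping from the proof of Proposition~\ref{prop1:app}. It is enough to take $M=2$ with prototype $1$ as the student; for more prototypes one applies the same argument to each pair $(1,i)$ and sums the disentangled pieces.

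First I would pin down the two task vectors. By Assumption~\ref{as:local} the ensembles collapse, $\cT_i=\{f^i(\cdot,\btheta^i_{avg})\}$, with $\btheta^i_{avg}\in\bTheta^i$ exact on $\mathcal{D}^i$. Distilling $\cT_1$ into student $1$ is then pure self-distillation: both the KD term in~\eqref{eq:overall} and the self-regularizer~\eqref{eq:sg} are already minimized at $\btheta^1_{avg}$, so the distilled weights stay in $\bTheta^1$ and $\btau_1\in\mathop{Span}(\mathcal{W}^1)$; I would absorb $\lambda_1\btau_1$ by simply re-choosing the base point inside $\bTheta^1$, which affects nothing downstream. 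For $\cT_2$ the distillation target is the exact $\mathcal{D}^2$-logit, so the distilled weights lie in $\bTheta^{1,2}$, and the self-regularizer~\eqref{eq:sg} selects the point of $\bTheta^{1,2}$ closest to $\btheta^1_{avg}$; hence $\btau_2$ is the minimal-displacement bridge $\bTheta^1\to\bTheta^{1,2}$ and $\btheta_{merged}=\btheta^1_{avg}+\lambda_2\btau_2$.

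Next I would apply the Task Arithmetic Property~\eqref{eq:taskarit} to $f^1$ at $\btheta_{merged}$: on $\mathcal{D}^1\setminus\mathcal{D}^2$ the merged model equals $f^1(\cdot,\btheta^1_{avg})$, hence is exact on $\mathcal{D}^1$; on $\mathcal{D}^2$ it equals $f^1(\cdot,\btheta^1_{avg}+\lambda_2\btau_2)$ with $\btheta^1_{avg}+\lambda_2\btau_2\in\bTheta^{1,2}$, hence exact on $\mathcal{D}^2$. Up to the overlap $\mathcal{D}^1\cap\mathcal{D}^2$, dismissed by the same genericity/measure-zero caveat used in Proposition~\ref{prop1:app}, this forces $\hat{\Theta}^1_{TA}\subseteq\bTheta^1$, so $\mathcal{W}^v\subseteq\mathcal{W}^1$, which gives the preliminary claim $\dim(\mathcal{W}^v\cap[\mathcal{Q}^1\setminus\mathcal{W}^1])=0$ and, since nothing left $\bTheta^1\cup\bTheta^{1,2}$, also $\dim(\mathcal{W}^v\cap[\mathcal{Q}^1\setminus(\mathcal{W}^1\cup\mathcal{W}^{1,2})])=0$ in every case. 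For Case 1 ($Q^1=Q^2$, $W^1=W^2$) the induced map $\mathcal{O}(2,1)$ on logit manifolds is generically bijective, so teacher $2$ locks in a full complement of $\mathcal{D}^2$-directions, the merged point sits in $\bTheta^1\cap\bTheta^{1,2}$, and its tangent space is $\mathop{Span}(\mathcal{W}^1\cap\mathcal{W}^{1,2})$. For Case 2 --- the final statement ($Q^1>Q^2$, $W^1>W^2$) --- I would reuse the observation from the proof of Proposition~\ref{prop1:app} that $\mathcal{O}(2,1)$ is now injective but not surjective: teacher $2$ carries only $W^2$ independent degrees of freedom, so the distillation locks in an image of dimension $W^2<W^{1,2}$ inside $\mathcal{W}^{1,2}$; the informative rank of $\btau_2$ is therefore $W^2$, the remaining $W^{1,2}-W^2$ directions of $\mathcal{W}^{1,2}$ are never pinned down, and combining with $\hat{\Theta}_{TA}\in\mathop{Span}(\mathcal{W}^1)$ yields $\dim(\mathcal{W}^v\cap\mathcal{W}^{1,2})=W^2<W^{1,2}$. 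Crucially, in contrast with VED (Proposition~\ref{prop1:app}, Case 2), $\btau_2$ enters as a single structured direction rather than a diluted average, so no capacity is allocated outside $\mathcal{W}^1\cup\mathcal{W}^{1,2}$ --- precisely the informal claims (a)--(c).

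The hard part will be justifying that the FL-derived vectors $\btau_1,\btau_2$ actually satisfy the Task Arithmetic Property~\eqref{eq:taskarit}/weight disentanglement. In the centralized setting this is a structural property of the fine-tuning tangent kernel~\cite{ortiz2024task}; here I must argue that the analogy ``$\btheta_{avg}$ plays the role of pre-trained weights and the self-regularized distilled weights that of fine-tuned weights'' preserves it --- in particular that the regularizer~\eqref{eq:sg} is exactly what makes $\btau_2$ the minimal-displacement, disentangled task vector --- and, separately, that $\mathcal{O}(2,1)$ generically fails surjectivity precisely when $Q^1>Q^2$. Once those two structural facts are in place, everything else is routine linear algebra on intersections of the eigenbases in Table~\ref{tab:notations}, carried by Assumption~\ref{as:local} and the measure-zero caveats already invoked in Proposition~\ref{prop1:app}.
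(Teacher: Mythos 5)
Your proposal is correct and follows essentially the same route as the paper: invoke weight disentanglement to get the containment statements $\dim(\mathcal{W}^v\cap[\mathcal{Q}^1\setminus(\mathcal{W}^1\cup\mathcal{W}^{1,2})])=0$ and the intersection claim in Case 1, then reuse the injective-but-not-surjective behavior of $\mathcal{O}(2,1)$ from Proposition~\ref{prop1:app} to conclude $\dim(\mathcal{W}^v\cap\mathcal{W}^{1,2})=W^2<W^{1,2}$ in Case 2. The ``hard part'' you flag --- justifying that the FL-derived task vectors actually satisfy weight disentanglement --- is not discharged in the paper either; it simply takes the property as given from the centralized-learning literature, so your extra scaffolding on $\btau_1,\btau_2$ is, if anything, more careful than the paper's own argument.
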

\begin{proof}
We consider the case that the prototype $i$ and $j$ datasets be disjoint, i.e. $\mathcal{D}^i \cap \mathcal{D}^j = \emptyset$, and $\bigcup_{i=1}^{M} \mathcal{D^i} = \mathcal{D}$. We shall see how the other cases are easier and the result remain the same.

    We can see immediately from the weight disentanglement property of Task Arithmetic that,
    \[
    f^1(\bx;\btheta^1_{avg}+\alpha_1 \btau_1+\alpha_2 \btau_2) = g^{1,1}(\bx;\alpha_1\btau_1)+g^{1,2}(\bx;\alpha_2,\btau_2)+g^{1,0}(x)
    \]
    with $g^{1,1}(\bx;\alpha_1\btau_1)$ for $\bx\notin \mathcal{D}^1$, $g^{1,2}(\bx;\alpha_2\btau_2)$ for $\bx\notin \mathcal{D}^2$ and $g^{1,0}(\bx)=0$ for $\bx\in\mathcal{D}^1\cup\mathcal{D}^2$. From this, we can immediately conclude the first statement of the Proposition as well as the expression
       \[
    \dim\left(\mathcal{W}^v \cap \left[\mathcal{Q}^1 \setminus (\mathcal{W}^1\cup \mathcal{W}^{1,2} \right]\right) = 0
    \]
  and also, in the case of $W^1=W^{1,2}=W^2$ implies 
    \[
    \hat{\Theta}_{TA}\in\mathop{Span}(\mathcal{W}^1\cap \mathcal{W}^{1,2})
    \]

    For the last statement we observe again as in the second Case in the Proposition describing VED, $\dim\left(\mathcal{O}(2,1)(\mathcal{W}^{2})\right)<W^{1,2}$ from which we can conclude that, generically
    \[
    \dim\left(\left\{v:v\in \mathcal{O}(2,1)(\mathcal{W}^{2})\subseteq \mathcal{W}^v,\, \E_{(\bx,y)\sim \mathcal{D}^2}l(f^1(\bx;v),y) > 0\right\}\right) = W^{1,2}-W^2 
    \]
    proving the final statement.
\end{proof}
We observe that a key mechanism of the proof is the dimension of the target space of the teaching operator $\mathcal{O}(i,j)$. As an informative model, we can consider coefficients $\lambda_j$ of task vectors as restricting the rank, relative to other teachers. For instance, in the previous Proposition, if $W^{1,2}=2$ and $W^2=1$, then $\lambda_2=1/2$, so as to enforce one vector of $\mathcal{W}^{1,2}$ is a target for the map $\tilde{O}(2,1)$, would be appropriately sensible.

The prototype $i$ and $j$ datasets are fully overlapping, i.e. $\mathcal{D}^i \subseteq \mathcal{D}^j$, meaning that the two prototypes have the same classes $\mathcal{D}_c, \forall c \in D$, then these are the changes to the the proposition 1 and 2 in our theory.

Consider that in this case, the logits corresponding to $\mathcal{W}^{1,1}$ and $\mathcal{W}^{1,2}$ are the same. As such $\mathcal{W}^{1,2}\in \mathcal{Q}^1$, and so the propositions change to, trivially:

\begin{proposition}\emph{(Information Loss in VED).}     Consider the VED procedure. Consider two device prototypes with a device capacity  and solution dimension of $Q^1,Q^2$ and $W^1,W^2$, respectively, and with associated eigenbases $\mathcal{Q}^i,\mathcal{W}^i$. Let the solution set of VED with prototype $i$  as student be $\hat{\Theta}^i_{VED}$ 
    with $\dim(\hat{\Theta}^i_{VED})=W^{v_i}$ with eigenbasis $\mathcal{W}^{v_i}$. In addition, denote $W^{s,t},\,s,t\in\{1,2\}$ the dimension of the solution set for the student model trained on the data from the teacher device's ensembles. We assume that self-distillation is executed appropriately, e.g., $W^{1,1}=W^1$ and $W^{2,2}=W^2$.
    \begin{enumerate}    \item \textbf{Case 1:}     Assume that $Q^1 = Q^2$ and $W^1= W^2=W^{1,2}=W^{2,1}$. Then it holds that, in expectation,
        \footnotesize{
    \[
    \dim\left(\hat{\Theta}^1_{VED} \cap \left[\mathcal{Q}^1 \setminus \mathcal{W}^1 \right]\right) = 0
    \]
    }
This corresponds to the expected capacity of prototype 1 that is taken up for fitting logits that  do not fit the data corresponding to prototype $1$.
        \item \textbf{Case 2:}     Assume that $Q^1 > Q^2$ and $W^1=W^{1,2}> W^2$. Then the same quantity as for Case 1 holds. Moreover,
        \footnotesize{
        \[        
    \dim\left(\hat{\Theta}_{VED} \cap \left[\mathcal{Q}^1 \setminus (\mathcal{W}^1\cup \mathcal{W}^{1,2} \right]\right) = 0
        \]
}
This corresponds to capacity of client $1$ that has been allocated but fits, in the model of prototype 1, neither the data of prototype 1, nor of the data of prototype 2.
\end{enumerate}
\end{proposition}


Now consider that the prototype $i$ and $j$ datasets are partially overlapping, i.e. $\mathcal{D}^i \cap \mathcal{D}^j \neq \emptyset$, meaning that the two prototypes have the overlap over some of the classes $\mathcal{D}_c$, then these are the changes to the the proposition 1 and 2 in our theory.

In this case, $\mathcal{W}^{1,2}=\mathcal{W}^{1,2}_1\cup\mathcal{W}_2^{1,2}$, that is, the class labels across data prototypes partially overlap (with $W^{1,2}_1$ the overlapping dimensionality), the Proposition for VED changes to:


\begin{proposition}\emph{(Information Loss in VED).}     Consider the VED procedure. Consider two device prototypes with a device capacity  and solution dimension of $Q^1,Q^2$ and $W^1,W^2$, respectively, and with associated eigenbases $\mathcal{Q}^i,\mathcal{W}^i$. Let the solution set of VED with prototype $i$  as student be $\hat{\Theta}^i_{VED}$ 
    with $\dim(\hat{\Theta}^i_{VED})=W^{v_i}$ with eigenbasis $\mathcal{W}^{v_i}$. In addition, denote $W^{s,t},\,s,t\in\{1,2\}$ the dimension of the solution set for the student model trained on the data from the teacher device's ensembles. We assume that self-distillation is executed appropriately, e.g., $W^{1,1}=W^1$ and $W^{2,2}=W^2$.
    \begin{enumerate}    \item \textbf{Case 1:}     Assume that $Q^1 = Q^2$ and $W^1= W^2=W^{1,2}=W^{2,1}$. Then it holds that, in expectation,
        \footnotesize{
    \[
    \dim\left(\hat{\Theta}^1_{VED} \cap \left[\mathcal{Q}^1 \setminus \mathcal{W}^1 \right]\right) = O\left( \frac{(Q^1-W^1)(W^1)!(Q^1-W_2^{1,2})!}{Q^1!(W^1)!(Q^1-W^1)!+Q^1!W_2^{1,2}!(Q^1-W_2^{1,2})!} \right) 
    \]
    }
This corresponds to the expected capacity of prototype 1 that is taken up for fitting logits that  do not fit the data corresponding to prototype $1$.
        \item \textbf{Case 2:}     Assume that $Q^1 > Q^2$ and $W^1=W^{1,2}> W^2$. Then the same quantity as for Case 1 holds. Moreover,
        \footnotesize{
        \[        
    \dim\left(\hat{\Theta}_{VED} \cap \left[\mathcal{Q}^1 \setminus (\mathcal{W}^1\cup \mathcal{W}^{1,2} \right]\right) = O\left(\frac{(Q^1-W^1)(W^1!)(W_2^{1,2}-W^2)!}{Q^1!W^1!(Q^1-W^1)!+Q^1!W^2!(W_2^{1,2}-W^2)!}\right)
        \]
}
This corresponds to capacity of client $1$ that has been allocated but fits, in the model of prototype 1, neither the data of prototype 1, nor of the data of prototype 2.
\end{enumerate}
\end{proposition}

Note that the presence of overlapping logits $W^{1,2}_1$ do not appear in the result, it is exactly the same as the original Proposition 1 in the original paper just with $W_2^{1,2}$ in place of $W^{1,2}$.

The Propositions for TAKFL remain the same.

\subsection{Geometric Intuition}

 \begin{figure}[t]
 \includegraphics[scale=0.34]{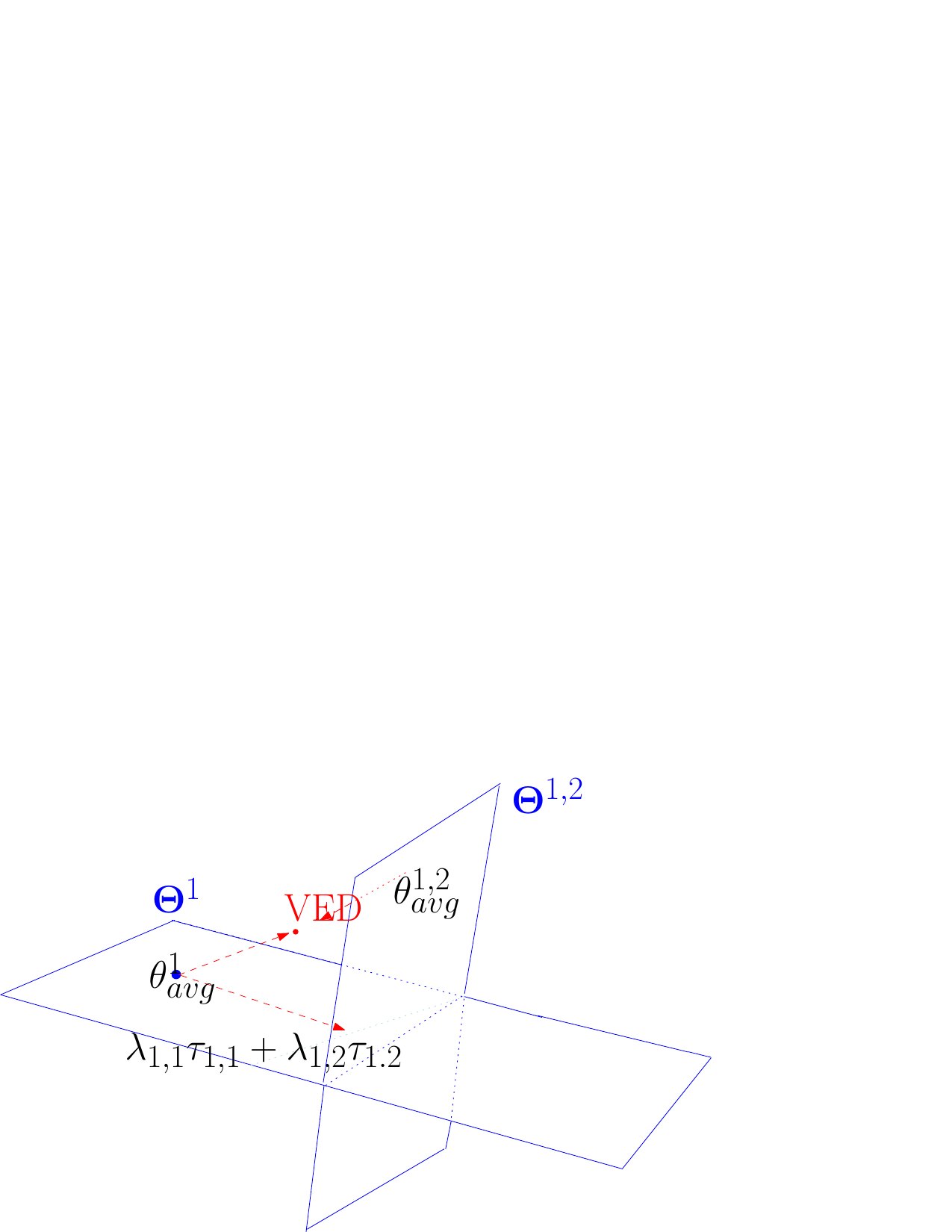}
 \includegraphics[scale=0.34]{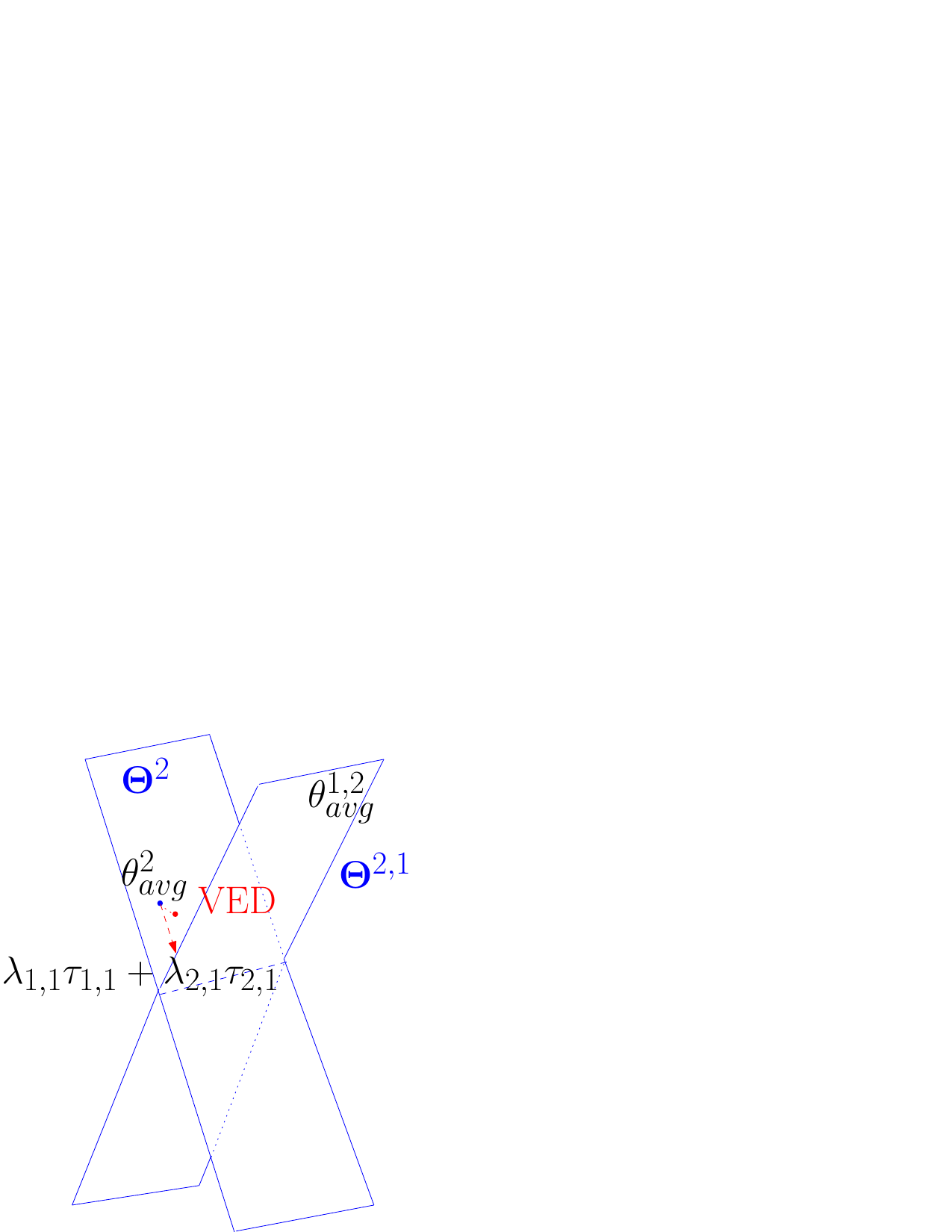}
 \includegraphics[scale=0.34]{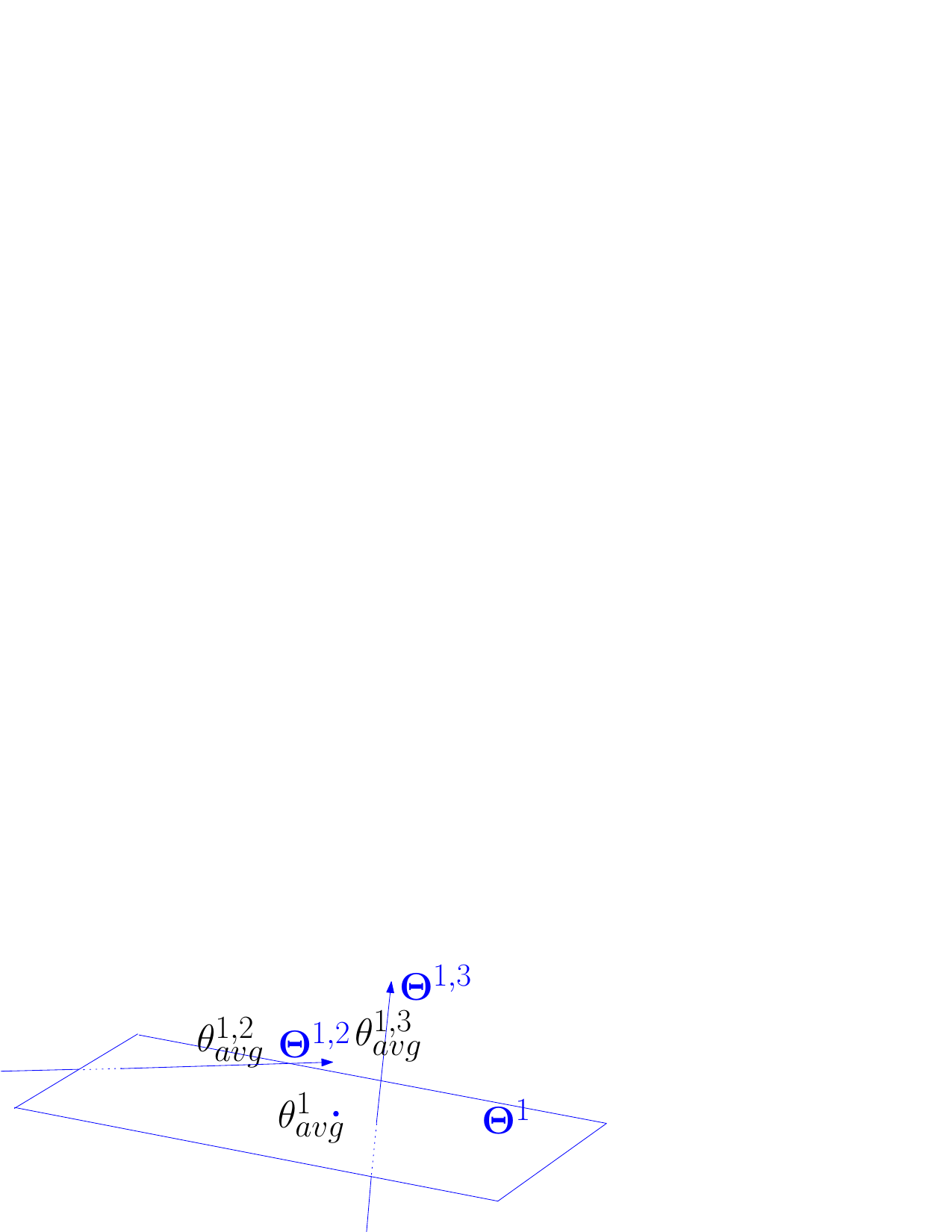}
 \caption{\textbf{Illustration of Geometric Intuition}. Each panel presents a different case example. The left and center panels present the geometric intuition of KD for vanilla ensemble distillation (VED) and TAKFL in the case where two different large device prototypes performing knowledge transfer. The planes represent the solution subspaces. The right panel presents a circumstance by which two small device prototypes (2, and 3) serve as teacher for transferring knowledge to a larger device prototype 1.}
 \label{fig:illus}
 \end{figure}


In this section we aim to provide geometric intuition for the mechanism of VED and Task Arithmetic KD on three different cases. Figure~\ref{fig:illus} presents the geometry illustration for three  different cases. We discuss each case in the following. 


\noindent\textbf{Case I: KD between two large prototypes with different data distributions.} Consider Figure~\ref{fig:illus} left panel. This panel corresponds to a setting where two large device prototypes with similar total capacity, i.e. $Q^1=Q^2=3$ perform knowledge transfer. We consider the solution dimensions of both prototypes to be the same, i.e. $W^1=W^2=2$. These would correspond to planes in the ambient space. Therefore, one plane corresponds to the solution subspace of the prototype 1 trained on its own data, i.e. $\bTheta^1$ subspace in the panel, and the other corresponds to the (theoretical) solution subspace of this prototype trained on prototype 2's data, i.e. $\bTheta^{(1,2)}$ in the panel.
In this case, since the data distributions of the prototypes are fairly disparate, this has resulted into near orthogonal subspaces corresponding to these solutions. As we can see from the panel, VED will lead to point which is far away from either of the planes corresponding to optimal solution subspaces, and far from the optimal set of parameters, which is their intersection, suggesting a loss of knowledge. By contrast, the TAKFL approach, by customizing the merging coefficients and putting each to half, i.e. $\lambda_{1,1} \lambda_{1,2} = 0.5$, can traverse in the tangent space of zero loss surface and get into the intersection subspace which is exactly the optimal solution ($\btheta^*_{merged} = \btheta^1_{avg}+ \lambda_{1,1}\btau_{1,1} + \lambda_{1,2}\btau_{1,2}$).

\noindent\textbf{Case II: KD between two large prototypes with similar data distributions.} Consider Figure~\ref{fig:illus} center panel. Similar to Case I, this panel corresponds to a setting where two large device prototypes with similar total capacity, i.e. $Q^1=Q^2=3$ performing knowledge transfer. We consider the solution dimensions of both prototypes to be the same, i.e. $W^1=W^2=2$. These would correspond to planes in the ambient space. Therefore, one plane corresponds to the solution subspace of the prototype 1 trained on its own data, i.e. $\bTheta^1$ subspace in the panel, and the other corresponds to the (theoretical) solution subspace of this prototype trained on prototype 2's data, i.e. $\bTheta^{(1,2)}$ in the panel. In this case, since the data distributions of the prototypes are fairly close, this has resulted into non-orthogonal solution subspaces. As we can see from the panel, while VED could still lead to some information loss, by and large we expect straightforward KD. Our task arithmetic (TA) approach, again by customizing the merging coefficients $\lambda_{1,1} \lambda_{1,2} = 0.5$, can traverse in the tangent space of zero loss surface on each plane and get into the intersection subspace, corresponding to the most efficient allocation of device prototype capacity for fitting simultaneously the logits corresponding to accurate modeling of device prototype 1 as well as device prototype 2's data distribution.

\noindent\textbf{Case III: KD between two small prototypes and one large prototype.} Now consider the right panel in Figure~\ref{fig:illus}. This panel corresponds to a setting where two small prototypes serve as teachers and one large prototype is the student. The $\btheta_{avg}^1$ plane corresponds to the solution subspace of the large prototype 1 on its own data, $\mathcal{D}^1$. The line $\btheta_{avg}^{1,2}$ line corresponds to the subspace of solutions in prototype 1's parameter space projected into the capacity of the information transferred from device prototype 2. Finally, the line labelled $\btheta^{1,3}_{avg}$ corresponds to the subspace of solutions in prototype 1's parameter space projected into the capacity of the information transferred from device prototype 3. Here, we can see from the relative angle of the lines with respect to the plane that the distribution $\mathcal{D}^1$ is closer to the distribution $\mathcal{D}^2$ than to $\mathcal{D}^3$. Comparing this case to the previous cases, $\btheta_{avg}^{1,3}$ is like case I and $\btheta_{avg}^{1,2}$ is like case II. We can apply the same conclusions here as well regarding the performance of vanilla ensemble distillation and our adaptive task arithmetic approach. We can see from the geometric visualization that knowledge distillation towards $\btheta_{avg}^{1,3}$ has more margin of error for prototype 1. Therefore, with the TAKFL approach large prototype 1 can strategically select which prototype to learn more from, and since $\btheta_{avg}^{1,2}$ has closer data distribution to prototype 1, TAKFL will prioritizes this by putting a larger merging coefficient, i.e $\lambda_{1,2} > \lambda_{1,3}$. By contrast, VED lacks this customization and results in sub-optimal knowledge distillation. 

The geometric intuition discussed here is consistent with our detailed experimental analysis in~\ref{sec:app-analysis}.

\subsection{Analytical Properties of Learning Dynamics}
Here we provide additional insights from the literature as to the nature and properties of learning as it takes place on overparametrized models. Specifically, we comment on literature in the area of Stochastic Differential Equation (SDE) models for SGD training dynamics, and its correspondence to the results above.
Overparametrization has been conjectured to be a significant factor in contributing to the (unexpected, by classical bias-variance tradeoffs) generalization ability of deep neural networks, from a number of perspectives~\cite{fang2021mathematical}. 


Consider the diffusion model of SGD training for overparametrized NNs provided in~\cite{li2021happens}. Their analysis relies on the following two assumptions. 
For our purposes $L$ is shorthand for a client group's loss, $L(\btheta) = \sum\limits_{k\in \C^j} 
\mathbb{E}_{(\bx,y)\sim \D^j_k}\left[ \ell (f^j(\bx;\btheta),y) \right]$ for some $j$, which will be identified from the context. 

\begin{assumption}\label{as:zerolossas1}
$L:\mathbb{R}^Q\to \mathbb{R}$ is $C^3$ and the solution set $\Gamma$ is a $W$-dimensional $C^2$-submanifold of $\mathbb{R}^D$ for $0\le W\le D$ and $\text{rank}(\nabla^2 L(\theta))=Q-W$
    \end{assumption}
    \begin{assumption}\label{as:zerolossas2}
    Assume that $U$ is an open neighborhood of $\Gamma$ satisfying that gradient flow starting in $U$ converges to some point in $\Gamma$
\end{assumption}

From these, \cite{li2021happens} derives Theorem 4.6. This Theorem decomposes the random process of the parameter weights driven by SGD after it has reached the solution manifold, e.g., the diffusive random walk of $\theta^1_{avg}$ in Figure~\ref{fig:illus} along its respective solution manifold. 
\begin{theorem}
    Given $L$, $\Gamma$ and $\theta_{\mu}(0)=\theta(0)\in U$ as by Assumptions~\ref{as:zerolossas1} and~\ref{as:zerolossas2} the SDE modeling the optimization of $F$ by SGD, that is, defining $\Phi(X)$ to be the gradient flow applied to the state random variable $X$,
    then for $T$ as long as $\mathbb{P}[Y(t)\in U,\forall 0\le t\le T]=1$,
    $\theta_{\eta}(\lfloor T/\eta^2\rfloor$ converges in distribution to the stochastic process $Y(T)$ as $\eta\to 0$, with $Y(t)$ given as,
    \begin{equation}\label{eq:sdesoln}
    \begin{array}{l}
        dY(t) = \Sigma^{1/2}_{\parallel}(Y) dW(t)-\frac 12 \nabla^2 L(Y)^{\dagger}\partial^2(\nabla L)(Y)[\Sigma_{\parallel}(Y)]dt \\ \qquad -\frac 12 \partial\Phi(Y)\left(2\partial^2(\nabla L)(Y)\left[\nabla^2 L(Y)^{\dagger}\Sigma_{\perp,\parallel}(Y)\right]+\partial^2(\nabla L)(Y)\left[\mathcal{L}^{-1}_{\nabla^2 L}(\Sigma_{\perp}(Y))\right]\right)dt
        \end{array}
    \end{equation}
    where $\Sigma\equiv \sigma\sigma^T$ and $\Sigma_{\parallel}$, $\Sigma_{\parallel}$, $\Sigma_{\parallel}$ are given as,
    \begin{equation}\label{eq;sgdsolnnoise}
    \begin{array}{l}
        \Sigma_{\parallel}=\partial \Phi \Sigma \partial \Phi, \,\Sigma_{\perp}=(I_D-\partial \Phi)\Sigma (I_D-\partial \Phi), \\
        \Sigma_{\parallel,\perp} = \partial\Phi \sigma (I_D-\partial\Phi)
        \end{array}
    \end{equation}
\end{theorem}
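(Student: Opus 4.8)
The plan is to follow the two-timescale (homogenization) strategy for SGD confined to the attracting neighborhood $U$ of the solution manifold $\Gamma$. Writing the iteration as $\btheta_{k+1} = \btheta_k - \eta \nabla L(\btheta_k) - \eta \xi_k$, where $\xi_k$ is the mean-zero minibatch noise with conditional covariance $\Sigma(\btheta_k) = \sigma\sigma^T$, I would first set up a tubular-neighborhood (normal bundle) coordinate system around $\Gamma$: a point in $U$ is represented by its gradient-flow limit $\Phi(\btheta)\in\Gamma$ together with a transverse coordinate in the normal space. Assumption~\ref{as:zerolossas1} (rank $\nabla^2 L = Q-W$ on $\Gamma$) makes the loss locally strongly convex in the transverse directions, so iterating the $O(\eta)$ step drives the transverse component to a quasi-stationary Ornstein–Uhlenbeck law whose covariance $\Sigma_\perp^{\mathrm{eq}}$ solves the Lyapunov equation $\nabla^2 L\,\Sigma_\perp^{\mathrm{eq}} + \Sigma_\perp^{\mathrm{eq}}\,\nabla^2 L = \Sigma_\perp$ — the operator written $\mathcal{L}^{-1}_{\nabla^2 L}(\cdot)$ in~\eqref{eq:sdesoln}. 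Since the on-manifold component moves only $O(\eta^2)$ per step, the correct diffusive rescaling is $t = k\eta^2$, matching the claim $\btheta_\eta(\lfloor T/\eta^2\rfloor) \Rightarrow Y(T)$.

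Next I would Taylor-expand $\nabla L$ to third order about $\Phi(\btheta_k)$. The zeroth-order term vanishes on $\Gamma$; the first-order term contracted with the transverse fluctuation produces the fast OU dynamics; and the quadratic term $\partial^2(\nabla L)[\cdot,\cdot]$, evaluated against the quasi-stationary second moments of the fast variable, yields a nonzero effective drift along $\Gamma$. Expanding the projection map $\Phi$ to second order as well — its differential $\partial\Phi$ is the orthogonal projection onto $T\Gamma$, and its second derivative supplies the $\partial\Phi(\cdot)$-corrections in~\eqref{eq:sdesoln} — and substituting the decomposition $\Sigma_\parallel = \partial\Phi\,\Sigma\,\partial\Phi$, $\Sigma_\perp = (I-\partial\Phi)\Sigma(I-\partial\Phi)$, $\Sigma_{\parallel,\perp} = \partial\Phi\,\Sigma\,(I-\partial\Phi)$ as in~\eqref{eq;sgdsolnnoise}, the accumulated displacement of $\Phi(\btheta_k)$ over $\lfloor t/\eta^2\rfloor$ steps should reproduce term by term the drift $-\tfrac12 \nabla^2 L^\dagger \partial^2(\nabla L)[\Sigma_\parallel]$ together with the two $\partial\Phi(\cdot)$-terms, while the martingale part accumulates to the Brownian integral with diffusion coefficient $\Sigma_\parallel^{1/2}$.

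To make the limit rigorous I would use the martingale-problem / weak-convergence route: define the generator $\mathcal{A}$ of the candidate diffusion $Y$ given by~\eqref{eq:sdesoln}--\eqref{eq;sgdsolnnoise}, and show that for smooth compactly supported test functions $\varphi$ the discrete process $\varphi(\btheta_{\lfloor t/\eta^2\rfloor}) - \int_0^t (\mathcal{A}\varphi)(\btheta_{\lfloor s/\eta^2\rfloor})\,ds$ is a martingale up to an error that is $o(1)$ as $\eta\to 0$, uniformly on $[0,T]$; tightness (Aldous/Prokhorov criteria) plus uniqueness of the martingale problem then identify the limit. The $C^3$ regularity of $L$ on the compact set $\Gamma\cap\bar U$ supplies the uniform bounds on Taylor remainders, and the localization afforded by the hypothesis $\mathbb{P}[Y(t)\in U,\ \forall\, 0\le t\le T] = 1$ lets us replace all estimates by their localized-in-$U$ versions.

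The main obstacle I expect is the averaging step coupling the fast and slow variables: one must show the transverse fluctuations decorrelate on the $O(\eta)$ timescale quickly enough that, when summed over $\lfloor t/\eta^2\rfloor$ steps, they can be replaced by their quasi-stationary statistics, while controlling the back-reaction of the drifting base point $\Phi(\btheta_k)$ on those fast statistics. The cross term $\Sigma_{\parallel,\perp}$ and the resolvent expression $\mathcal{L}^{-1}_{\nabla^2 L}(\Sigma_\perp)$ in~\eqref{eq:sdesoln} arise precisely from a careful second-order treatment of this coupling; bounding the resulting error requires discrete Grönwall estimates together with exponential-relaxation (hence uniform spectral-gap) bounds on $\nabla^2 L$ restricted to the normal bundle over $\Gamma\cap\bar U$. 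That uniform gap is the technical linchpin — the argument degrades if it is not bounded below, which is exactly why Assumption~\ref{as:zerolossas1} imposes the constant-rank condition.
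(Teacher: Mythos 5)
This theorem is not proved in the paper at all: it is quoted verbatim as Theorem 4.6 of the cited work of Li, Wang, and Arora (\emph{What happens after SGD reaches zero loss}), and the surrounding text explicitly states that the result is derived there from Assumptions~\ref{as:zerolossas1} and~\ref{as:zerolossas2}. So there is no in-paper proof to compare your attempt against. That said, your sketch is a faithful outline of how results of this type are actually established, and it is consistent in spirit with the route taken in the cited source: the fast transverse dynamics relax to a quasi-stationary Gaussian whose covariance is given by the Lyapunov solution $\mathcal{L}^{-1}_{\nabla^2 L}(\Sigma_{\perp})$, the slow motion of the projected point $\Phi(\btheta_k)$ lives on the $t = k\eta^2$ timescale, the second-order expansion of $\Phi$ (whose differential on $\Gamma$ is the tangential projection) generates the $\partial\Phi(\cdot)$ drift corrections and the cross term $\Sigma_{\parallel,\perp}$, and the limit is identified through the martingale problem with tightness. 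The cited proof reaches the same generator by invoking Katzenberger's theorem on limiting diffusions confined to an attracting invariant manifold rather than by carrying out the homogenization by hand, but the two are essentially equivalent; your direct route makes the origin of each term in the drift more transparent, at the cost of having to re-derive the averaging and decorrelation estimates that Katzenberger's theorem packages up.

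Be aware, though, that what you have written is a plan, not a proof: the decisive analytic steps --- the averaging lemma showing the transverse fluctuations can be replaced by their quasi-stationary statistics uniformly over $\lfloor T/\eta^2\rfloor$ steps, the control of the back-reaction of the drifting base point, the verification that the second-order Taylor data of $\Phi$ and $L$ combine to give exactly the three drift terms of the stated SDE, and the tightness argument --- are named but not executed. Since the statement is an imported black-box result in this paper, completing those steps is not expected here, but your proposal should not be read as a self-contained derivation of the displayed generator.
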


This theorem indicates that the asymptotic flow of SGD on the client training can be decomposed into a covariance-driven random walk in the tangent space, drift to preserve the flow into the tangent plane, the tangent-normal portion of the noise covariance  and noise in the normal direction. 

This analytical expression provides the probabilistic foundations for the more higher level theoretical results above. In particular, local gradient dynamics, as employed by individual device prototypes $j$ using FedAvg on its local clients, yields a flow for the stochastic process defined by the weights. At this point of learning, the weights are traversing the solution set, with noise predominantly in the tangent directions. Thus knowledge distillation which preserves this noise structure is going to be more effective as far as preserving accuracy across data. 

\section{Detailed Experimental Results}\label{sec:app-exp}
In this section we present a more detailed version of experimental results presented in the main paper Section~\ref{sec:main-exp}. Additional experimental results are also presented here. 

\subsection{Main Experimental Results on Computer Vision (CV) Task} \label{sec:app-main-cv}
\textbf{The experiments in this section complements the main experimental results in the main paper Section~\ref{sec:main-exp-results}.}

\noindent\textbf{Experimental Setup.}  For the evaluation on CV task, we employ CIFAR-10 and CIFAR-100~\cite{Krizhevsky09learningmultiple} datasets. For CIFAR-10, we use CIFAR-100 as the unlabeled public dataset, while ImageNet-100, a subset of ImageNet~\cite{5206848} with 100 classes (see Appendix~\ref{sec:app-cv-hp-datasets}), is used for CIFAR-100. We distribute the training dataset among the device prototypes in a ratio of 1:3:6 for S, M, and L, respectively. Each device prototype’s data portion is further distributed among its clients using a Dirichlet distribution. We apply two levels of data heterogeneity for a comprehensive evaluation: low heterogeneity, i.e. Dir(0.3), and high heterogeneity, i.e. Dir(0.1). Additionally, we configure the number of clients and their sampling rates as follows: 100 clients for S, 20 for M, and 4 for L, with sampling rates set at 0.1, 0.2, and 0.5 respectively. To comprehensively evaluate, we use two distinct architectural settings: the \emph{``homo-family''} setting, where all device prototypes' architectures are from the same family---employing ResNet8, ResNet14, and ResNet18~\cite{he2015deep} for S, M, and L, respectively; and the \emph{``hetero-family''} setting, which diverse architectures are used---ViT-S~\cite{dosovitskiy2021image} for S, ResNet14 for M, and VGG-16~\cite{simonyan2015deep} for L. All models are initialized from scratch, and the communication round is set at 60 rounds. Further details regarding hyper-parameters can be found in Table~\ref{tab:hp-cv}.

\noindent\textbf{Overview of Performance Results.} Table~\ref{tab:app-cv-performance} presents the performance of TAKFL across diverse architecture settings on the CIFAR-10 and CIFAR-100 datasets. TAKFL consistently improves all device prototypes of different sizes in various cases by a significant margin compared to the baselines, achieving SOTA performance. Notably, in the homo-family architecture setting with Dir(0.3) on CIFAR-10, TAKFL improves average performance across all prototypes by 8\%, and by 4\% on CIFAR-100. In the hetero-family settings with Dir(0.1) on CIFAR-10 and Dir(0.3) on CIFAR-100, TAKFL enhances performance by $\sim$3\% and 1\%, respectively. Furthermore, we observe that our self-regularization technique has successfully mitigated issues associated with the noisy and unsupervised ensemble distillation process, thereby enhancing performance. Generally, the performance gains from self-regularization are more pronounced in low data heterogeneity cases, where prototypes' models perform better and possess higher quality self-knowledge. Thus, self-regularization proves more effective as it preserves this higher quality self-knowledge.

\begin{table*}[t]
  \caption{\normalsize{\textbf{Performance Results for CV task on CIFAR-10 and CIFAR-100.}} \normalsize{Training data is distributed among S, M, and L device prototypes in a 1:3:6 ratio, subdivided among clients using Dirichlet distribution. Public datasets are CIFAR-100 for CIFAR-10 and ImageNet-100 for CIFAR-100. Client configurations include 100, 20, and 4 clients for S, M, and L, with sampling rates of 0.1, 0.2, and 0.5. In homo-family settings, architectures are ResNet8, ResNet14, and ResNet18; in hetero-family settings, they are ViT-S, ResNet14, and VGG-16. All models are trained from scratch for 60 rounds. See Appendix \ref{sec:app-cv-details} for more details.}}
  \label{tab:app-cv-performance}
  \centering
  \resizebox{0.99\textwidth}{!}{
  \midsepremove
  \begin{NiceTabular}{ll llll|llll}
    \toprule
     \multirow{3.2}{*}{Dataset} & \multicolumn{1}{l}{\multirow{3.2}{*}{Baseline}}& \multicolumn{8}{c}{Homo-family Architecture Setting} \\
    & & \multicolumn{4}{c}{Low Data Heterogeneity} & \multicolumn{4}{c}{High Data Heterogeneity}\\
    \cmidrule(lr){3-6}
    \cmidrule(lr){7-10}
    & & \multicolumn{1}{c}{S} & \multicolumn{1}{c}{M} & \multicolumn{1}{c}{L} & Average & \multicolumn{1}{c}{S} & \multicolumn{1}{c}{M} & \multicolumn{1}{c}{L} & Average\\
    \midrule
    \multirow{5}{*}{\makecell{CIFAR-10}} & FedAvg & $36.21_{\pm2.24}$ & $46.41_{\pm2.33}$ & $59.46_{\pm6.17}$ & $47.36$ & $22.01_{\pm0.78}$ & $25.26_{\pm3.89}$ & $51.51_{\pm3.52}$ & $32.93$\\ 
    & FedDF & $49.31_{\pm0.15}$ & $50.63_{\pm0.73}$ & $49.82_{\pm0.98}$ & $49.92$ & $34.71_{\pm1.48}$ & $35.27_{\pm4.74}$ & $51.08_{\pm4.04}$ & $40.35$\\
    & FedET & $49.21_{\pm0.72}$ & $55.01_{\pm1.81}$ & $53.60_{\pm6.47}$ & $52.61$ & $29.58_{\pm3.00}$ & $30.96_{\pm4.70}$ & $45.53_{\pm6.46}$ & $35.36$\\
    \rowcolor{lightcyan} \cellcolor{white} & TAKFL & $55.90_{\pm1.70}$ & $57.93_{\pm3.49}$ & $60.58_{\pm 2.35}$ & $58.14$ & $37.40_{\pm1.68}$ & $38.96_{\pm0.17}$ & $51.49_{\pm6.15}$ & $42.61$\\
    \rowcolor{lightcyan} \cellcolor{white} & TAKFL+Reg & $\bf{56.37_{\pm0.46}}$ & $\bf{58.60_{\pm0.43}}$ & $\bf{65.69_{\pm1.28}}$ & $\bf{60.22}$ & $\bf{40.51_{\pm1.05}}$ & $\bf{40.12_{\pm1.24}}$ & $\bf{53.24_{\pm2.51}}$ & $\bf{44.62}$\\
    \midrule
    \multirow{5}{*}{\makecell{CIFAR-100}} & FedAvg & $13.22_{\pm0.14}$ & $21.39_{\pm1.11}$ & $29.47_{\pm0.86}$ & $21.36$ & $11.86_{\pm0.08}$ & $14.63_{\pm0.65}$ & $26.25_{\pm1.64}$ & $17.58$\\
    & FedDF & $19.54_{\pm0.20}$ & $24.32_{\pm0.45}$ & $29.29_{\pm1.45}$ & $24.38$ & $16.09_{\pm0.32}$ & $19.80_{\pm0.17}$ & $26.59_{\pm0.25}$ & $20.83$\\
    & FedET & $19.67_{\pm0.35}$ & $25.27_{\pm0.66}$ & $31.10_{\pm1.53}$ & $25.35$ & $11.18_{\pm1.68}$ & $18.22_{\pm0.35}$ & $26.40_{\pm0.65}$ & $18.60$\\
    \rowcolor{lightcyan} \cellcolor{white} & TAKFL & $24.48_{\pm0.42}$ & $27.60_{\pm0.25}$ & $29.84_{\pm0.94}$ & $27.31$ & $\bf{22.90_{\pm0.18}}$ & $23.63_{\pm0.72}$ & $26.98_{\pm0.13}$ & $24.50$\\
    \rowcolor{lightcyan} \cellcolor{white} & TAKFL+Reg & $\bf{27.18_{\pm0.27}}$ & $\bf{29.14_{\pm0.20}}$ & $\bf{31.15_{\pm0.97}}$ & $\bf{29.16}$ & $22.88_{\pm0.37}$ & $\bf{23.92_{\pm0.57}}$ & $\bf{28.01_{\pm0.34}}$ & $\bf{24.94}$\\
    \midrule
     \multirow{3.2}{*}{Dataset} & \multicolumn{1}{l}{\multirow{3.2}{*}{Baseline}}& \multicolumn{8}{c}{Hetero-family Architecture Setting} \\
    & & \multicolumn{4}{c}{Low Data Heterogeneity} & \multicolumn{4}{c}{High Data Heterogeneity}\\
    \cmidrule(lr){3-6}
    \cmidrule(lr){7-10}
    & & \multicolumn{1}{c}{S} & \multicolumn{1}{c}{M} & \multicolumn{1}{c}{L} & Average & \multicolumn{1}{c}{S} & \multicolumn{1}{c}{M} & \multicolumn{1}{c}{L} & Average\\
    \midrule
    \multirow{5}{*}{\makecell{CIFAR-10}} & FedAvg & $27.53_{\pm0.83}$ & $47.30_{\pm3.17}$ & $55.10_{\pm8.60}$ & $43.31$ & $20.93_{\pm1.54}$ & $25.62_{\pm6.04}$ & $36.80_{\pm5.47}$ & $27.78$\\
    & FedDF & $34.15_{\pm0.87}$ & $54.06_{\pm1.06}$ & $69.07_{\pm4.99}$ & $52.43$ & $24.20_{\pm0.74}$ & $34.07_{\pm3.08}$ & $39.81_{\pm5.45}$ & $32.69$\\
    & FedET & $33.24_{\pm1.27}$ & $58.86_{\pm0.94}$ & $65.56_{\pm3.49}$ & $52.55$ & $24.37_{\pm1.26}$ & $37.77_{\pm4.71}$ & $43.64_{\pm3.36}$ & $35.26$\\
    \rowcolor{lightcyan} \cellcolor{white} & TAKFL & $33.29_{\pm0.15}$ & $57.64_{\pm0.19}$ & $68.44_{\pm0.66}$ & $53.12$ & $24.92_{\pm1.32}$ & $38.07_{\pm3.19}$ & $48.01_{\pm3.99}$ & $37.00$\\
    \rowcolor{lightcyan} \cellcolor{white} & TAKFL+Reg & $\bf{33.34_{\pm3.36}}$ & $\bf{59.01_{\pm3.12}}$ & $\bf{70.22_{\pm4.40}}$ & $\bf{54.19}$ & $\bf{25.10_{\pm1.87}}$ & $\bf{38.81_{\pm5.36}}$ & $\bf{50.26_{\pm6.42}}$ & $\bf{38.06}$\\
    \midrule
    \multirow{5}{*}{\makecell{CIFAR-100}} & FedAvg & $8.51_{\pm0.37}$ & $22.11_{\pm0.58}$ & $37.91_{\pm2.60}$ & $22.84$ & $7.01_{\pm0.47}$ & $14.94_{\pm0.96}$ & $28.51_{\pm1.46}$ & $16.82$\\
    & FedDF & $10.46_{\pm0.17}$ & $23.46_{\pm0.65}$ & $36.81_{\pm0.82}$ & $23.58$ & $7.76_{\pm0.40}$ & $18.92_{\pm0.39}$ & $29.81_{\pm1.09}$ & $18.83$\\
    & FedET & $11.16_{\pm0.18}$ & $25.40_{\pm0.30}$ & $37.38_{\pm0.60}$ & $24.65$ & $8.20_{\pm0.54}$ & $20.66_{\pm0.50}$ & $28.95_{\pm1.79}$ & $19.27$\\
    \rowcolor{lightcyan} \cellcolor{white} & TAKFL & $10.29_{\pm0.11}$ & $27.14_{\pm0.89}$ & $\bf{39.15_{\pm0.88}}$ & $25.53$ & $7.88_{\pm0.68}$ & $21.41_{\pm0.37}$ & $31.31_{\pm0.66}$ & $20.20$\\    
    \rowcolor{lightcyan} \cellcolor{white} & TAKFL+Reg & $\bf{11.25_{\pm0.37}}$ & $\bf{27.86_{\pm0.86}}$ & $38.68_{\pm0.45}$ & $\bf{25.93}$ & $\bf{8.45_{\pm0.20}}$ & $\bf{22.16_{\pm0.87}}$ & $\bf{31.95_{\pm1.13}}$ & $\bf{20.85}$\\
    \bottomrule
  \end{NiceTabular}
  \midsepdefault
  }
  
\end{table*}

\subsubsection{Consistency Analysis: Experimental and Theoretical Correlations} \label{sec:app-analysis}

In this part, we elaborate on our key experimental observations and their alignment with our theoretical findings.

\noindent\textbf{Insight 1:} From Table~\ref{tab:app-cv-performance}, it is evident that prior KD-based methods show inconsistent performance across various device prototypes, particularly for the large (L) prototype. For instance, in the CIFAR-10 homo-family setting with Dir(0.3), while small (S) and medium (M) prototypes see performance gains, the L prototype experiences up to a $\sim$10\% performance decline compared to vanilla FedAvg, which lacks server-side knowledge distillation. This trend is consistent across other settings, such as CIFAR-10 Dir(0.1) homo-family and CIFAR-100 Dir(0.3) homo-family. These outcomes underline the dilution problem inherent in existing methods, where the valuable insights from larger, more capable device prototypes are overshadowed by less informative outputs from smaller devices, thereby degrading the performance of L prototypes. These empirical findings are supported by our theoretical insights as discussed in Remark~\ref{remark1}. Specifically, Proposition~\ref{prop1:app} illustrates that vanilla ensemble distillation (VED) leads to knowledge dilution and inaccuracies due to misaligned device capacity allocations. Moreover, this issue becomes more significant when the smaller device prototype serve as teacher.

\noindent\textbf{Insight 2:} From Table~\ref{tab:app-cv-performance}, the suboptimality of existing KD-based methods is evident from the significant performance improvements of our method, especially for S and M prototypes across various settings. This underscores the ineffectiveness of the one-size-fits-all approach these methods employ, where a single averaged logits distillation target is used for all device sizes, proving to be suboptimal. Our experimental observations regarding the shortcomings of vanilla ensemble distillation methods align with our theoretical findings, as substantiated in Remark~\ref{remark1} and~\ref{remark2}. It becomes evident that an efficient knowledge distillation process must allocate capacity in a manner that appropriately corresponds to the information value of the teacher ensemble prototypes.

\noindent\textbf{Insight 3}: Our experiments, detailed in Table~\ref{tab:app-cv-performance}, demonstrate TAKFL's adept handling of knowledge from various device prototypes under different data heterogeneity conditions. We observed consistent performance gains for small (S) and medium (M) prototypes across both low and high data heterogeneity, compared to vanilla FedAvg. However, in high heterogeneity settings, large (L) prototypes show less improvement, prompting the question: \emph{What can smaller device prototypes offer to larger ones?}

In low heterogeneity scenarios, large prototypes significantly benefit from the collective knowledge, showing enhanced performance. Conversely, in conditions of extreme heterogeneity, where smaller models contribute less effectively, the performance improvements for larger devices are notably reduced. This pattern highlights TAKFL's ability to intelligently manage and utilize the available knowledge, selectively distilling information based on the intrinsic capacity and contributions of each prototype, and integrating only the most valuable knowledge from smaller devices when beneficial.

By contrast, our analysis of existing KD-based methods shows their failure to effectively discern and utilize the most informative knowledge across prototypes. These methods often overload the capacity of larger prototypes with suboptimal or irrelevant information, particularly in high heterogeneity environments, leading to not just stagnation but an accumulation of inefficiencies. These experimental observations align with our theoretical insights, as outlined in Remark~\ref{remark2}, which emphasizes the crucial combinatorial constraint of capacity and diverse information. This further confirms the superiority of TAKFL's adaptive approach to knowledge distillation in diverse federated learning environments.

\subsection{Additional Experimental Results on CV Task} \label{sec:app-cv-add}
\noindent\textbf{Experimental Setup.} For additional evaluation of the CV task, we conducted experiments on TinyImageNet~\cite{Le2015TinyIV}, STL-10~\cite{pmlr-v15-coates11a}, and CINIC-10~\cite{darlow2018cinic10} datasets using pre-trained models. For TinyImageNet~\cite{Le2015TinyIV}, we utilized STL-10~\cite{pmlr-v15-coates11a} as the unlabeled public dataset. STL-10~\cite{pmlr-v15-coates11a} and CINIC-10~\cite{darlow2018cinic10} both employ CIFAR-100~\cite{Krizhevsky09learningmultiple} as their respective public datasets. We distributed the training datasets among the device prototypes in a 2:3:5 ratio for small (S), medium (M), and large (L) prototypes, respectively. The data portion for each prototype was further subdivided among its clients using a Dirichlet distribution: Dir(1.0) for TinyImageNet and Dir(0.3) for both STL-10 and CINIC-10. Client configurations were set with 4, 3, and 2 clients for S, M, and L, respectively, all with a sampling rate of 1.0. The architectures employed were MobileNetV3-Large~\cite{howard2017mobilenets} for S, MobileViTV2~\cite{mehta2022mobilevit} for M, and ResNet-34 for L, sourced from the TIMM library.\footnote{\url{https://github.com/huggingface/pytorch-image-models}} The local training was conducted over 10 epochs using an Adam~\cite{kingma2017adam} optimizer with a learning rate of 1e-3 and weight decay of 1e-5. For server-side distillation, the epoch count was 10 for TinyImageNet and 1 for STL-10 and CINIC-10, with a batch size of 128, employing an Adam optimizer with a learning rate of 1e-5 and weight decay of 1e-5. For TinyImageNet, public dataset images from STL-10 were resized to 64$\times$64, while for STL-10, images were resized to 32$\times$32. No data augmentation was used. The communication rounds is fixed to 40. These experiments were conducted by only 1 trial. Table~\ref{tab:app-cv-add-hp} details the configurations.

\noindent\textbf{Performance Results.} Table~\ref{tab:app-cv-pretrained} presents the results. The superiority of TAKFL's performance across these challenging datasets using pre-trained models is evident here as well. 

\begin{table*}[t]
  \caption{\textbf{Performance Results for CV task on TinyImageNet, STL-10, and CINIC-10 using pre-trained models.} }
  \label{tab:app-cv-pretrained}
  \centering
  \resizebox{0.7\linewidth}{!}{
  \midsepremove
  \begin{NiceTabular}{lll|cccc}
    \toprule
     Private & Public & Baseline & S& M & L & Average \\
    \midrule
    \multirow{5}{*}{TinyImageNet} & \multirow{5}{*}{STL-10} & FedAvg   & $8.97$& $13.03$& $15.12$& $12.37$ \\ 
                          &                           & FedMH          & $15.08$& $17.10$& $17.83$& $16.67$ \\ 
                          &                           & FedET          & $10.60$& $16.39$& $17.62$& $14.87$ \\  
    \rowcolor{lightcyan} \cellcolor{white} & \cellcolor{white} & TAKFL & $16.10$& $17.60$& $19.03$& $17.58$ \\ 
    \rowcolor{lightcyan} \cellcolor{white} & \cellcolor{white} & TAKFL+Reg & $\bf{16.55}$& $\bf{17.98}$& $\bf{19.74}$ & $\bf{18.09}$\\ 
                          \midrule
    \multirow{5}{*}{STL-10} & \multirow{5}{*}{CIFAR-100}  & FedAvg     & $26.01$& $34.47$& $42.88$& $34.45$ \\ 
                          &                           & FedMH          & $28.64$& $34.55$& $39.25$& $34.15$ \\  
                          &                           & FedET          & $29.87$& $33.00$& $38.26$& $33.71$ \\ 
    \rowcolor{lightcyan} \cellcolor{white} & \cellcolor{white} & TAKFL & $29.57$& $37.57$& $42.53$& $36.56$ \\ 
    \rowcolor{lightcyan} \cellcolor{white} & \cellcolor{white} & TAKFL+Reg & $\bf{30.78}$& $\bf{37.89}$& $\bf{43.38}$& $\bf{37.35}$ \\  
                          \midrule
    \multirow{5}{*}{CINIC-10} & \multirow{5}{*}{CIFAR-100}  & FedAvg   & $44.87$& $55.49$& $51.33$& $50.56$ \\ 
                          &                           & FedMH          & $45.52$& $55.75$& $53.48$& $51.58$ \\ 
                          &                           & FedET          & $46.31$& $57.43$& $53.01$& $52.25$ \\  
    \rowcolor{lightcyan} \cellcolor{white} & \cellcolor{white} & TAKFL & $\bf{48.21}$& $\bf{57.81}$& $52.74$& $\bf{52.92}$ \\ 
    \rowcolor{lightcyan} \cellcolor{white} & \cellcolor{white} & TAKFL+Reg & $47.66$& $57.54$& $\bf{53.25}$& $52.82$ \\ 
    \bottomrule
  \end{NiceTabular}
  \midsepdefault
  }
\end{table*}

\subsection{Additional Experimental Results on Natural Language Processing (NLP) Task} \label{sec:app-main-nlp}
\textbf{The experiments in this section complements the main experimental results in the main paper Section~\ref{sec:main-exp-results}.}

\noindent\textbf{Experimental Setup.} For the evaluation of NLP tasks, we utilize four datasets: MNLI~\cite{williams2018broadcoverage}, SST-2~\cite{socher-etal-2013-recursive}, MARC~\cite{marc_reviews}, and AG-news~\cite{Zhang2015CharacterlevelCN}. The corresponding unlabeled public datasets are SNLI~\cite{bowman2015large} for MNLI, Sentiment140~\cite{go2009twitter} for SST-2, Yelp~\cite{zhang2016characterlevel} for Amazon, and DBPedia~\cite{NIPS2015_250cf8b5} for AG-News. The training data is distributed among the device prototypes in a ratio of 1:3:6 for small (S), medium (M), and large (L) categories, respectively, with each portion further subdivided among its clients using a Dirichlet distribution (Dir(0.5)). The client configurations and their sampling rates are set as follows: 8, 4, and 2 clients for S, M, and L categories, respectively, with sampling rates of 0.3, 0.5, and 1.0. The architectures employed for each prototype size are BERT~\cite{turc2019} -Tiny, -Small, and -Mini, respectively, each initialized from pre-trained parameters and tested over 20 communication rounds. Additional details regarding hyper-parameters and datasets are presented in Appendix~\ref{sec:app-nlp-details} and Table~\ref{tab:app-nlp-hp}.

\noindent\textbf{Performance on NLP Task.} Table~\ref{tab:app-nlp-performance} presents the results on four different datasets: MNLI, SST-2, MARC, and AG-News. Similar to the CV task, TAKFL has consistently improved performance across all device prototypes of varying sizes, achieving state-of-the-art results. On MNLI, it has enhanced average performance across all prototypes by 3\%, on SST-2 by $\sim$2\%, on MARC by 3\%, and on AG-News by $\sim$1.50\%. As observed in the CV task, the suboptimality of existing KD-based methods is also evident here. Notably, FedET exhibits very poor performance compared vanilla FedAvg, failing to achieve satisfactory results on all datasets except for the MARC dataset. Particularly, the performance of the L prototype has consistently decreased across all datasets compared to vanilla FedAvg. This behavior can be attributed to FedET’s reliance on neural network confidence scores for uncertainty estimates in its uncertainty-weighted distillation. However, neural networks, especially pretrained language models (PLMs), are often poorly calibrated and prone to overconfidence, which compromises their ability to provide reliable uncertainty estimates~\cite{wang2022uncertainty, guo2017calibration, chen2022close, xiao2022uncertainty}.

\begin{table*}[t]
    \caption{\normalsize{\textbf{Performance Results for NLP Task on 4 Datasets.}}. \normalsize{Training data is distributed among S, M, and L device prototypes in a 1:3:6 ratio, subdivided among clients using Dir(0.5). Client configurations are 8, 4, and 2 clients for S, M, and L, with sample rates of 0.3, 0.5, and 1.0, respectively. Architectures include Bert-Tiny, Bert-Mini, and Bert-Small for S, M, and L, initialized from pre-trained parameters and fine-tuned for 20 communication rounds. See Appendix~\ref{sec:app-nlp-details} for more details.}}
  \label{tab:app-nlp-performance}
  \centering
  \resizebox{0.9\linewidth}{!}{
  \midsepremove
  \begin{NiceTabular}{lll|llll}
    \toprule
     Private & Public & Baseline & \multicolumn{1}{c}{S} & \multicolumn{1}{c}{M} & \multicolumn{1}{c}{L} & Average \\
    \midrule
    \multirow{5}{*}{MNLI} & \multirow{5}{*}{SNLI}     & FedAvg         & $36.15_{\pm 0.46}$& $54.47_{\pm 2.48}$& $57.51_{\pm 2.79}$& $49.37$ \\ 
                          &                           & FedDF          & $54.21_{\pm 0.15}$& $60.44_{\pm 1.91}$& $66.71_{\pm 1.09}$& $60.45$ \\ 
                          &                           & FedET          & $48.03_{\pm 6.32}$& $50.33_{\pm 7.87}$& $53.80_{\pm 6.18}$& $50.72$ \\ 
\rowcolor{lightcyan}\cellcolor{white}&\cellcolor{white}& TAKFL & $57.43_{\pm 0.21}$& $63.58_{\pm 0.31}$& $68.74_{\pm 0.12}$& $63.25$ \\ 
\rowcolor{lightcyan}\cellcolor{white}&\cellcolor{white}& TAKFL+Reg & $\bf{57.61_{\pm 0.89}}$& $\bf{63.91_{\pm 1.05}}$& $\bf{68.96_{\pm 1.10}}$& $\bf{63.49}$ \\ 
                          \midrule
    \multirow{5}{*}{SST2} & \multirow{5}{*}{Sent140}  & FedAvg         & $54.98_{\pm 1.81}$& $74.71_{\pm 8.22}$& $86.69_{\pm 0.06}$& $72.13$ \\ 
                          &                           & FedDF          & $74.41_{\pm 2.62}$& $80.71_{\pm 1.63}$& $84.35_{\pm 1.66}$& $79.82$ \\ 
                          &                           & FedET          & $66.63_{\pm 9.14}$& $65.89_{\pm16.35}$& $70.05_{\pm15.83}$& $67.52$ \\ 
\rowcolor{lightcyan}\cellcolor{white}&\cellcolor{white}& TAKFL & $74.73_{\pm 0.55}$& $82.17_{\pm 0.31}$& $86.93_{\pm 0.42}$& $81.28$ \\ 
\rowcolor{lightcyan}\cellcolor{white}&\cellcolor{white}& TAKFL+Reg & $\bf{74.88_{\pm 0.43}}$& $\bf{82.40_{\pm 0.83}}$& $\bf{87.33_{\pm 0.63}}$& $\bf{81.54}$ \\ 
                          \midrule
    \multirow{5}{*}{MARC} & \multirow{5}{*}{Yelp}     & FedAvg         & $33.76_{\pm 1.13}$& $49.08_{\pm 1.28}$& $59.26_{\pm 1.43}$& $47.36$ \\ 
                          &                           & FedDF          & $53.01_{\pm 1.24}$& $55.37_{\pm 0.87}$& $56.81_{\pm 0.99}$& $55.06$ \\ 
                          &                           & FedET          & $52.63_{\pm 2.29}$& $54.28_{\pm 2.31}$& $56.11_{\pm 2.61}$& $54.34$ \\ 
\rowcolor{lightcyan}\cellcolor{white}&\cellcolor{white}& TAKFL & $55.70_{\pm2.08}$& $58.64_{\pm1.75}$& $59.39_{\pm1.16}$& $57.91$ \\ 
\rowcolor{lightcyan}\cellcolor{white}&\cellcolor{white}& TAKFL+Reg & $\bf{55.96_{\pm1.66}}$& $\bf{59.18_{\pm1.13}}$& $\bf{59.61_{\pm1.89}}$& $\bf{58.25}$ \\ 
                          \midrule
    \multirow{5}{*}{AG-News}&\multirow{5}{*}{DBPedia} & FedAvg         & $83.64_{\pm3.51}$& $83.47_{\pm2.35}$& $91.48_{\pm2.22}$& $86.20$ \\ 
                          &                           & FedDF          & $85.97_{\pm2.45}$& $89.10_{\pm1.85}$& $91.37_{\pm1.10}$& $88.81$ \\ 
                          &                           & FedET          & $75.27_{\pm3.85}$& $81.13_{\pm3.21}$& $83.19_{\pm4.58}$& $79.86$ \\ 
\rowcolor{lightcyan}\cellcolor{white}&\cellcolor{white}& TAKFL & $87.37_{\pm1.31}$& $90.11_{\pm1.56}$& $92.48_{\pm1.12}$& $89.99$ \\ 
\rowcolor{lightcyan}\cellcolor{white}&\cellcolor{white}& TAKFL+Reg & $\bf{87.66_{\pm1.83}}$& $\bf{90.30_{\pm2.05}}$& $\bf{92.61_{\pm1.72}}$& $\bf{90.19}$ \\ 
    \bottomrule
  \end{NiceTabular}
  \midsepdefault
  }
  \vspace{-0.2in}
\end{table*}

\subsection{Scalability Evaluation} \label{sec:app-scalability}
\textbf{This section complements the experimental results in the main paper Section~\ref{sec:main-scalability}.}

\noindent\textbf{Experimental Setup.} To evaluate the effectiveness and scalability of our method across a broad spectrum of device prototypes, ranging from very small to very large sizes, we conduct experiments involving 3 to 7 different prototypes. Our objective is to assess how effectively our method adapts from a uniform array of small-size prototypes (3 device prototypes) to a diverse mix that includes prototypes ranging from extremely small (XXS) to extremely large (XXL) (7 device prototypes). These experiments involve training image classification models from scratch on the CINIC-10 dataset, using CIFAR-100 as the unlabeled public dataset. We randomly distribute the dataset among prototypes with dataset ratios set to 1:2:3:4:5:6:7 from XXS to XXL. Each dataset portion is further distributed among clients using a Dirichlet distribution (Dir(0.5)). The number of clients ranges from 35 to 5 from XXS to XXL, respectively. Client sample rates are set at 0.1, 0.1, 0.15, 0.15, 0.2, 0.3, and 0.6 from XXS to XXL. We use a series of ResNet architectures---ResNet10-XXS, ResNet10-XS, ResNet10-S, ResNet10-M, ResNet10, ResNet18, and ResNet50---scaled appropriately for each prototype. The local training epochs are set at 2, 2, 2, 5, 10, 10, and 20 from XXS to XXL to account for resource constraints, with fewer epochs assigned to smaller devices. We employ the Adam optimizer with a learning rate of 1e-3 and a weight decay of 5e-5 for local training. For XL and XXL, a step learning rate scheduler reduces the learning rate by a factor of 0.1 at half epoch. Server-side distillation employs a fixed batch size of 128, using the Adam optimizer with learning rate of 1e-3 and weight decay of 5e-5. The softmax temperature is set at 3 for ensemble distillation and 20 for self-regularization. The number of communication rounds is fixed at 30. These experiments are conducted over 3 trials with different random seeds, and the average performance with standard deviation is reported. The entire device prototypes configurations are given in Table~\ref{tab:app-scalability-hp}.

The detailed results are presented in Tables \ref{tab:scale7}, \ref{tab:scale5}, and \ref{tab:scale3}. 


\begin{table*}[h!]
\vspace{-0.1in}
  \caption{\textbf{Scalability Evaluation.} Detailed performance results for 7 device prototypes case.}
  \label{tab:scale7}
  \centering
  \resizebox{1.0\linewidth}{!}{
  \midsepremove
  \begin{NiceTabular}{l|llllllll}
    \toprule
     Baseline & \multicolumn{1}{c}{XXS} & \multicolumn{1}{c}{XS} & \multicolumn{1}{c}{S} & \multicolumn{1}{c}{M} & \multicolumn{1}{c}{L} & \multicolumn{1}{c}{XL} & \multicolumn{1}{c}{XXL} & \multicolumn{1}{c}{Average}\\
    \midrule
    FedAvg & $23.17_{\pm1.26}$ & $30.66_{\pm0.14}$ & $32.81_{\pm0.21}$ & $31.77_{\pm0.21}$ & $37.69_{\pm0.08}$ & $41.78_{\pm0.05}$ & $50.52_{\pm0.01}$ & $35.49$\\
    FedDF & $27.98_{\pm0.66}$ & $\bf{37.47_{\pm0.33}}$ & $40.61_{\pm0.01}$ & $40.26_{\pm0.18}$ & $43.83_{\pm0.22}$ & $45.58_{\pm0.18}$ & $52.18_{\pm0.12}$ & $41.13$\\
    FedET & $26.75_{\pm0.98}$ & $36.99_{\pm0.31}$ & $40.51_{\pm0.19}$ & $41.60_{\pm0.16}$ & $46.12_{\pm0.31}$ & $48.39_{\pm0.11}$ & $52.71_{\pm0.09}$ & $41.87$\\
\rowcolor{lightcyan}TAKFL & $27.30_{\pm0.08}$ & $36.93_{\pm0.16}$ & $43.31_{\pm0.42}$ & $40.88_{\pm0.01}$ & $48.52_{\pm0.15}$ & $50.95_{\pm0.04}$ & $54.27_{\pm0.43}$ & $43.17$ \\
\rowcolor{lightcyan}TAKFL+Reg & $\bf{29.28_{\pm0.16}}$ & $37.10_{\pm0.45}$ & $\bf{43.96_{\pm1.65}}$ & $\bf{41.83_{\pm0.73}}$ & $\bf{48.77_{\pm0.37}}$ & $\bf{51.43_{\pm0.46}}$ & $\bf{54.63_{\pm0.84}}$ & $\bf{43.86}$ \\
    \bottomrule
  \end{NiceTabular}
  \midsepdefault
  }
\end{table*}

\begin{table*}[h!]
\vspace{-0.1in}
  \caption{\textbf{Scalability Evaluation.} Detailed performance results for 5 device prototypes case.}
  \label{tab:scale5}
  \centering
  \resizebox{0.89\linewidth}{!}{
  \midsepremove
  \begin{NiceTabular}{l|llllll}
    \toprule
     Baseline & \multicolumn{1}{c}{XXS} & \multicolumn{1}{c}{XS} & \multicolumn{1}{c}{S} & \multicolumn{1}{c}{M} & \multicolumn{1}{c}{XL} & \multicolumn{1}{c}{Average}\\
    \midrule
    FedAvg & $24.19_{\pm1.03}$ & $21.04_{\pm0.76}$ & $33.62_{\pm0.88}$ & $38.91_{\pm0.74}$ & $46.93_{\pm0.05}$ & $32.94$ \\
    FedDF & $\bf{28.31_{\pm0.61}}$ & $34.66_{\pm0.00}$ & $39.91_{\pm0.07}$ & $38.24_{\pm0.36}$ & $46.81_{\pm0.11}$ & $37.59$ \\
    FedET & $26.88_{\pm0.95}$ & $34.11_{\pm0.27}$ & $\bf{41.15_{\pm0.29}}$ & $40.81_{\pm0.87}$ & $48.14_{\pm0.06}$ & $38.22$ \\
\rowcolor{lightcyan}TAKFL & $27.91_{\pm0.12}$ & $37.09_{\pm0.11}$ & $40.46_{\pm0.34}$ & $41.06_{\pm0.02}$ & $49.02_{\pm0.35}$ & $39.11$ \\
\rowcolor{lightcyan}TAKFL+Reg & $28.24_{\pm0.46}$ & $\bf{37.30_{\pm1.10}}$ & $40.76_{\pm0.94}$ & $\bf{43.09_{\pm0.27}}$ & $\bf{50.86_{\pm0.22}}$ & $\bf{40.05}$ \\
    \bottomrule
  \end{NiceTabular}
  \midsepdefault
  }
\end{table*}

\begin{table*}[h!]
\vspace{-0.1in}
  \caption{\textbf{Scalability Evaluation.} Detailed performance results for 3 device prototypes case.}
  \label{tab:scale3}
  \centering
  \resizebox{0.65\linewidth}{!}{
  \midsepremove
  \begin{NiceTabular}{l|llll}
    \toprule
     Baseline & \multicolumn{1}{c}{XXS} & \multicolumn{1}{c}{S} & \multicolumn{1}{c}{M} & \multicolumn{1}{c}{Average}\\
    \midrule
    FedAvg & $24.19_{\pm1.03}$ & $33.62_{\pm0.88}$ & $\bf{38.91_{\pm0.74}}$  & $32.24$ \\
    FedDF & $27.85_{\pm0.10}$ & $\bf{37.83_{\pm0.12}}$ & $37.74_{\pm0.41}$  & $34.47$ \\
    FedET & $26.04_{\pm0.67}$ & $36.87_{\pm0.68}$ & $37.66_{\pm0.09}$  & $33.52$ \\
\rowcolor{lightcyan}TAKFL & $26.62_{\pm0.16}$ & $37.32_{\pm0.40}$ & $38.13_{\pm0.58}$ & $34.02$ \\
\rowcolor{lightcyan}TAKFL+Reg & $\bf{27.90_{\pm0.98}}$ & $37.63_{\pm0.87}$ & $38.20_{\pm0.91}$ & $\bf{34.58}$ \\
    \bottomrule
  \end{NiceTabular}
  \midsepdefault
  }
\end{table*}
\section{Ablation Studies} \label{sec:app-ablation}

\subsection{Understanding Merging Coefficient}

In this section, we conduct an ablation study to further understand how TAKFL customizes knowledge integration and understand how the merging coefficients $\lambda_i$ are achieving this. This experiment aims to further understand the trade-offs between customized knowledge integration approach from the one-size-fits-all strategy employed in vanilla ensemble distillation and prior works.

\noindent\textbf{Experimental Setup.} Our experimentation focuses on two device prototypes: XXS and XXL, selected from the scalability evalulation detailed in Section \ref{sec:main-scalability}, Appendix~\ref{sec:app-scalability}, and Table~\ref{tab:app-scalability-hp}. We employ the image classification task on the CINIC-10~\cite{darlow2018cinic10} dataset, starting from scratch. Each prototype receives a randomly selected, non-overlapping subset of the training dataset—3.57\% for XXS and 25\% for XXL—distributed among their clients in a non-i.i.d. manner using Dir(0.5). Both prototypes have three clients each. The architectures used are ResNet10-XXS for the XXS prototype and ResNet-50 for the XXL prototype. To focus solely on the evaulation of the server-side distillation process and its evolution with varying $\lambda$, we pre-train each prototype using standard FedAvg for 10 communication rounds, with a sample rate of 1.0. Local training involves 20 epochs for XXS and 20 epochs for XXL using an Adam optimizer with a learning rate of 1e-3 and weight decay of 5e-5. The XXL prototype employs a step learning rate scheduler that reduces the rate by a factor of 0.1 at local epoch 10. For server-side distillation, we utilize a batch size of 128 and an Adam optimizer with a learning rate of 1e-5 and weight decay of 5e-5. CIFAR-100~\cite{Krizhevsky09learningmultiple} serves as the unlabeled public dataset. We save the final updated client and server models from both prototypes for further experimentation, focusing on the impact of merging coefficients without self-regularization in TAKFL. The merging coefficient $\lambda$ is varied linearly from 0 to 1 in increments of 0.05. For simplicity, the XXS prototype is referred to as the small (S) prototype and XXL as the large (L) prototype.

\noindent\textbf{Discussion.} Figure~\ref{fig:app-lambda} illustrates the significant impact of customized knowledge integration on the performance of both small and large device prototypes compared to the one-size-fits-all approach typical of vanilla ensemble distillation in the prior works, at different distillation epochs. Here, TAKFL adeptly manages customization for both small and large prototypes by controlling the merging coefficient $\lambda$. The merged model for both the small and large student prototypes is obtained using the formula $\btheta_{merged} = \btheta_{avg} + \big( (1-\lambda) \btau_S + \lambda \btau_L \big)$. Notably, the performance is benchmarked at $\lambda \approx 0.5$ in all cases, reflecting similar results to vanilla ensemble distillation (FedDF), where no customization in knowledge transfer occurs. This baseline performance is critical for understanding the effects of further customization.

In small distillation epochs ($I_{distill} < 10$), minimal benefit is observed from customized knowledge integration, as both small and large prototypes achieve optimal performance at the non-customized $\lambda \approx 0.5$. However, as the distillation process progresses beyond 10 epochs, the influence of $\lambda$ becomes increasingly pronounced. For $\lambda > 0.5$, the knowledge from the large prototype's ensembles predominates, enhancing their impact, while for $\lambda < 0.5$, integration is more influenced by the small prototype's ensembles. This pattern suggests that increased distillation epochs enable more effective distillation of each prototype's unique knowledge for extreme cases of extremely small and large prototypes, thereby making the customization benefits evident. In scenarios with small distillation epochs, the absence of significant unique knowledge results in optimal performance at $\lambda \approx 0.5$. Conversely, as the number of distillation epochs rises ($I_{distill} \geq 20$), the one-size-fits-all strategy proves suboptimal, underscoring the importance of tailored knowledge integration strategies. Optimal performance increasingly occurs at $\lambda > 0.5$, indicating effective leveraging of each prototype's strengths to maximize overall performance. These findings confirm the necessity for customized knowledge integration in environments with significant prototype size variations and support our theoretical insights as detailed in Remark~\ref{remark1} and~\ref{remark2}.

\begin{figure}[th]
    \begin{minipage}{0.24\textwidth}
        \centering
        \includegraphics[width=\textwidth]{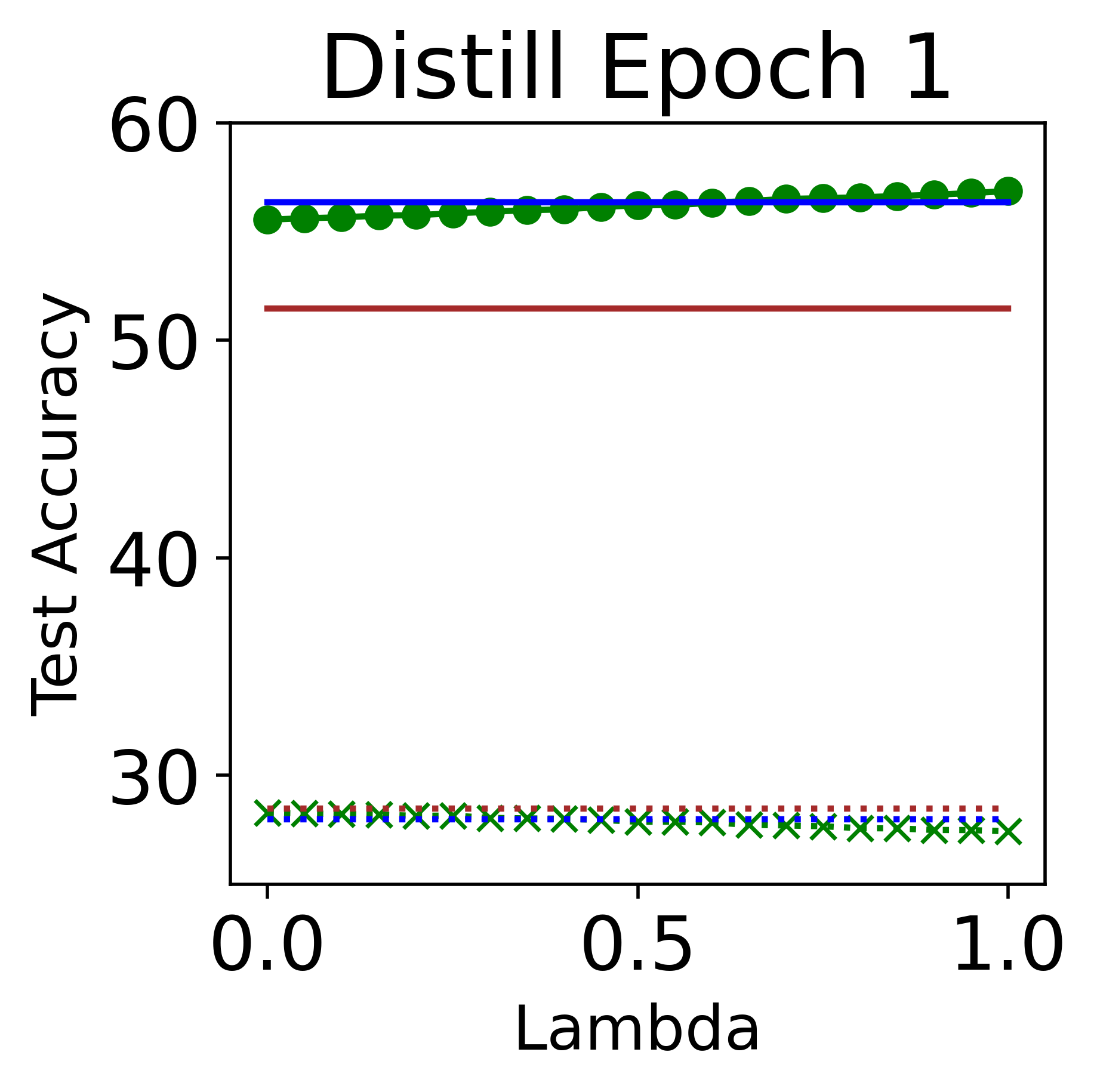}
    \end{minipage}
    \hspace{0.1cm}
    \begin{minipage}{0.24\textwidth}
        \centering
        \includegraphics[width=\textwidth]{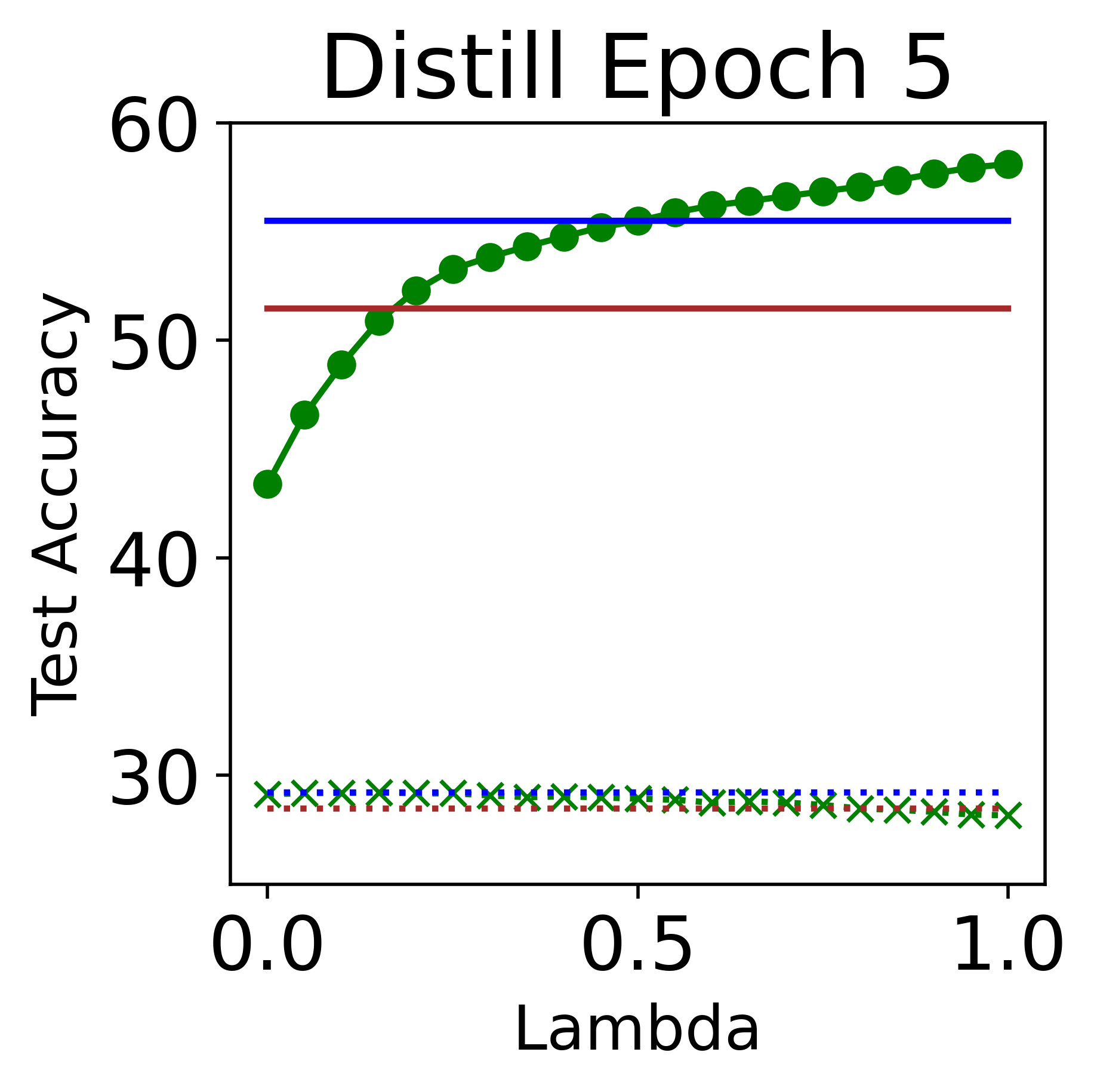}
    \end{minipage}
    \hspace{0.1cm}
    \begin{minipage}{0.24\textwidth}
        \centering
        \includegraphics[width=\textwidth]{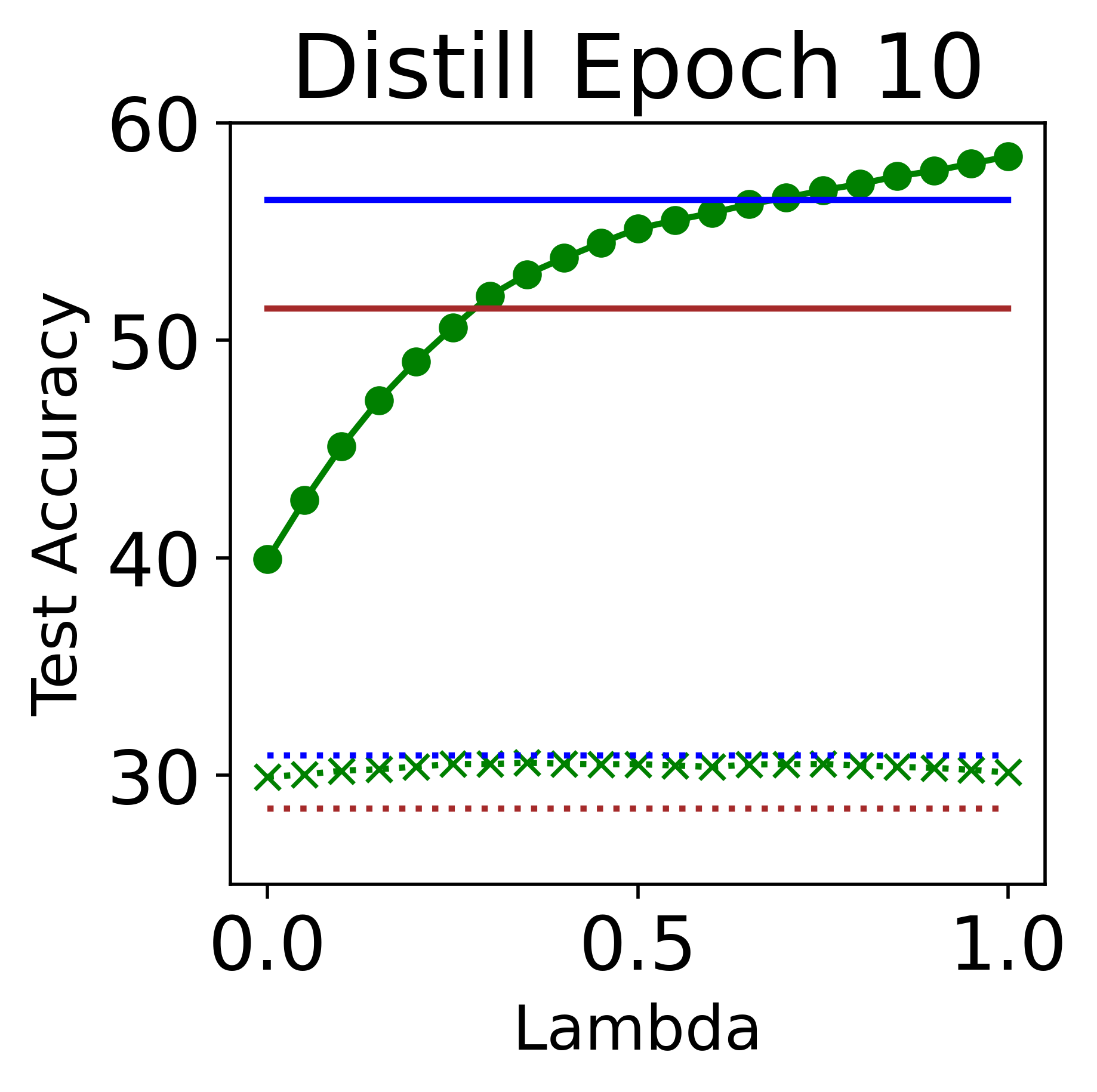}
    \end{minipage}
    \hspace{0.1cm}
    \begin{minipage}{0.17\textwidth}
         \vspace{0pt}
         \centering
        \includegraphics[width=\textwidth]{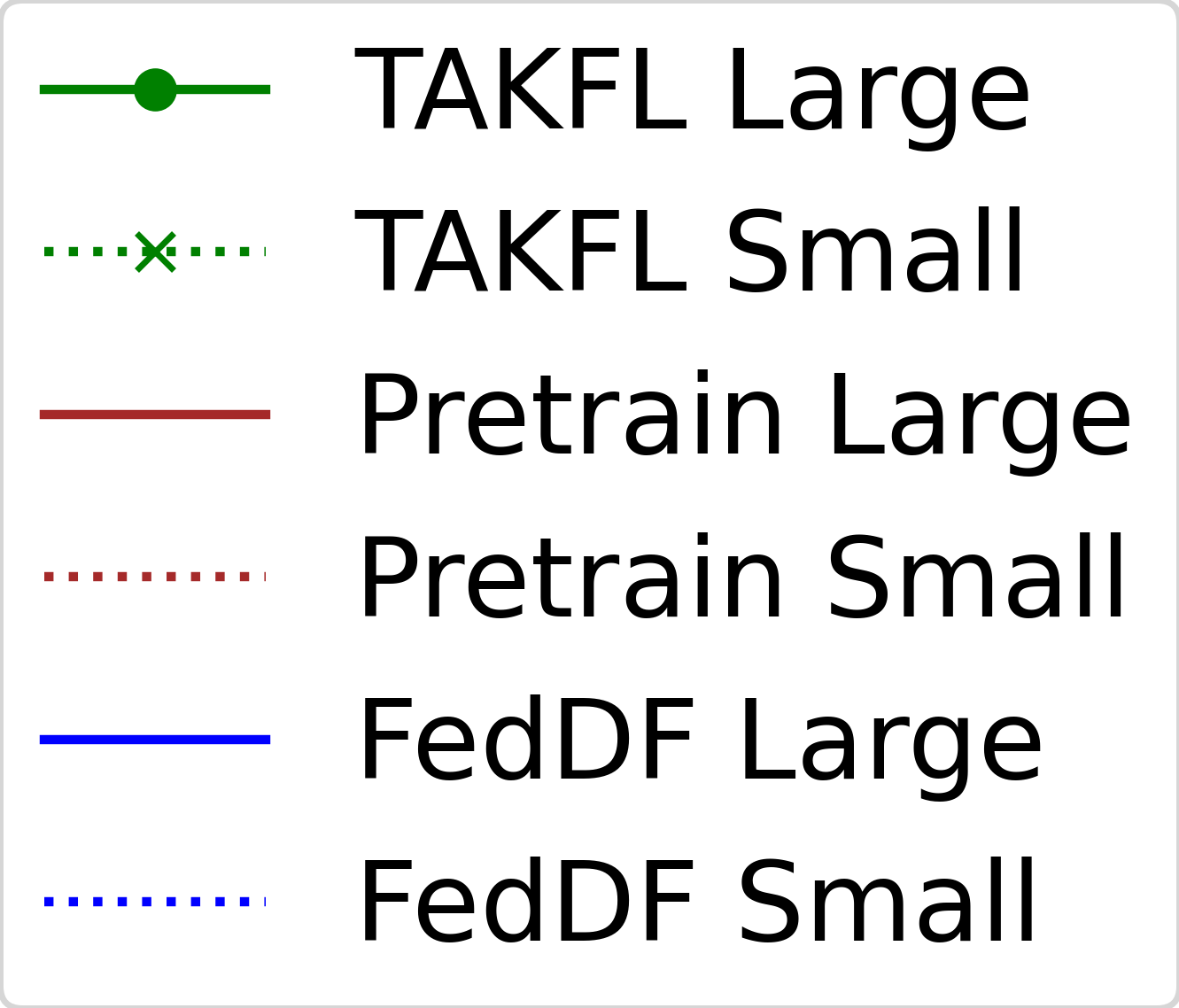}
    \end{minipage}
    
    \begin{minipage}{0.24\textwidth}
        \centering
        \includegraphics[width=\textwidth]{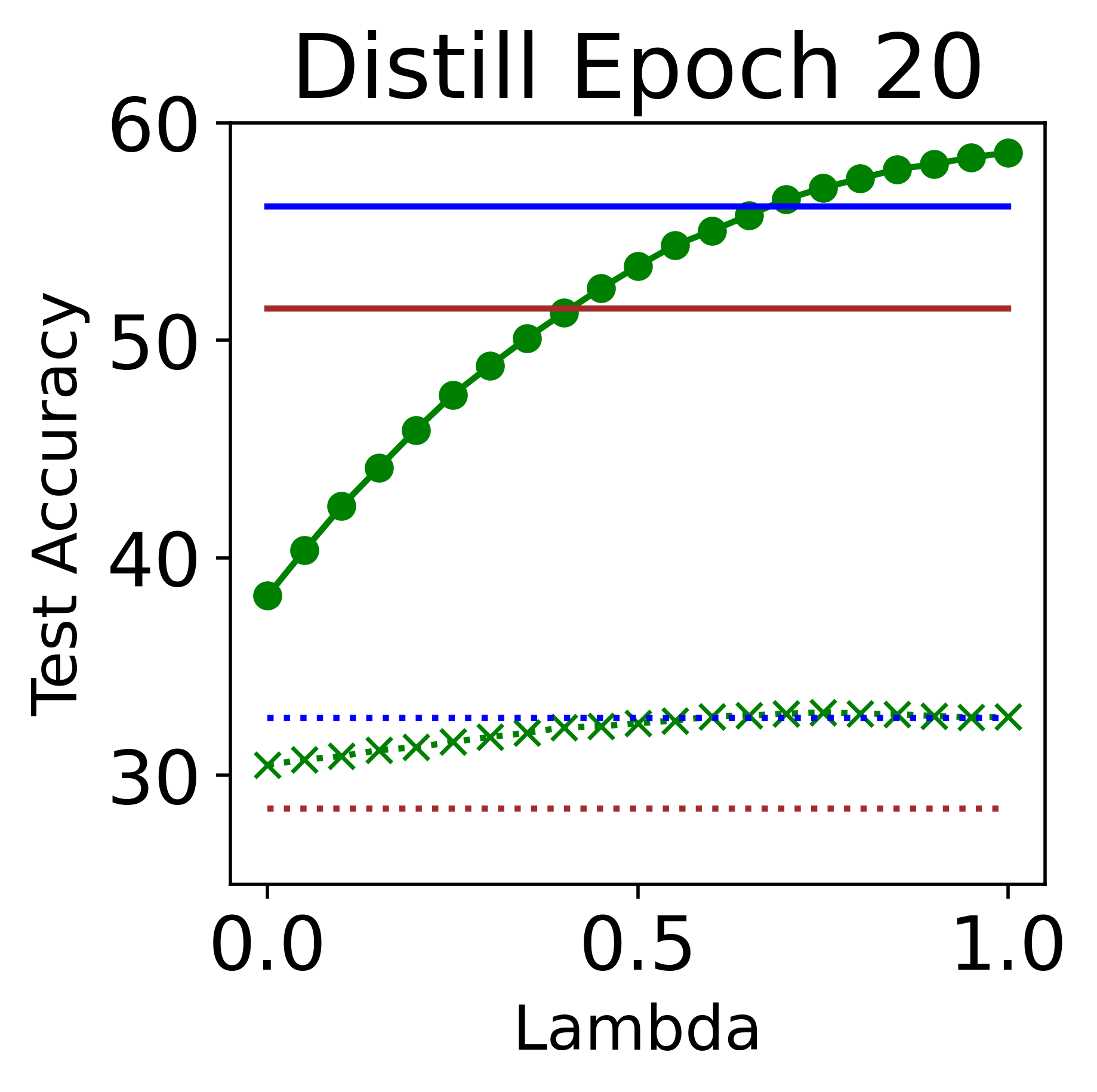}
    \end{minipage}
    \hspace{0.1cm}
    \begin{minipage}{0.24\textwidth}
        \centering
        \includegraphics[width=\textwidth]{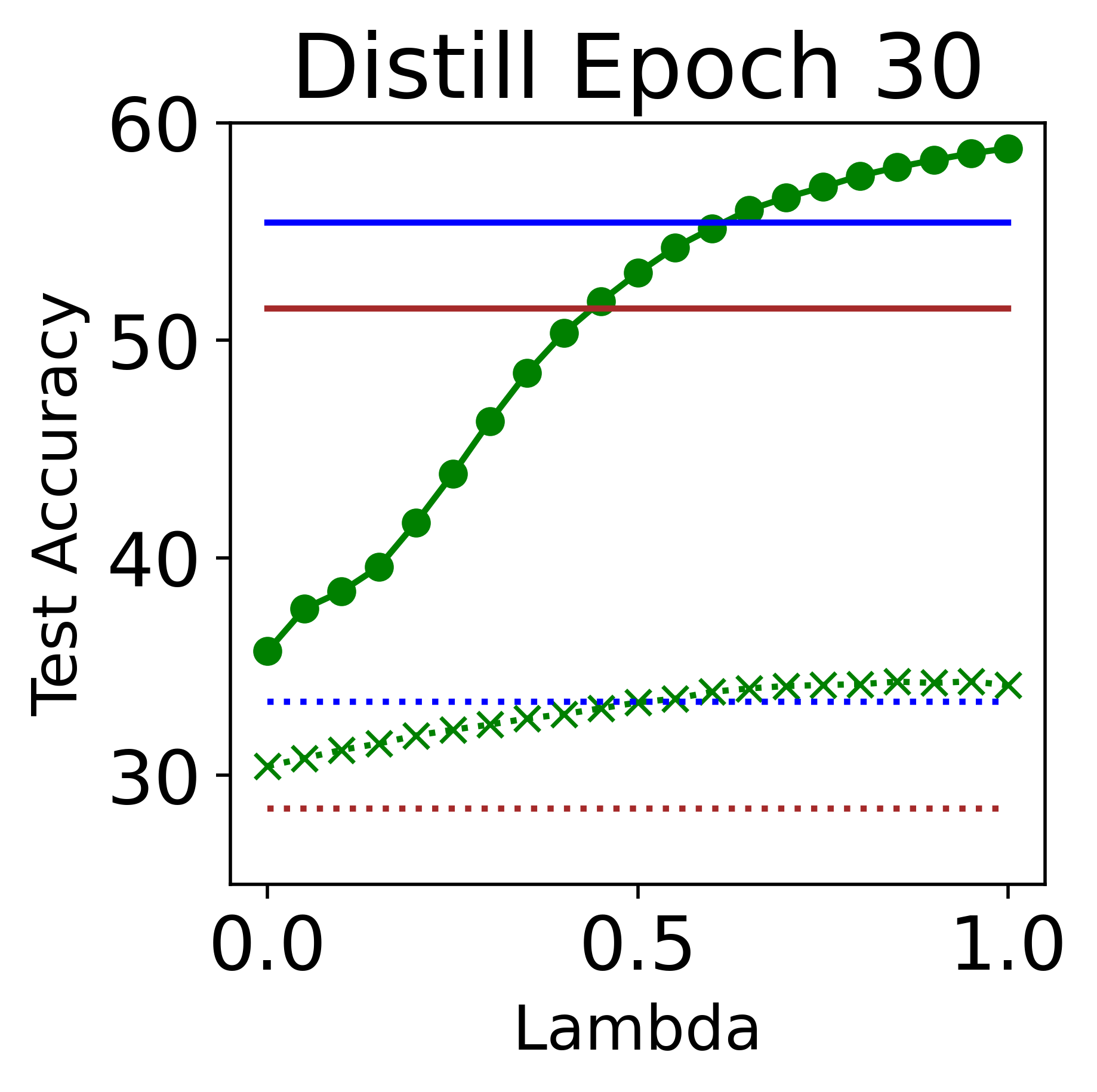}
    \end{minipage}
    \hspace{0.1cm}
    \begin{minipage}{0.24\textwidth}
        \centering
        \includegraphics[width=\textwidth]{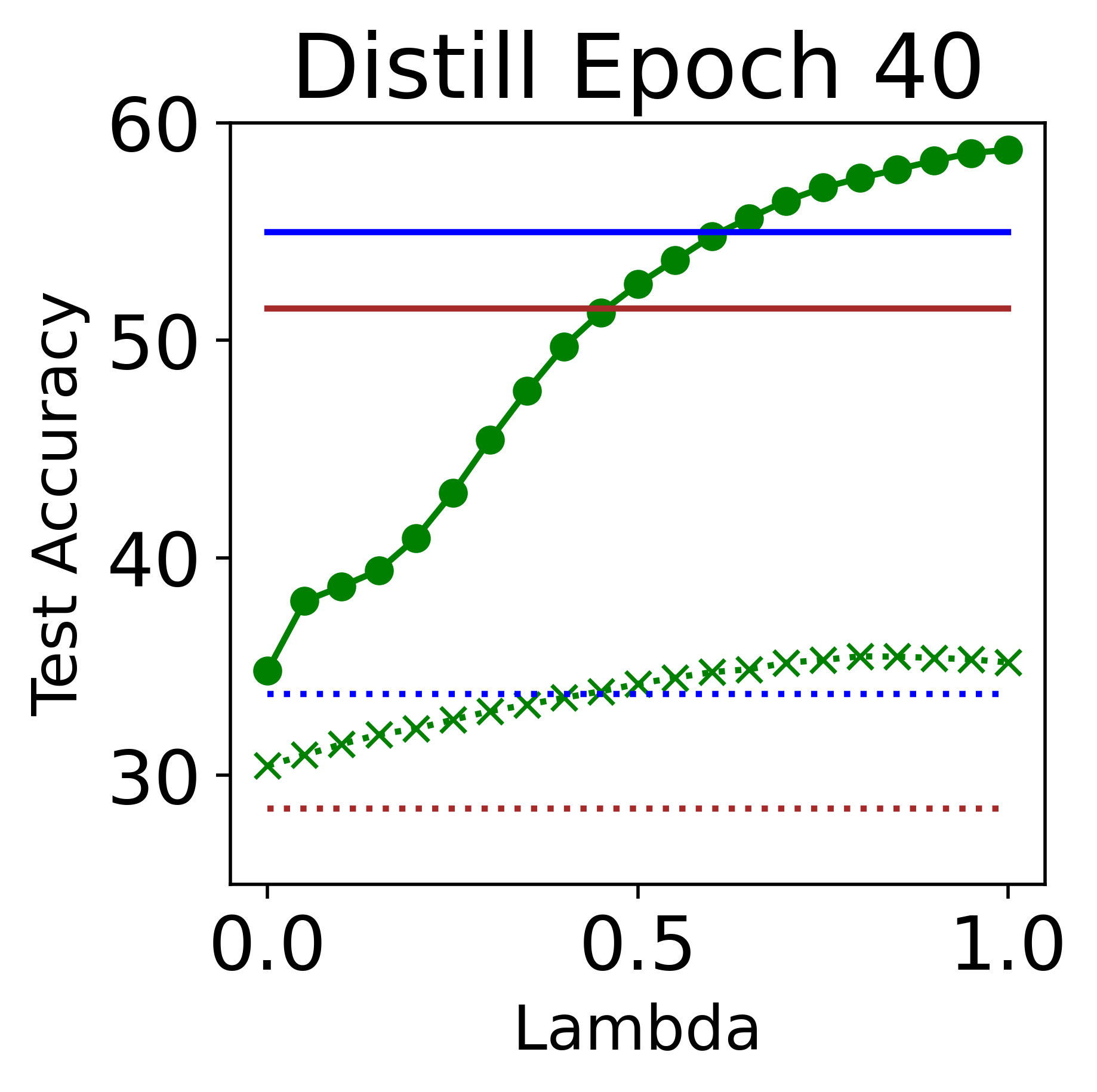}
    \end{minipage}
    \caption{\textbf{Understanding the Impact of Merging Coefficients.} This figure showcases server-side knowledge distillation between two device prototypes, XXS and XXL, referred to as small and large, respectively, utilizing CIFAR-100 as the unlabeled public dataset. Both prototypes were pre-trained from scratch using standard FedAvg for 10 communication rounds. The CINIC-10 dataset was distributed between the small and large prototypes in ratios of 3.57\% and 25\%, respectively, and further subdivided non-i.i.d. among the clients using Dir(0.5). Each prototype has three clients with a sample rate of 1.0. The small prototype utilizes a ResNet10-XXS architecture, while the large prototype employs a ResNet-50.}

    \label{fig:app-lambda}
\end{figure}

\subsection{Impact of Public Dataset} \label{sec:app-cv-public}
In this section, we explore the influence of the public dataset on the performance of TAKFL and existing KD-based methods when the public dataset used for server-side distillation is less similar to the private dataset, which is the actual learning objective. For this analysis we employ the same experimental setup previously outlined in Section~\ref{sec:main-exp-results} and Appendix~\ref{sec:app-main-cv}, using the CIFAR-10 homo-family architecture. To measure dataset similarity, we compute cosine similarity between the averaged features of datasets, extracted using an off-the-shelf pre-trained CLIP model~\cite{radford2021learning} (CLIP ViT-B/32) available from the official GitHub repository.\footnote{\url{https://github.com/OpenAI/CLIP}}

\noindent\textbf{Discussion.} Table~\ref{tab:app-cv-public dataset} presents our results, highlighting a significant observation: the performance of existing methods drastically deteriorates as the similarity between the public dataset and private datasets decreases. In contrast, TAKFL exhibits robustness, suffering much less performance degradation under the same conditions. This demonstrates TAKFL's practical utility in real-world scenarios where the server typically lacks knowledge of the private datasets to select a closely aligned public dataset for distillation. Notably, FedET underperforms significantly when using a less similar public dataset, performing worse than vanilla FedAvg in both low and high data heterogeneity scenarios. A similar pattern was observed with FedET in the NLP tasks discussed in Section~\ref{sec:main-exp-results} and Appendix~\ref{sec:app-main-nlp}. This issue is likely due to FedET's dependence on the overconfident and poorly calibrated confidence scores from neural networks~\cite{ guo2017calibration, xiao2022uncertainty} for uncertainty estimates in its uncertainty-weighted distillation approach.


\begin{table*}[th]
  \caption{\textbf{Impact of Public Dataset on performance results.} Same experimental setting described in Section ~\ref{sec:main-exp-results} and Appendix~\ref{sec:app-main-cv} on CIFAR-10 homo-family setting is used for this experiment. The numbers in parentheses represent the similarity scores between private and public datasets, obtained using a pre-trained CLIP ViT-B/32 model.}
  \label{tab:app-cv-public dataset}
  \centering
  \resizebox{1.0\textwidth}{!}{
  \midsepremove
  \begin{NiceTabular}{ll llll|llll}
    \toprule
    \multirow{2}{*}{Public Dataset} & \multirow{2}{*}{Baseline} & \multicolumn{4}{c}{Low Data Heterogeneity (Dir(0.3))} & \multicolumn{4}{c}{High Data Heterogeneity (Dir(0.1))}\\
    \cmidrule(lr){3-6}
    \cmidrule(lr){7-10}
    & & \multicolumn{1}{c}{S} & \multicolumn{1}{c}{M} & \multicolumn{1}{c}{L} & \multicolumn{1}{c}{Average} & \multicolumn{1}{c}{S} & \multicolumn{1}{c}{M} & \multicolumn{1}{c}{L} & \multicolumn{1}{c}{Average}\\
    \midrule
    \multicolumn{1}{c}{---} & FedAvg & $36.21_{\pm2.24}$ & $46.41_{\pm2.33}$ & $59.46_{\pm6.17}$ & $47.36$ & $22.01_{\pm0.78}$ & $25.26_{\pm3.89}$ & $51.51_{\pm3.52}$ & $32.93$\\
    \midrule
     & FedDF & $49.31_{\pm0.15}$ & $50.63_{\pm0.73}$ & $49.82_{\pm0.98}$ & $49.92$ & $34.71_{\pm1.48}$ & $35.27_{\pm4.74}$ & $51.08_{\pm4.04}$ & $40.35$\\
    & FedET & $49.21_{\pm0.72}$ & $55.01_{\pm1.81}$ & $53.60_{\pm6.47}$ & $52.61$ & $29.58_{\pm3.00}$ & $30.96_{\pm4.70}$ & $45.53_{\pm6.46}$ & $35.36$\\
\rowcolor{lightcyan}\cellcolor{white}& TAKFL & $55.90_{\pm1.70}$ & $57.93_{\pm3.49}$ & $60.58_{\pm 2.35}$ & $58.14$ & $37.40_{\pm1.68}$ & $38.96_{\pm0.17}$ & $51.49_{\pm6.15}$ & $42.62$\\
\rowcolor{lightcyan}\multirow{-4}{*}{CIFAR-100 (0.99)}\cellcolor{white}& TAKFL+Reg & $\bf{56.37_{\pm0.46}}$ & $\bf{58.60_{\pm0.43}}$ & $\bf{65.69_{\pm1.28}}$ & $\bf{60.22}$ & $\bf{40.51_{\pm1.05}}$ & $\bf{40.12_{\pm1.24}}$ & $\bf{53.24_{\pm2.51}}$ & $\bf{44.62}$\\
    \midrule
    & FedDF & $49.37_{\pm1.58}$ & $49.41_{\pm4.21}$ & $55.06_{\pm6.71}$ & $51.28$ & $31.41_{\pm6.61}$ & $30.73_{\pm7.77}$ & $39.82_{\pm5.16}$ & $33.99$\\
    & FedET & $33.95_{\pm0.92}$ & $37.26_{\pm1.64}$ & $39.77_{\pm3.44}$ & $36.99$ & $24.12_{\pm1.84}$ & $24.58_{\pm2.13}$ & $28.91_{\pm1.09}$ & $25.87$\\
\rowcolor{lightcyan}\cellcolor{white}& TAKFL & $55.20_{\pm0.07}$ & $56.36_{\pm0.40}$ & $60.71_{\pm0.22}$ & $57.42$ & $40.08_{\pm0.19}$ & $40.26_{\pm0.04}$ & $43.56_{\pm1.10}$ & $41.30$\\
\rowcolor{lightcyan}\multirow{-4}{*}{TinyImagenet (0.92)}\cellcolor{white}& TAKFL+Reg & $\bf{56.28_{\pm0.09}}$ & $\bf{57.14_{\pm0.03}}$ & $\bf{60.90_{\pm0.22}}$ & $\bf{58.11}$ & $\bf{40.88_{\pm0.11}}$ & $\bf{41.10_{\pm1.15}}$ & $\bf{46.25_{\pm5.95}}$ & $\bf{42.74}$\\
    \midrule
     & FedDF & $\bf{48.99_{\pm 0.37}}$ & $50.06_{\pm 0.43}$ & $55.12_{\pm4.95}$ & $51.39$ & $29.80_{\pm0.39}$ & $32.28_{\pm4.41}$ & $44.0_{\pm4.60}$ & $35.36$\\
    & FedET & $28.56_{\pm3.00}$ & $28.80_{\pm1.00}$ & $37.20_{\pm2.78}$ & $31.52$ & $15.28_{\pm1.75}$ & $19.00_{\pm3.43}$ & $23.29_{\pm5.04}$ & $19.19$\\
\rowcolor{lightcyan}\cellcolor{white}& TAKFL & $45.65_{\pm2.72}$ & $54.53_{\pm1.72}$ & $58.13_{\pm0.13}$ & $52.77$ & $\bf{31.02_{\pm0.68}}$ & $\bf{36.76_{\pm1.58}}$ & $48.33_{\pm0.53}$ & $38.70$\\
\rowcolor{lightcyan}\multirow{-4}{*}{Celeb-A (0.77)}\cellcolor{white}& TAKFL+Reg & $46.93_{\pm0.67}$ & $\bf{56.67_{\pm1.26}}$ & $\bf{60.13_{\pm1.38}}$ & $\bf{54.58}$ & $30.88_{\pm3.51}$ & $35.95_{\pm5.40}$ & $\bf{52.68_{\pm1.90}}$ & $\bf{39.84}$\\
    \bottomrule
  \end{NiceTabular}
  \midsepdefault
}
\end{table*}

\subsection{Impact of FL Optimizer}
To further assess the impact of FL optimizer on the performance, we adopt FedOpt (Adam)~\cite{reddi2020adaptive}, which is a SOTA FL optimizers as the base per prototype FL optimizer and conduct the same experimental setup on CV task discussed in Section xx. The results are presented in Table~\ref{tab:fl-opt-performance}. As we can see, the performance of TAKFL is robust under different FL optimizers and is consistently achieving SOTA performance.

\begin{table}[H]                      \caption{{\textbf{Impact of FL Optimizer.} Performance comparison results using FedOpt (Adam)~\cite{reddi2020adaptive} as per prototype FL optimizer for the same setting as Table~\ref{tab:main-cv-performance} (Section~\ref{sec:main-exp-results}). }}
    \label{tab:fl-opt-performance}
    \centering
    \resizebox{\textwidth}{!}{
    \begin{NiceTabular}{ll llll|llll}[colortbl-like]
    \toprule
    & & \multicolumn{4}{c}{Low Data Heterogeneity (Dir(0.3))} & \multicolumn{4}{c}{High Data Heterogeneity (Dir(0.1))}\\
    \cmidrule(lr){3-6} \cmidrule(lr){7-10}
    \multirow{-2}{*}{Dataset} & \multirow{-2}{*}{Baseline} & \multicolumn{1}{c}{S} & \multicolumn{1}{c}{M} & \multicolumn{1}{c}{L} & \multicolumn{1}{c}{Average} & \multicolumn{1}{c}{S} & \multicolumn{1}{c}{M} & \multicolumn{1}{c}{L} & \multicolumn{1}{c}{Average}\\
    \midrule
    \multirow{4}{*}{CIFAR-10} & FedOpt & $35.40$ & $41.69$ & $58.29$ & $45.12$ & $23.18$ & $22.03$ & $37.82$ & $27.67$\\ 
    & FedDF & $43.74$ & $43.48$ & $63.36$ & $50.19$ & $29.23$ & $29.46$ & $37.45$ & $32.04$\\
    & FedET & $35.28$ & $40.01$ & $63.38$ & $46.22$ & $25.84$ & $28.17$ & $39.26$ & $31.09$\\
    \rowcolor{lightcyan} \cellcolor{white} & TAKFL & $\bf{55.95}$ & $\bf{56.25}$ & $\bf{65.14}$ & $\bf{59.11}$ & $\bf{39.92}$ & $\bf{43.26}$ & $\bf{45.64}$ & $\bf{42.94}$\\
    \midrule
    \multirow{4}{*}{CIFAR-100} & FedOpt & $10.02$ & $20.91$ & $32.78$ & $21.23$ & $9.22$ & $15.74$ & $21.36$ & $15.44$\\ 
    & FedDF & $15.34$ & $23.26$ & $30.92$ & $23.17$ & $12.54$ & $19.81$ & $22.58$ & $18.31$\\ 
    & FedET & $9.01$ & $19.59$ & $32.26$ & $20.28$ & $6.85$ & $15.30$ & $23.44$ & $15.19$\\
    \rowcolor{lightcyan} \cellcolor{white} & TAKFL & $\bf{27.06}$ & $\bf{28.93}$ & $\bf{33.76}$ & $\bf{29.91}$ & $\bf{22.40}$ & $\bf{23.38}$ & $\bf{26.35}$ & $\bf{24.04}$\\
    \bottomrule
  \end{NiceTabular}
    }
\end{table}

\subsection{t-SNE Visualization of Knowledge Transfer}

To better understand the effectiveness of knowledge transfer in TAKFL, we conducted a visualization study using two device prototypes, XS and XL, on the CIFAR-10 dataset. The prototype configurations follow those used in our scalability evaluation in Section~\ref{sec:main-scalability} (more details are available in Appendix~\ref{sec:app-scalability}). We first pre-trained the two prototypes' global models using FedAvg for 40 communication rounds and then performed knowledge transfer using the TAKFL process for 10 distillation epochs. We have plotted the t-SNE visualizations of both prototypes for FedAvg and TAKFL in Figure~\ref{fig:tsne}.

As we can see in the t-SNE plots (Figure~\ref{fig:tsne}), TAKFL demonstrates a substantial enhancement in feature representation for both prototypes compared to FedAvg. The plots show better separation between classes for both XS and XL prototypes. The class clusters are more distinct, suggesting that TAKFL, by effectively transferring knowledge between the two prototypes, yielded a clearer representation of the features, even in the smaller XS prototype. Therefore, the t-SNE visualizations confirm that TAKFL is indeed effective in knowledge transfer, further corroborating its superiority and effectiveness.

\begin{figure}[h!]
    \centering
    \begin{subfigure}{0.45\textwidth}
        \centering
        \includegraphics[width=0.8\textwidth]{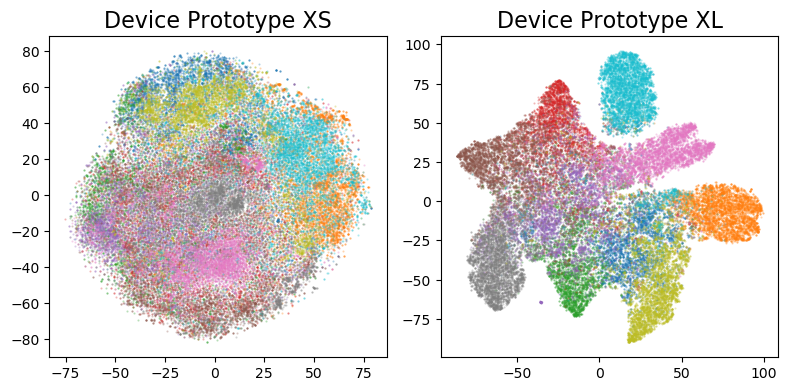}
        \caption{FedAvg}
    \end{subfigure}
    \hspace{0.001\textwidth}
    \begin{subfigure}{0.45\textwidth}
        \centering
        \includegraphics[width=0.8\textwidth]{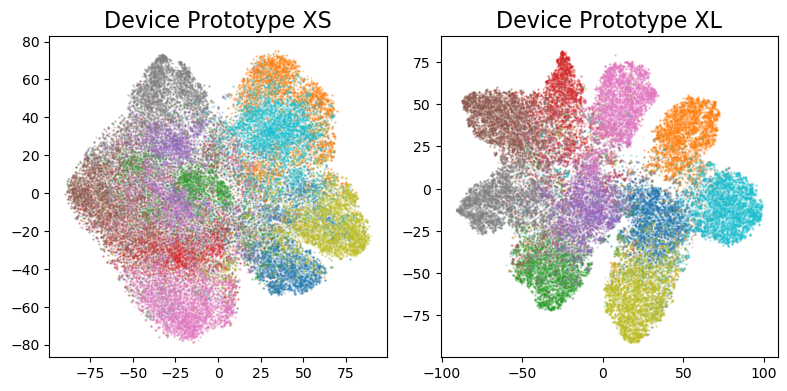}
        \caption{TAKFL}
    \end{subfigure}
    \vspace{-0.05in}
    \caption{{\textbf{t-SNE Visualization Study.} To better understand the effectiveness of TAKFL's knowledge transfer, we conducted experiment on a setting involving two device prototypes XS and XL similar to our scalability evaluation setup in Section~\ref{sec:main-scalability} on CIFAR-10 dataset.}}
    \label{fig:tsne}
\end{figure}

\clearpage
\section{Hyper-parameters and Implementation} \label{sec:app-imp}
In this section we bring the details of the hyper-parameters we used and our implementation. We implement our entire code in PyTorch~\cite{paszke2019pytorch} using FedZoo benchmark~\cite{morafah2023practical} and release it at~\url{https://github.com/MMorafah/TAKFL}. We use two NVIDIA RTX 3090 gpus to conduct the entire experimentation in this paper.

\subsection{Computer Vision (CV) Task} \label{sec:app-cv-details}
For comprehensive evaluation of our method, we consider federated learning image classification training from scratch. 


\subsubsection{Datasets} \label{sec:app-cv-hp-datasets}
\noindent\textbf{Datasets.} We experiment with several image classification datasets: CIFAR-10, CIFAR-100, CINIC-10, TinyImageNet, STL-10. The details of each dataset is the following:
\begin{itemize}
    \item \textbf{CIFAR-10}~\cite{Krizhevsky09learningmultiple}: CIFAR-10 consists of 60,000 images of size 32$\times$32 RGB across 10 classes, with each class containing 6,000 images. 
    \item \textbf{CIFAR-100}~\cite{Krizhevsky09learningmultiple}: CIFAR-100 comprises 60,000 images of size 32$\times$32 RGB distributed across 100 classes, with 500 images per class. 
    \item \textbf{CINIC-10}~\cite{darlow2018cinic10}: CINIC-10 has 270,000 images of size 32$\times$32 RGB across 10 classes, each class containing 27,000 images. 
    \item \textbf{TinyImageNet}~\cite{Le2015TinyIV}: TinyImageNet contains 100,000 images of size 64$\times$64 RGB across 200 classes. 
    \item \textbf{STL-10}~\cite{pmlr-v15-coates11a}: STL-10 has 100,000 unlabeled images and 13,000 labeled images of size 96$\times$96 RGB across 10 classes. 
\end{itemize}

The ImageNet-100 as the unlabeled public dataset is constructed by randomly selecting 100 classes from the ImageNet~\cite{5206848} dataset.

\subsubsection{Architectures}
\noindent\textbf{Experiments in Section \ref{sec:main-exp-results} and Appendix \ref{sec:app-main-cv}.} The architecture that we use are two distinct architectural settings: the \emph{``homo-family''} setting, where all device prototypes' architectures are from the same family, and the \emph{``hetero-family''} setting, where architectures do not necessarily belong to the same family. For the homo-family scenario, we employ ResNet-8 for S, ResNet-14 for M, and ResNet-18 for L. For the hetero-family scenario, we use ViT-S for S, ResNet-14 for M, and VGG-16 for L. All models are initialized from random initialization.

For the ResNet architecture configuration, we utilize the standard ‘BasicBlock’ as the building block. This consists of a convolutional block, followed by four residual block stages, an adaptive average pooling layer, and a classifier layer. The models within this family differ in terms of the number of repetitions of the residual block and the number of filters in each stage. The configurations for different capacities are detailed below:
\begin{itemize}[noitemsep,leftmargin=*]
\item \textit{ResNet-18} is configured with [64, 128, 256, 512] filters and repeats the ‘BasicBlock’ [2, 2, 2, 2] times.
\item \textit{ResNet-14} is configured with [64, 128, 256, 512] filters and repeats the ‘BasicBlock’ [1, 2, 2, 1] times.
\item \textit{ResNet-8} is configured with [64, 128, 256] filters, corresponding to the first three stages, and repeats the `BasicBlock' [1, 1, 1] times.
\end{itemize}

For VGG-16~\cite{simonyan2015deep}, we use the standard architecture which includes convolutional layers followed by max-pooling layers. The configuration of filters for each layer is as follows:
\begin{itemize}[noitemsep,leftmargin=*]
\item[] \textit{VGG-16}: [64, 64, `M', 128, 128, `M', 256, 256, 256, `M', 512, 512, 512, `M', 512, 512, 512, `M']
\end{itemize}
The final classification head consists of two linear layers with a hidden size of 512, followed by a ReLU activation, and a final linear classifier layer.

For ViT-S, we adopt the standard Vision Transformer~\cite{dosovitskiy2021image} architecture implementation from Github.\footnote{\url{https://github.com/lucidrains/vit-pytorch}} We configure ViT-S with 6 attention blocks, each with 16 heads and a hidden dimension of 64. The final MLP dimension is set to 256. In our experiments with ViT-S, we set the patch size to 4, and the input image size is 32$\times$32.

\textbf{Experiment in Appendix \ref{sec:app-cv-add}.} The pre-trained MobileNetV3-Large~\cite{howard2017mobilenets}, MobileViTV2~\cite{mehta2022mobilevit}, and ResNet34~\cite{wightman2021resnet} were instantiated using the TIMM library.\footnote{\url{https://github.com/huggingface/pytorch-image-models}}

\textbf{Scalability Experiments in Section \ref{sec:main-scalability}.} The architectures in these experiments are inspired by~\cite{kag2022scaffolding}. The details of architectures are as following:

\begin{itemize}[noitemsep,leftmargin=*]
\item \textit{ResNet10-XXS} is configured with [8, 8, 16, 16] filters and repeats the ‘BasicBlock’ [1, 1, 1, 1] times.
\item \textit{ResNet10-XS} is configured with  [8, 16, 32, 64] filters and repeats the ‘BasicBlock’ [1, 1, 1, 1] times.
\item \textit{ResNet10-S} is configured with  [16, 32, 64, 128] filters and repeats the ‘BasicBlock’ [1, 1, 1, 1] times.
\item \textit{ResNet10-M} is configured with  [8, 16, 32, 64] filters and repeats the ‘BasicBlock’ [1, 1, 1, 1] times.
\item \textit{ResNet10} is configured with  [64, 128, 256, 512] filters and repeats the ‘BasicBlock’ [1, 1, 1, 1] times.
\item \textit{ResNet18} is configured with  [64, 128, 256, 512] filters and repeats the ‘BasicBlock’ [1, 1, 1, 1] times.
\item \textit{ResNet50} is configured with  [64, 128, 256, 512] filters and repeats the ‘BasicBlock’ [3,4,6,3] times.
\end{itemize}

\subsubsection{FL configuration and Hyper-parameters}
\noindent\textbf{Base Hyper-parameters.} The following hyperparameter values apply to all CV experiments unless stated otherwise. We set the diversity regularizer coefficient of FedET to 0.1 for our entire experimentation per the original paper~\cite{cho2022heterogeneous}. We use the Adam optimizer with a learning rate of 1e-5, weight decay value of 5e-5, and a batch size of 128 for distillation. The softmax distillation temperature is set to 3, the distillation epoch to 1, and the self-regularizer softmax temperature to 20 for both CV and NLP experiments. Table \ref{tab:hp-cv} details the hyper-parameters. 

\noindent\textbf{Experiments in Section \ref{sec:main-exp-results} and Appendix \ref{sec:app-main-cv}.} For tables \ref{tab:main-cv-performance} and \ref{tab:app-cv-performance}, there are 100 clients with the S device prototype, 20 clients with the M device prototype, and 4 clients with the L device prototype. For each round, 10, 4, and 2 clients from each S, M, and L prototype are randomly sampled respectively for participation. 10\% of the data goes to the S prototype, 30\% to M, and 60\% to L. The data is distributed to each client among each prototype in a Dirichlet distribution. Table \ref{tab:arch-cv} details the FL configuration. 

\noindent\textbf{Experiments in Appendix \ref{sec:app-cv-add} and Appendix \ref{sec:app-cv-public}.} For tables \ref{tab:app-cv-pretrained} and \ref{tab:app-cv-public dataset}, there are 4, 3, and 2 clients for S, M, and L device prototypes, respectively. Each round, every client participates in FL. 20\% of the data is distributed to prototype S, 30\% to prototype M, and 50\% to prototype L. Table \ref{tab:app-cv-add-hp} details the FL configuration. 

\noindent\textbf{Scalability Experiments in Section \ref{sec:main-scalability} and \ref{sec:app-scalability}.} For tables \ref{tab:scale7}, \ref{tab:scale5}, and \ref{tab:scale3}, there are 35, 30, 25, 20, 15, 10, and 5 clients for the prototypes XXS, XS, S, M, L, XL, and XXL, respectively. The sample rate is set to 0.1, 0.1, 0.15, 0.15, 0.2, 0.3, and 0.6 from XXS to XXL. The data is distributed for each prototype in the ratio 1:2:3:4:5:6:7 from XXS to XXL. Table \ref{tab:app-scalability-hp} details the hyper-parameters and configuration. 

\noindent\textbf{Validation Set.} For TAKFL, the validation set used for the heuristic method (see \ref{sec:app-hp-lambda}) is 5\% of the training dataset. The validation set and the private dataset does not overlap. 

\subsection{Natural Language Processing (NLP) Task} \label{sec:app-nlp-details}
For the NLP task, we fine-tune federated learning text classification task using pretrained models. 

\subsubsection{Datasets.}
All NLP datasets were provided by Hugging Face.~\footnote{\url{https://github.com/huggingface/datasets}}
\begin{itemize}
    \item \textbf{MNLI}~\cite{williams2018broadcoverage}: MNLI contains 433K sentence pairs, each sentence pair labeled as one of `entailment,' `neutral,' and `contradiction.' 
    \item \textbf{SNLI}~\cite{bowman2015large}: SNLI is similar to MNLI, with 570K sentence pairs each labeled one of 3 labels. 
    \item \textbf{SST2}~\cite{socher-etal-2013-recursive}: SST2 consists of 67K phrases, each labeled as sentiment 'positive' or 'negative.' 
    \item \textbf{Sentiment140}~\cite{go2009twitter}: Sentiment140 is a dataset of 1.6M Twitter messages each labeled with one of 2 sentiment values. 
    \item \textbf{MARC}~\cite{marc_reviews}: MARC (Multilingual Amazon Reviews Corpus) is a dataset with online reviews in multiple languages from the Amazon delivery service website. Each review has a label which is one of 1-5 stars. We only use the English reviews from this dataset, which results in 260,000 English reviews total. 
    \item \textbf{Yelp}~\cite{zhang2016characterlevel}: The Yelp reviews dataset contains 700K reviews each labeled 1-5 stars from the Yelp service which publishes public reviews of businesses. 
    \item \textbf{AG News}~\cite{Zhang2015CharacterlevelCN}: AG News contains 127,600 news article titles. Each article is one of four classifications of news articles. 
    \item \textbf{DBpedia}~\cite{NIPS2015_250cf8b5}: The DBpedia dataset consists of 630K DBpedia article summaries each labeled one of 14 categorizations. 
\end{itemize}

\subsubsection{Architectures}
\noindent\textbf{Experiments in Section \ref{sec:main-exp-results} and Appendix \ref{sec:app-main-nlp}}. 
We use three variations of the BERT architecture: BERT-Tiny, BERT-Mini, and BERT-Small from \cite{turc2019}. The weights were pre-trained on the BookCorpus dataset and extracted text from Wikipedia. Further details regarding each model are described extensively on Github.\footnote{\url{https://github.com/google-research/bert}} The tokenizer used for these transformer models are the same ones provided by the authors of \cite{turc2019}. 

\begin{itemize}[noitemsep,leftmargin=*]
\item \textit{BERT-Tiny} contains 2 transformer layers and an embedding size of 128. 
\item \textit{BERT-Mini} contains 4 transformer layers and an embedding size of 256. 
\item \textit{BERT-Small} contains 4 transformer layers and and embedding size of 512. 
\end{itemize}

\subsubsection{FL configuration and hyper-parameters}
\textbf{Base Hyper-parameters.} 
For distillation, we use the Adam optimizer with a learning rate of 3e-5, no weight decay, and batch size of 32. The distillation epoch is set to 1, the ensemble distillation softmax temperature to 3, and the self-regularizer softmax temperature to 20 for all NLP experiments. Table \ref{tab:app-nlp-hp} details the hyper-parameters. 

\noindent\textbf{Experiments in Section \ref{sec:main-exp-results} and Appendix \ref{sec:app-main-nlp}.} For tables \ref{tab:main-nlp-performance} and \ref{tab:app-nlp-performance}, we limit the private dataset to 100,000 samples, randomly sampled from the original dataset i.i.d. The public dataset is limited to 30,000 examples sampled i.i.d as well. There are 8, 4, and 2 clients for the S, M, and L prototypes. The private data is split across each prototype in the following proportions: 0.1, 0.3, 0.6. Table \ref{tab:app-nlp-add-hp} details the FL configuration. 

\noindent\textbf{Validation Set.} The validation dataset used for TAKFL is 5,000 samples taken from the original training dataset that does not overlap with the 100,000 private dataset. 

\subsection{Hyper-parameters of TAKFL} \label{sec:app-hp-lambda}
\noindent\textbf{Merging Coefficients.} We conducted extensive experiments with different merging coefficients on the main 3-device prototype setting of small (S), medium (M), and large (L) discussed in Section~\ref{sec:main-exp-results} and Appendix~\ref{sec:app-main-cv}. We empirically observed that the small (S) prototype typically achieves the best performance using a uniformly increasing merging coefficient, where the larger the prototype, the larger the merging coefficient, i.e., $\lambda_S \leq \lambda_M \leq \lambda_L$. As we move towards larger prototypes, they benefit more from increasingly skewed merging coefficients towards the larger ones. In the extreme case of the large (L) prototype, highly skewed merging coefficients generally led to better performance, i.e., $\lambda_S \ll \lambda_M \ll \lambda_L$. This pattern is intuitive as small prototypes can benefit from everyone while gaining more from the larger, more informative prototypes. However, larger prototypes benefit less from smaller ones, as they typically offer less information, especially in high data heterogeneity cases. Notably, in high data heterogeneity cases, more skewed merging coefficients seemed to be more advantageous as the smaller prototypes (S and M) possess lower quality knowledge.

Based on these observations, we designed a simple and cost-effective heuristic method that randomly instantiates merging coefficients following this intuition. Our heuristic method, presented in~\ref{lst:heuristic}, leverages these observations by generating candidate merging coefficients that incorporate both uniformly increasing and different degrees of skewed merging coefficients. This dual approach enables us to explore a wide range of merging strategies and identify the most effective configurations for different prototypes. The optimal merging coefficient candidate is determined using the performance on the held-out validation set.

\begin{minipage}{\textwidth}
\centering

\lstset{
    language=Python,
    basicstyle=\ttfamily\small,
    keywordstyle=\color{blue},
    commentstyle=\color{green!50!black},
    stringstyle=\color{orange},
    showstringspaces=false,
    breaklines=true,
    frame=single,
    captionpos=b,
    label={lst:heuristic},
}

\begin{lstlisting}
import numpy as np
def heuristic(num_devices=3, n_candidates=10):
    candidates = [[1/num_devices for _ in range(num_devices)]]
    for exponent in [1, 5, 10]:
        for i in range(n_candidates):
            candidate = np.random.beta(a=1, b=100, size=num_devices)
            candidate = candidate ** exponent
            candidate = np.sort(candidate)
            candidate = candidate / np.sum(candidate)
            candidates.append(candidate)   
    return candidates
\end{lstlisting}
\vspace{-0.09in}
\captionof{lstlisting}{Implementation of the heuristic method for merging coefficients in Python. The exponent term controls the degree of skewness or peaking in the merging coefficients.}
\end{minipage}

Furthermore, we experiment with manually determining the merging coefficients and fixating them throughout the federation. We achieved similar results with this approach compared to adaptively finding the coefficients using the heuristic method and a small held-out validation set. We present the merging coefficient candidates that performed reasonably well during our experiments in Table~\ref{tab:app-hp-lambda}.

\begin{table}[h!]
    \footnotesize
    \caption{Details of the experimentally determined merging coefficients for the 3-device prototype setting discussed in Section~\ref{sec:main-exp-results} and Appendix~\ref{sec:app-main-cv}. Coefficients are ordered as [$\lambda_S$, $\lambda_M$, $\lambda_L$].}
  \label{tab:app-hp-lambda}
  \centering
  \resizebox{0.99\textwidth}{!}{
  \midsepremove
  \begin{NiceTabular}{c|ccc}
    \toprule
    Merging Coefficient Candidate    & Small Prototype & Medium Prototype & Large Prototype\\
    \midrule
    1 & [0.2, 0.3, 0.5] & [0.1, 0.2, 0.7] & [0.1, 0.2, 0.7]\\
    2 & [0.3, 0.3, 0.4] & [0.05, 0.15, 0.8] & [0.01, 0.09, 0.99] \\
    3 & [0.2, 0.3, 0.5] & [0.1, 0.2, 0.7] & [0.05, 0.2, 0.75] \\
    4 & [0.05, 0.1, 0.85] & [0.01, 0.19, 0.8] & [0.01, 0.09, 0.90] \\
    5 & [0.1, 0.15, 0.75] & [0.05, 0.15, 0.8] & [0.01, 0.09, 0.90] \\
    6 & [0.05, 0.1, 0.85] & [0.05, 0.05, 0.9] & [0.001, 0.009, 0.99] \\
    7 & [0.05, 0.15, 0.80] & [0.05, 0.2, 0.75] & [0.001, 0.009, 0.99] \\
    8 & [0.05, 0.15, 0.80] & [0.05, 0.1, 0.85] & [0.001, 0.009, 0.99] \\
    9 & [0.3, 0.35, 0.35] & [0.2, 0.3, 0.5] & [0.1, 0.2, 0.7] \\
    \bottomrule
  \end{NiceTabular}
  \midsepdefault
  }
\end{table}

\noindent\textbf{Self-Regularization Coefficient.} Extensive experiments were conducted on the self-regulation coefficients for different device prototypes and settings. Although no consistent pattern emerged, we experimentally determined that optimal performance for the small prototype was achieved with self-regulation coefficients $\gamma_S \in {0.1, 0.01, 0.001}$. For the medium prototype, the coefficients were $\gamma_M \in {0.5, 0.1, 0.01, 0.001, 0.0001}$, and for the large prototype, $\gamma_L \in {1.0, 0.8, 0.5, 0.1, 0.01, 0.001, 0.0001}$ yielded the best results.

\begin{table}[h!]
    \footnotesize
  \caption{Details of hyper-parameters for CV task in Section~\ref{sec:main-exp-results}, Appendix~\ref{sec:app-main-cv}, and Appendix~\ref{sec:app-cv-public}.}
  \label{tab:hp-cv}
  \centering
  \resizebox{0.99\textwidth}{!}{
  \midsepremove
  \begin{NiceTabular}{ll|ccc}
    \toprule
    Local/Server & Hyperparameter    & Small Prototype & Medium Prototype & Large Prototype\\
    \midrule
    \multirow{7}{*}{Local Training} & Training epochs & 20 & 80 & 100\\
    & Batch Size & 64 & 64 & 64\\
    & Optimizer & Adam & Adam & Adam\\
    & Learning Rate & 1e-3 & 1e-3 & 1e-3\\
    & Weight Decay & 5e-5 & 5e-5 & 5e-5\\
    & \multirow{2}{*}{LR scheduler} & \multirow{2}{*}{None} & \multirow{2}{*}{None} & \multirow{2}{*}{\makecell{$\mathtt{StepLR(step\_size=10,}$\\ $\mathtt{gamma=0.1)}$}} \\
    & & & &\\
    \midrule
    \multirow{7}{*}{Server KD Training} & Optimizer & Adam & Adam & Adam\\
    & Learning Rate & 1e-5 & 1e-5 & 1e-5\\
    & Weight Decay & 5e-5 & 5e-5 & 5e-5\\
    & Batch Size & 128 & 128 & 128\\
    & Training Epochs & 1 & 1 & 1\\
    & Ensemble Distillation Softmax Temperature & 3 & 3 & 3\\
    & Self-Regularizer Softmax Temperature & 20 & 20 & 20\\
    \bottomrule
  \end{NiceTabular}
  \midsepdefault
  }
\end{table}

\begin{table}[h!]
  \caption{Details of Architecture parameters and FL configuration for CV task in Section~\ref{sec:main-exp-results} and Appendix~\ref{sec:app-main-cv}.}
  \label{tab:arch-cv}
  \centering
  \resizebox{0.99\textwidth}{!}{
  \begin{NiceTabular}{ll|cccccc}
    \toprule
    \multirow{2}{*}{Architecture Setting} & \multirow{2}{*}{Device Prototype} & & CIFAR-10 & CIFAR-100 & & & \\
    & & Architecture & Parameters & Parameters & Dataset Portion & Clients & Sample Rate \\ 
    \midrule
    \multirow{3}{*}{Homo-Family} & Prototype S & ResNet8 & 1.23M & 1.25M & 0.1 & 100 & 0.1 \\
     & Prototype M & ResNet14 & 6.38M & 6.43M & 0.3 & 20 & 0.2 \\
      & Prototype L & ResNet18 & 11.17M & 11.22M & 0.6 & 4 & 0.5 \\
    \midrule
    \multirow{3}{*}{Hetero-Family} & Prototype S & ViT-S & 1.78M & 1.79M & 0.1 & 100 & 0.1 \\
     & Prototype M & ResNet14 & 6.38M & 6.43M & 0.3 & 20 & 0.2 \\
      & Prototype L & VGG16 & 15.25M & 15.30M & 0.6 & 4 & 0.5 \\
    \bottomrule
  \end{NiceTabular}
  \midsepdefault
  }
\end{table}
\begin{table}[h!]
  \caption{Details of Architecture parameters and FL configuration for CV task experiment using pre-trained models in Appendix~\ref{sec:app-cv-add}.}
  \label{tab:app-cv-add-hp}
  \centering
  \resizebox{0.99\textwidth}{!}{
  \midsepremove
  \begin{NiceTabular}{l|cccccc}
    \toprule
    \multirow{2}{*}{Device Prototype} & & \multicolumn{1}{c}{STL-10/CINIC-10} & \multicolumn{1}{c}{TinyImageNet} & & & \\
    & Architecture & Parameters & Parameters & Dataset Portion & Clients & Sample Rate \\
    \midrule
    Prototype S & mobilenetv3-large-100& 4.21M & 4.45M & 0.2 & 4 & 1.0\\
    Prototype M& mobilevitv2-175 & 13.36M & 13.53M & 0.3 & 3 & 1.0\\
    Prototype L& ResNet34 & 21.28M & 21.38M & 0.5 & 2 & 1.0\\
    \bottomrule
  \end{NiceTabular}
  \midsepdefault
  }
\end{table}
\begin{table}[h!]
  \caption{Details of Architecture parameters for Scalability Section~\ref{sec:main-scalability}, and Appendix~\ref{sec:app-scalability}.}
  \label{tab:app-scalability-hp}
  \centering
  \resizebox{0.80\textwidth}{!}{
  \midsepremove
  \begin{NiceTabular}{l|cccccc}
    \toprule
    \multirow{2}{*}{Device Prototype} & \multicolumn{6}{c}{CINIC-10}\\
    \cmidrule(lr){2-7}
    & Architecture & Parameters & Dataset Portion & Clients & Sample Rate & Local Epochs \\
    \midrule
     Prototype XXS  & ResNet10-XXS & 11K & 0.0357 & 35 & 0.1 & 2\\
     Prototype XS  & ResNet10-XS & 78K & 0.0714 & 30 & 0.1 & 2\\
     Prototype S  & ResNet10-S & 309K & 0.1071 & 25 & 0.15 & 2\\
     Prototype M  & ResNet10-M & 1.2M & 0.1428 & 20 & 0.15 & 5\\
     Prototype L  & ResNet10 & 4.9M & 0.1785 & 15 & 0.2 & 10\\
     Prototype XL  & ResNet18 & 11M & 0.2142 & 10 & 0.3 & 10\\
     Prototype XXL  & ResNet50 & 24M & 0.25 & 5 & 0.6 & 20\\
    \bottomrule
  \end{NiceTabular}
  \midsepdefault
  }
\end{table}
\begin{table}[h!]
  \caption{Details of hyper-parameters for NLP task experiments in Section~\ref{sec:main-exp-results} and Appendix~\ref{sec:app-main-nlp}.}
  \label{tab:app-nlp-hp}
  \centering
  \resizebox{0.99\textwidth}{!}{
  \begin{tabular}{ll|ccc}
    \toprule
    Local/Server & Hyperparameter    & Prototype S & Prototype M & Prototype L\\
    \midrule
    \multirow{6}{*}{Local Training} & training epochs & 1 & 1 & 1 \\
    & batch size & 32 & 32 & 32 \\
    & optimizer & Adam & Adam & Adam \\
    & Learning rate & 3e-5 & 3e-5 & 3e-5 \\
    & Weight decay & 0 & 0 & 0 \\
    & lr scheduler & None & None & None \\
    \midrule
    \multirow{6}{*}{Server KD Training} & optimizer & Adam & Adam & Adam\\
    & learning rate & 3e-5 & 3e-5 & 3e-5\\
    & weight decay & 3e-5 & 3e-5 & 3e-5\\
    & batch size & 32 & 32 & 32 \\
    & training epochs & 1 & 1 & 1\\
    & Ensemble distillation softmax temperature & 3 & 3 & 3\\
    & Self-regularizer softmax temperature & 20 & 20 & 20\\
    \bottomrule
  \end{tabular}
  }
\end{table}

\begin{table}[h!]
  \caption{Details of Architecture parameters and FL configuration for NLP task experiment using pre-trained models in Section \ref{sec:main-exp-results} and Appendix~\ref{sec:app-main-nlp}.}
  \label{tab:app-nlp-add-hp}
  \centering
  \resizebox{0.99\textwidth}{!}{
  \midsepremove
  \begin{NiceTabular}{l|ccccc}
    \toprule
   Device Prototype & Architecture & Parameters & Clients & Dataset Portion & Sample Rate \\
    \midrule
    Prototype S & BERT-Tiny & 4.39M & 8 &0.1 & 0.4\\
    Prototype M & BERT-Mini & 11.17M & 4 &0.3 & 0.5\\
    Prototype L & BERT-Small & 28.77M &2 & 0.6 & 1.0\\
    \bottomrule
  \end{NiceTabular}
  \midsepdefault
  }
\end{table}

\newpage
\clearpage
\setcounter{page}{1}
\section*{NeurIPS Paper Checklist}

\begin{enumerate}

\item {\bf Claims}
    \item[] Question: Do the main claims made in the abstract and introduction accurately reflect the paper's contributions and scope?
    \item[] Answer: \answerYes{} 
    \item[] Justification:
    \item[] Guidelines:
    \begin{itemize}
        \item The answer NA means that the abstract and introduction do not include the claims made in the paper.
        \item The abstract and/or introduction should clearly state the claims made, including the contributions made in the paper and important assumptions and limitations. A No or NA answer to this question will not be perceived well by the reviewers. 
        \item The claims made should match theoretical and experimental results, and reflect how much the results can be expected to generalize to other settings. 
        \item It is fine to include aspirational goals as motivation as long as it is clear that these goals are not attained by the paper. 
    \end{itemize}

\item {\bf Limitations}
    \item[] Question: Does the paper discuss the limitations of the work performed by the authors?
    \item[] Answer: \answerYes{} 
    \item[] Justification: The limitation of the work is discussed in the main paper Section~\ref{sec:conclusion}.
    \item[] Guidelines:
    \begin{itemize}
        \item The answer NA means that the paper has no limitation while the answer No means that the paper has limitations, but those are not discussed in the paper. 
        \item The authors are encouraged to create a separate "Limitations" section in their paper.
        \item The paper should point out any strong assumptions and how robust the results are to violations of these assumptions (e.g., independence assumptions, noiseless settings, model well-specification, asymptotic approximations only holding locally). The authors should reflect on how these assumptions might be violated in practice and what the implications would be.
        \item The authors should reflect on the scope of the claims made, e.g., if the approach was only tested on a few datasets or with a few runs. In general, empirical results often depend on implicit assumptions, which should be articulated.
        \item The authors should reflect on the factors that influence the performance of the approach. For example, a facial recognition algorithm may perform poorly when image resolution is low or images are taken in low lighting. Or a speech-to-text system might not be used reliably to provide closed captions for online lectures because it fails to handle technical jargon.
        \item The authors should discuss the computational efficiency of the proposed algorithms and how they scale with dataset size.
        \item If applicable, the authors should discuss possible limitations of their approach to address problems of privacy and fairness.
        \item While the authors might fear that complete honesty about limitations might be used by reviewers as grounds for rejection, a worse outcome might be that reviewers discover limitations that aren't acknowledged in the paper. The authors should use their best judgment and recognize that individual actions in favor of transparency play an important role in developing norms that preserve the integrity of the community. Reviewers will be specifically instructed to not penalize honesty concerning limitations.
    \end{itemize}

\item {\bf Theory Assumptions and Proofs}
    \item[] Question: For each theoretical result, does the paper provide the full set of assumptions and a complete (and correct) proof?
    \item[] Answer: \answerYes{}
    \item[] Justification: The informal statements are in the main paper Section~\ref{sec:main-theory}, and the full formal statements with assumptions and proofs can be found in the Appendix~\ref{sec:app-theory}.
    \item[] Guidelines:
    \begin{itemize}
        \item The answer NA means that the paper does not include theoretical results. 
        \item All the theorems, formulas, and proofs in the paper should be numbered and cross-referenced.
        \item All assumptions should be clearly stated or referenced in the statement of any theorems.
        \item The proofs can either appear in the main paper or the supplemental material, but if they appear in the supplemental material, the authors are encouraged to provide a short proof sketch to provide intuition. 
        \item Inversely, any informal proof provided in the core of the paper should be complemented by formal proofs provided in appendix or supplemental material.
        \item Theorems and Lemmas that the proof relies upon should be properly referenced. 
    \end{itemize}

    \item {\bf Experimental Result Reproducibility}
    \item[] Question: Does the paper fully disclose all the information needed to reproduce the main experimental results of the paper to the extent that it affects the main claims and/or conclusions of the paper (regardless of whether the code and data are provided or not)?
    \item[] Answer: \answerYes{} 
    \item[] Justification: The detailed information can be found in Appendix~\ref{sec:app-exp}, and~\ref{sec:app-imp}.
    \item[] Guidelines:
    \begin{itemize}
        \item The answer NA means that the paper does not include experiments.
        \item If the paper includes experiments, a No answer to this question will not be perceived well by the reviewers: Making the paper reproducible is important, regardless of whether the code and data are provided or not.
        \item If the contribution is a dataset and/or model, the authors should describe the steps taken to make their results reproducible or verifiable. 
        \item Depending on the contribution, reproducibility can be accomplished in various ways. For example, if the contribution is a novel architecture, describing the architecture fully might suffice, or if the contribution is a specific model and empirical evaluation, it may be necessary to either make it possible for others to replicate the model with the same dataset, or provide access to the model. In general. releasing code and data is often one good way to accomplish this, but reproducibility can also be provided via detailed instructions for how to replicate the results, access to a hosted model (e.g., in the case of a large language model), releasing of a model checkpoint, or other means that are appropriate to the research performed.
        \item While NeurIPS does not require releasing code, the conference does require all submissions to provide some reasonable avenue for reproducibility, which may depend on the nature of the contribution. For example
        \begin{enumerate}
            \item If the contribution is primarily a new algorithm, the paper should make it clear how to reproduce that algorithm.
            \item If the contribution is primarily a new model architecture, the paper should describe the architecture clearly and fully.
            \item If the contribution is a new model (e.g., a large language model), then there should either be a way to access this model for reproducing the results or a way to reproduce the model (e.g., with an open-source dataset or instructions for how to construct the dataset).
            \item We recognize that reproducibility may be tricky in some cases, in which case authors are welcome to describe the particular way they provide for reproducibility. In the case of closed-source models, it may be that access to the model is limited in some way (e.g., to registered users), but it should be possible for other researchers to have some path to reproducing or verifying the results.
        \end{enumerate}
    \end{itemize}

\item {\bf Open access to data and code}
    \item[] Question: Does the paper provide open access to the data and code, with sufficient instructions to faithfully reproduce the main experimental results, as described in supplemental material?
    \item[] Answer: \answerYes{} 
    \item[] Justification: The information can be found in Appendix~\ref{sec:app-imp}
    \item[] Guidelines:
    \begin{itemize}
        \item The answer NA means that paper does not include experiments requiring code.
        \item Please see the NeurIPS code and data submission guidelines (\url{https://nips.cc/public/guides/CodeSubmissionPolicy}) for more details.
        \item While we encourage the release of code and data, we understand that this might not be possible, so “No” is an acceptable answer. Papers cannot be rejected simply for not including code, unless this is central to the contribution (e.g., for a new open-source benchmark).
        \item The instructions should contain the exact command and environment needed to run to reproduce the results. See the NeurIPS code and data submission guidelines (\url{https://nips.cc/public/guides/CodeSubmissionPolicy}) for more details.
        \item The authors should provide instructions on data access and preparation, including how to access the raw data, preprocessed data, intermediate data, and generated data, etc.
        \item The authors should provide scripts to reproduce all experimental results for the new proposed method and baselines. If only a subset of experiments are reproducible, they should state which ones are omitted from the script and why.
        \item At submission time, to preserve anonymity, the authors should release anonymized versions (if applicable).
        \item Providing as much information as possible in supplemental material (appended to the paper) is recommended, but including URLs to data and code is permitted.
    \end{itemize}

\item {\bf Experimental Setting/Details}
    \item[] Question: Does the paper specify all the training and test details (e.g., data splits, hyperparameters, how they were chosen, type of optimizer, etc.) necessary to understand the results?
    \item[] Answer: \answerYes{} 
    \item[] Justification: The information can be found in Appendix~\ref{sec:app-exp} and~\ref{sec:app-imp}.
    \item[] Guidelines:
    \begin{itemize}
        \item The answer NA means that the paper does not include experiments.
        \item The experimental setting should be presented in the core of the paper to a level of detail that is necessary to appreciate the results and make sense of them.
        \item The full details can be provided either with the code, in appendix, or as supplemental material.
    \end{itemize}

\item {\bf Experiment Statistical Significance}
    \item[] Question: Does the paper report error bars suitably and correctly defined or other appropriate information about the statistical significance of the experiments?
    \item[] Answer: \answerYes{} 
    \item[] Justification: We conduct the main experiments in the main paper Section~\ref{sec:main-exp} over 3 independent trials with different random seeds and average results with the standard deviation is reported in Tables~\ref{tab:main-cv-performance},~\ref{tab:main-nlp-performance},~\ref{tab:app-cv-performance},~\ref{tab:app-nlp-performance},~\ref{tab:app-scalability-hp}, and~\ref{tab:app-cv-public dataset}.
    \item[] Guidelines:
    \begin{itemize}
        \item The answer NA means that the paper does not include experiments.
        \item The authors should answer "Yes" if the results are accompanied by error bars, confidence intervals, or statistical significance tests, at least for the experiments that support the main claims of the paper.
        \item The factors of variability that the error bars are capturing should be clearly stated (for example, train/test split, initialization, random drawing of some parameter, or overall run with given experimental conditions).
        \item The method for calculating the error bars should be explained (closed form formula, call to a library function, bootstrap, etc.)
        \item The assumptions made should be given (e.g., Normally distributed errors).
        \item It should be clear whether the error bar is the standard deviation or the standard error of the mean.
        \item It is OK to report 1-sigma error bars, but one should state it. The authors should preferably report a 2-sigma error bar than state that they have a 96\% CI, if the hypothesis of Normality of errors is not verified.
        \item For asymmetric distributions, the authors should be careful not to show in tables or figures symmetric error bars that would yield results that are out of range (e.g. negative error rates).
        \item If error bars are reported in tables or plots, The authors should explain in the text how they were calculated and reference the corresponding figures or tables in the text.
    \end{itemize}

\item {\bf Experiments Compute Resources}
    \item[] Question: For each experiment, does the paper provide sufficient information on the computer resources (type of compute workers, memory, time of execution) needed to reproduce the experiments?
    \item[] Answer: \answerYes{} 
    \item[] Justification: The information can be found in Appendix~\ref{sec:app-imp}.
    \item[] Guidelines:
    \begin{itemize}
        \item The answer NA means that the paper does not include experiments.
        \item The paper should indicate the type of compute workers CPU or GPU, internal cluster, or cloud provider, including relevant memory and storage.
        \item The paper should provide the amount of compute required for each of the individual experimental runs as well as estimate the total compute. 
        \item The paper should disclose whether the full research project required more compute than the experiments reported in the paper (e.g., preliminary or failed experiments that didn't make it into the paper). 
    \end{itemize}
    
\item {\bf Code Of Ethics}
    \item[] Question: Does the research conducted in the paper conform, in every respect, with the NeurIPS Code of Ethics \url{https://neurips.cc/public/EthicsGuidelines}?
    \item[] Answer: \answerYes{} 
    \item[] Justification:
    \item[] Guidelines:
    \begin{itemize}
        \item The answer NA means that the authors have not reviewed the NeurIPS Code of Ethics.
        \item If the authors answer No, they should explain the special circumstances that require a deviation from the Code of Ethics.
        \item The authors should make sure to preserve anonymity (e.g., if there is a special consideration due to laws or regulations in their jurisdiction).
    \end{itemize}

\item {\bf Broader Impacts}
    \item[] Question: Does the paper discuss both potential positive societal impacts and negative societal impacts of the work performed?
    \item[] Answer: \answerNA{} 
    \item[] Justification: This paper focuses on advancing technical methodologies in federated learning and does not directly engage with specific societal impacts, as the applications of the proposed method are varied and context-dependent. The societal implications would largely depend on the specific use cases implemented by subsequent practitioners.
    \item[] Guidelines:
    \begin{itemize}
        \item The answer NA means that there is no societal impact of the work performed.
        \item If the authors answer NA or No, they should explain why their work has no societal impact or why the paper does not address societal impact.
        \item Examples of negative societal impacts include potential malicious or unintended uses (e.g., disinformation, generating fake profiles, surveillance), fairness considerations (e.g., deployment of technologies that could make decisions that unfairly impact specific groups), privacy considerations, and security considerations.
        \item The conference expects that many papers will be foundational research and not tied to particular applications, let alone deployments. However, if there is a direct path to any negative applications, the authors should point it out. For example, it is legitimate to point out that an improvement in the quality of generative models could be used to generate deepfakes for disinformation. On the other hand, it is not needed to point out that a generic algorithm for optimizing neural networks could enable people to train models that generate Deepfakes faster.
        \item The authors should consider possible harms that could arise when the technology is being used as intended and functioning correctly, harms that could arise when the technology is being used as intended but gives incorrect results, and harms following from (intentional or unintentional) misuse of the technology.
        \item If there are negative societal impacts, the authors could also discuss possible mitigation strategies (e.g., gated release of models, providing defenses in addition to attacks, mechanisms for monitoring misuse, mechanisms to monitor how a system learns from feedback over time, improving the efficiency and accessibility of ML).
    \end{itemize}
    
\item {\bf Safeguards}
    \item[] Question: Does the paper describe safeguards that have been put in place for responsible release of data or models that have a high risk for misuse (e.g., pretrained language models, image generators, or scraped datasets)?
    \item[] Answer: \answerNA{} 
    \item[] Justification: This paper primarily focuses on a technical methodological approach without releasing any models. The potential for misuse depends on specific applications by end users, which are beyond the scope of this study.
    \item[] Guidelines:
    \begin{itemize}
        \item The answer NA means that the paper poses no such risks.
        \item Released models that have a high risk for misuse or dual-use should be released with necessary safeguards to allow for controlled use of the model, for example by requiring that users adhere to usage guidelines or restrictions to access the model or implementing safety filters. 
        \item Datasets that have been scraped from the Internet could pose safety risks. The authors should describe how they avoided releasing unsafe images.
        \item We recognize that providing effective safeguards is challenging, and many papers do not require this, but we encourage authors to take this into account and make a best faith effort.
    \end{itemize}

\item {\bf Licenses for existing assets}
    \item[] Question: Are the creators or original owners of assets (e.g., code, data, models), used in the paper, properly credited and are the license and terms of use explicitly mentioned and properly respected?
    \item[] Answer: \answerYes{} 
    \item[] Justification: The datasets and models used in this work are properly credited in main paper Section~\ref{sec:main-exp}, Appendix~\ref{sec:app-exp}, and Appendix~\ref{sec:app-imp}.
    \item[] Guidelines: 
    \begin{itemize}
        \item The answer NA means that the paper does not use existing assets.
        \item The authors should cite the original paper that produced the code package or dataset.
        \item The authors should state which version of the asset is used and, if possible, include a URL.
        \item The name of the license (e.g., CC-BY 4.0) should be included for each asset.
        \item For scraped data from a particular source (e.g., website), the copyright and terms of service of that source should be provided.
        \item If assets are released, the license, copyright information, and terms of use in the package should be provided. For popular datasets, \url{paperswithcode.com/datasets} has curated licenses for some datasets. Their licensing guide can help determine the license of a dataset.
        \item For existing datasets that are re-packaged, both the original license and the license of the derived asset (if it has changed) should be provided.
        \item If this information is not available online, the authors are encouraged to reach out to the asset's creators.
    \end{itemize}

\item {\bf New Assets}
    \item[] Question: Are new assets introduced in the paper well documented and is the documentation provided alongside the assets?
    \item[] Answer: \answerYes{} 
    \item[] Justification: The information can be found in Appendix~\ref{sec:app-imp}.
    \item[] Guidelines:
    \begin{itemize}
        \item The answer NA means that the paper does not release new assets.
        \item Researchers should communicate the details of the dataset/code/model as part of their submissions via structured templates. This includes details about training, license, limitations, etc. 
        \item The paper should discuss whether and how consent was obtained from people whose asset is used.
        \item At submission time, remember to anonymize your assets (if applicable). You can either create an anonymized URL or include an anonymized zip file.
    \end{itemize}

\item {\bf Crowdsourcing and Research with Human Subjects}
    \item[] Question: For crowdsourcing experiments and research with human subjects, does the paper include the full text of instructions given to participants and screenshots, if applicable, as well as details about compensation (if any)? 
    \item[] Answer: \answerNA{} 
    \item[] Justification:
    \item[] Guidelines:
    \begin{itemize}
        \item The answer NA means that the paper does not involve crowdsourcing nor research with human subjects.
        \item Including this information in the supplemental material is fine, but if the main contribution of the paper involves human subjects, then as much detail as possible should be included in the main paper. 
        \item According to the NeurIPS Code of Ethics, workers involved in data collection, curation, or other labor should be paid at least the minimum wage in the country of the data collector. 
    \end{itemize}

\item {\bf Institutional Review Board (IRB) Approvals or Equivalent for Research with Human Subjects}
    \item[] Question: Does the paper describe potential risks incurred by study participants, whether such risks were disclosed to the subjects, and whether Institutional Review Board (IRB) approvals (or an equivalent approval/review based on the requirements of your country or institution) were obtained?
    \item[] Answer: \answerNA{} 
    \item[] Justification:
    \item[] Guidelines:
    \begin{itemize}
        \item The answer NA means that the paper does not involve crowdsourcing nor research with human subjects.
        \item Depending on the country in which research is conducted, IRB approval (or equivalent) may be required for any human subjects research. If you obtained IRB approval, you should clearly state this in the paper. 
        \item We recognize that the procedures for this may vary significantly between institutions and locations, and we expect authors to adhere to the NeurIPS Code of Ethics and the guidelines for their institution. 
        \item For initial submissions, do not include any information that would break anonymity (if applicable), such as the institution conducting the review.
    \end{itemize}

\end{enumerate}

\end{document}